\newcommand{\cov}{\mathrm{cov}}
\newcommand{\argmax}{\mathrm{argmax}}
\newcommand{\tr}{\mathrm{tr}}
\newcommand{\goodind}{\mathcal{G}}
\newcommand{\badind}{\mathcal{B}}
\newcommand{\func}{g}
\newcommand{\meandiff}{\Delta}
\newcommand{\covdiff}{\mathcal{E}}
\newcommand{\lip}{L}
\newcommand{\expect}{\mathbb{E}}
\newcommand{\holder}{H\"{o}lder }
\newcommand{\ind}{1}
\newcommand{\set}[1]{\mathcal{#1}}
\newcommand{\mat}[1]{{#1}}
\newcommand{\vect}[1]{\mathrm{vec}\left({#1}\right)}
\newcommand{\norm}[1]{\left\|#1\right\|}
\newcommand{\abs}[1]{\left|#1\right|}
\newcommand{\prob}{\mathbb{P}}
\newcommand{\tops}{P_{2s}}
\newtheorem{thm}{Theorem}[section]
\newtheorem{lem}{Lemma}[section]
\newtheorem{cor}{Corollary}[section]
\newtheorem{defn}{Definition}[section]
\newcommand{\samplesize}{n}
\newcommand{\fronorm}[1]{\ensuremath{\matsnorm{#1}{\footnotesize{\mbox{F}}}}}
\newcommand{\opnorm}[1]{\ensuremath{\matsnorm{#1}{\footnotesize{\mbox{op}}}}}
\newcommand{\matsnorm}[2]{\left|\!\left|\!\left|#1\right|\!\right|\!\right|_{{#2}}}
\newcommand{\inprod}[2]{\ensuremath{\langle #1 , \, #2 \rangle}}
\newcommand{\sparseopnorm}[1]{\ensuremath{\matsnorm{#1}{\footnotesize{\mbox{s,op}}}}}
\title{Computationally Efficient Robust Estimation of Sparse Functionals}
\author[1]{Simon S. Du\thanks{ssdu@cs.cmu.edu}}
\author[12]{Sivaraman Balakrishnan\thanks{siva@stat.cmu.edu}}
\author[1]{Aarti Singh\thanks{aarti@cs.cmu.edu}}
\affil[1]{Machine Learning Department, Carnegie Mellon University, USA}
\affil[2]{Statistics Department, Carnegie Mellon University, USA}
\begin{document}

\maketitle
\begin{abstract}
\label{sec:abs}
Many conventional statistical procedures
are extremely sensitive to seemingly
minor deviations from modeling assumptions.
This problem is exacerbated in modern 
high-dimensional settings, where the problem dimension can
grow with and possibly exceed the sample size. 
We consider the problem of \emph{robust
estimation} of sparse functionals, and 
provide a computationally
and statistically efficient algorithm in the high-dimensional setting. 
Our theory identifies
a unified set of deterministic conditions under which our algorithm guarantees
accurate recovery. 
By further establishing that these deterministic 
conditions hold with high-probability for
a wide range of statistical models, our theory applies to many problems 
of considerable interest including sparse mean and covariance 
estimation; sparse linear regression; and sparse generalized linear models. 
\end{abstract}


\section{Introduction}
\label{sec:intro}
Complex high-dimensional datasets pose a variety of computational and statistical challenges.
In attempts to address these challenges, the past decade has witnessed a significant amount of research on sparsity constraints in statistical
models. Sparsity constraints have practical and theoretical benefits: often they lead to more interpretable models, that can be estimated efficiently even in the high-dimensional regime where
the sample size $n$ can be dwarfed by the model dimension $d$. In addition to being convenient
from a methodological and theoretical standpoint, sparse models have also had enormous practical impact, for instance in computational biology, neuroscience and applied machine learning. 

On the other hand, much of the theoretical literature on sparse estimation has focused on providing guarantees under strong, often impractical, generative assumptions. This in turn motivates the study of the robustness of these statistical estimators, and the design of new robust estimators. Classically,
the sensitivity of conventional statistical procedures to apparently small deviations from the assumed statistical model, was noted by~\cite{tukey1975mathematics} who observed that estimators like the empirical mean can be sensitive to even a single gross outlier. The formal study,
of robust estimation, was initiated by \citet{huber1964robust,huber1965robust}, who initiated
the study of estimation procedures under the \emph{$\epsilon$-contamination model}, where samples are obtained from a mixture model of the form:
\begin{align}
P_{\epsilon} = (1 - \epsilon) P + \epsilon Q \label{eqn:model}, 
\end{align}
where $P$ is the uncontaminated target distribution, $Q$ is an arbitrary outlier distribution and $\epsilon$ is the expected fraction of outliers. Subsequent work in the literature
on robust statistics, focused on the design of robust estimators and the study of their 
statistical properties (see, for instance, the works
of \citet{huber2011robust,hampel2011robust}). Recent research~\citep{chen2015robust,chen2016general}
has focussed on providing a complementary
minimax perspective by characterizing both minimax upper and lower bounds on the
performance of estimators in a variety of settings. Notably, the minimax estimation 
rates in these settings typically
have two aspects: 
(1) the dependence on the contamination parameter $\epsilon$, which we refer to as the \emph{contamination dependence},
and (2) the \emph{statistical rate} (typically, a function of the sample size $n$ and the dimensionality $d$). 

The major drawback of many of 
these classical robust estimators is that they are either heuristic
in nature (for instance, methods based on Winsorization ~\citep{hastings1947low}) 
and are generally not optimal in the minimax sense, 
or are computationally intractable (for instance, methods based on Tukey's depth~\citep{tukey1975mathematics} or on $\ell_1$ tournaments~\citep{yatracos1985rates}).

Considering the low-dimensional setting where $d \ll n$, recent works~\citep{diakonikolas2016robust,lai2016agnostic,charikar2016learning} provide some of the first computationally tractable, provably robust estimators with near-optimal contamination dependence in a variety of settings. Concretely, the paper of \citet{lai2016agnostic} considers robust mean and covariance estimation for distributions with appropriately controlled moments, while the work of \citet{diakonikolas2016robust}, focuses on robust mean and covariance estimation for Gaussians and extends these results to various other models including the mixture of Gaussians. 
After the initial submission of this manuscript we became aware of independent and concurrent work by \citet{li2017robust}, addressing similar high-dimensional concerns. In more details, \citet{li2017robust} uses a similar algorithm for mean estimation, and also considers the problem of robust sparse PCA.

Although the focus of our paper is on the multivariate setting, we note that several recent papers
have provided robust estimation guarantees for univariate distributions \citep{acharya2017sample,chan2013learning,chan2014near,daskalakis2012learning,diakonikolas2016efficient}.

We make several contributions to this line of research. 
In more details, we 
focus on the \emph{sparse high-dimensional setting} where the dimensionality $d$ is potentially much larger than the sample-size $n$ but the unknown target parameter is $s$ sparse (and $s \ll n$).
Building on the work of \citet{diakonikolas2016robust}, our first main contribution is to provide a unified
framework for the estimation of sparse functionals. We identify a set of core deterministic conditions,
under which we can guarantee accurate recovery of a statistical functional in polynomial-time. In contrast to prior work, this framework unifies, for instance, the robust estimation of the mean vector 
and of the covariance matrix of a high-dimensional distribution.
Our second main contribution, establishes that these deterministic conditions hold with high-probability in many statistical models, even in the high-dimensional setting where $n \ll d$, under appropriate sparsity assumptions. As a consequence, we obtain the first robust estimators in a variety of high-dimensional problems of practical interest including sparse mean and covariance 
estimation; sparse linear regression; and sparse generalized linear models. Finally, from a technical standpoint, as will be discussed at more length in the sequel 
we introduce a variety of new techniques involving 
the careful analysis of convex relaxations and delicate truncation arguments that we anticipate will be useful in other related problems.

The remainder of the paper is organized as follows. In Section~\ref{sec:background}, we provide some background on robust estimation and formally introduce the examples we consider throughout this paper. Section~\ref{sec:main} is devoted to our main results and their consequences. Section~\ref{sec:techniques} includes a description of our main algorithm, and includes a sketch of its analysis with more technical details deferred to the Appendices. 
We conclude with a brief discussion of avenues for future work.

\section{Background and Problem Setup}
\label{sec:background}
In this section we provide some background on robust estimation, before providing a precise definition
of the statistical models we consider.
\subsection{Robust estimation}
\label{sec:notations}
In the robust estimation framework we suppose that we obtain samples
$\{x_1,\ldots,x_{\samplesize}\}$ where each sample $x_i$ is distributed according to the
mixture model
$P_{\epsilon}$ in Eqn.~\eqref{eqn:model}:
\begin{align*}
P_{\epsilon} = (1 - \epsilon) P + \epsilon Q.
\end{align*}
In this model, the distribution $Q$ is allowed
to be completely arbitrary and represents the distribution of ``outliers''. As no restriction is placed 
on $Q$ the sampled outliers can be gross outliers, i.e. starkly distinct from the bulk of the samples from $P$ or more subtle outliers which are more difficult to tease apart from samples from the target distribution.  An alternative viewpoint arises from the observation that the set of possible distributions $P_{\epsilon}$ is equivalent to the $\ell_1$ ball around $P$ of radius $\epsilon$. Indeed, we can alternatively view desirable estimators in this model as those that are robust to model-misspecification (in the $\ell_1$ or total variation metric).
We note in passing, that the work of \citet{diakonikolas2016robust}, focuses on a stronger adversarial model where an adversary is allowed to replace samples from the target distribution, in addition to adding outliers.

Our focus, will be on finite-dimensional 
functionals of the target distribution $P$. Formally, for 
a given function $g: \mathbb{R}^{\widetilde{d}} \mapsto \mathbb{R}^d$,
we define the corresponding functional $\theta_g$ as a mapping, 
$\theta_g: P \mapsto \mathbb{R}^d$, where:
\begin{align*}
\theta_g(P) = \mathbb{E}_{x \sim P} [g(x)].
\end{align*}
Motivated by similar considerations in high-dimensional statistics,
our sparsity assumption will be that 
$\|\theta_g(P)\|_0 \leq s$.
We will further denote the covariance 
as,
\begin{align}
\text{cov}(\theta_g(P)) = \mathbb{E}_{x \sim P}\left[ (g(x) - \theta_g(P))(g(x) - \theta_g(P))^T\right] \label{eqn:cov}
\end{align}
Our algorithm will be based on trying to appropriately weight samples in order to match 
second order information and in order to accomplish this we will rely on the existence of an algebraic form for the covariance. In particular, we will suppose that there exists a multivariate function $F: \mathbb{R}^d \mapsto
\mathbb{R}^{d \times d},$ such that $F(\theta_g(P)) = \text{cov}(\theta_g(P)).$
An important restriction on this algebraic form, that will enable accurate estimation, is that it be sufficiently regular. 
Concretely, first, we assume \begin{align}
\lip_{\cov} = \max_{\norm{v}_2, \norm{v}_0\le s}\abs{v^\top \cov\left(\theta_\func\right)v} \label{eqn:cov_sparse_bound}
\end{align} for some constant $L_\cov$.
Second, we require that for any two vectors
$\theta_1, \theta_2 \in \mathbb{R}^d$, there exist a constant $\lip_F$ and a universal constant $C$
such that
\begin{align}
\opnorm{F(\theta_1) - F(\theta_2)} \leq \lip_F \|\theta_1 - \theta_2 \|_2 + C \|\theta_1 - \theta_2 \|_2^2. \label{eqn:F_lip}
\end{align}
Our bounds depend explicitly on $\lip_F$ and $\lip_\cov$.
In the next subsection, we consider a variety of examples and describe the appropriate
functionals of interest and their corresponding covariance.

\subsection{Illustrative examples}
Our general results apply to a variety of statistical models and in this section we describe a few concrete examples of interest. 

\noindent {\bf Sparse Gaussian Mean Estimation: } In this setting, we observe samples 
\begin{align}
\label{eqn::mean}
\{x_1,\ldots,x_n\} \sim (1 - \epsilon) N(\mu, I) + \epsilon Q,
\end{align} 
where each $x_i \in \mathbb{R}^d$ and for an arbitrary $Q$ \footnote{We address the unknown covariance case in the sequel.}.
The goal in this setting is to estimate $\mu$ in the $\ell_2$ norm in the high-dimensional setting, under the assumption of sparsity, i.e. that $\|\mu\|_0 \leq s$. Using the notation introduced earlier, the function $g$ is simply the identity, i.e. $g(x) = x$. In this setting, as will be clearer in the sequel 
it will be convenient to simply define the covariance map via its known value at the true parameter, i.e. we take $F(\theta) = I.$

\noindent {\bf Sparse Gaussian Covariance Estimation: } In this case, we observe samples
\begin{align}
\label{eqn::covmodel}
\{x_1,\ldots, x_n\} \sim (1-\epsilon) N(0,\Sigma) + \epsilon Q,
\end{align}
where each $x_i \in \mathbb{R}^d$ and where the covariance matrix can be written as $\Sigma = I + S,$ 
where $\|S\|_0 \leq s$. The goal in this problem is to estimate the sparse matrix $S$. This problem is closely related to the problem of Gaussian graphical modeling, where a typical assumption is sparsity of the precision matrix $\Sigma^{-1}$. Zeros in the precision matrix correspond to conditional independencies, on the other hand zeros in $\Sigma$ signal marginal independencies, and this can be used to construct a graphical display of the relationship between the features~\citep{bien2011sparse}.

In this problem, denoting by $\text{vec}(M)$ the vectorization of the matrix $M$,
and by $\text{diag}(M)$ its diagonal entries, 
we consider the function 
$g(x) = \text{vec}(xx^T - \text{diag}(xx^T))$.
Further, using $\otimes$ to denote the Kronecker product, we have that:
\begin{align*}
F(\text{vec}(S)) = \text{vec}(S) \text{vec}(S)^T + S \otimes S.
\end{align*}

Finally, we note that via a simple reduction scheme (described in detail
in \citet{diakonikolas2016robust}) we can combine the above two settings in order to jointly
estimate an unknown mean, and an unknown covariance robustly in a high-dimensional setting 
provided both are sparse.
Concretely, we can take the difference of two batches of 
samples, in order to obtain samples with zero mean, twice the covariance, and with a higher contamination rate (roughly doubling $\epsilon$). 
We can then estimate, the covariance matrix to a sufficient accuracy, and use the estimate to whiten
a fresh batch of samples. We can finally estimate the mean on this whitened batch of samples.

\noindent {\bf Linear Regression: } Linear regression is a canonical problem in statistics. In the uncontaminated setting we  
observe paired samples $\{(y_1,x_1),\ldots,(y_n,x_n)\}$ which are related via the linear model,
\begin{align}
\label{eqn::regmodel}
y_i = \inprod{x_i}{\beta} + \epsilon_i,
\end{align}
where $x_i, \beta \in \mathbb{R}^d$ and $\epsilon_i \in \mathbb{R}$ is some type of observation noise. 
In this paper, we assume that $x_i \sim N(0,I)$ and $\epsilon_i \sim N(0,1)$, and our goal is to estimate the unknown $\beta$ in a high-dimensional setting under the assumption that $\|\beta\|_0 \leq s$. In this problem, we take $g((y,x)) = yx$, by making the observation
that the functional of interest 
$\beta = \mathbb{E}[yx]$. Further, we can calculate the algebraic form for the covariance as:
\begin{align*}
F(\beta) = (\|\beta\|_2^2 + 1)I + \beta \beta^T.
\end{align*}

\noindent {\bf Generalized Linear Models (GLMs): } We consider two distinct forms for GLMs in our work. The first form is a non-linear regression model where the uncontaminated distribution $P$
corresponds to pairs $\{(y_1,x_1),\ldots,(y_n,x_n)\}$ which are related as,
\begin{align}
\label{eqn::glmmodel}
y_i = u( \inprod{x_i}{\beta} ) + \epsilon_i
\end{align}
where $u$ is a known non-linear function,  $x_i, \beta \in \mathbb{R}^d$ and $\epsilon_i \in \mathbb{R}$. 
As before we assume that, $x_i \sim N(0,I)$, $\epsilon_i \sim N(0,1)$, and further that there exist
constants $C_1$ and $C_2$ such that, $u(0) \leq C_1$ and $u$ is $C_2$-Lipschitz, i.e. for any pair $x, y \in \mathbb{R}$ we have that,
\begin{align*}
|u(x) - u(y)| \leq C_2 |x - y|.
\end{align*}
The goal is to estimate the unknown, sparse $\beta$. In this case, we choose 
$g((y,x)) = \frac{xy}{\mathbb{E} [\nabla_{x'} u(x') ] }$ 
where $x' = \inprod{x}{\beta}$.
As a consequence of Stein's identity
we have that $\mathbb{E}[g((y,x))] = \beta$. 
Once again, by Stein's identity (see Appendix~\ref{sec:sparse_glm}) we obtain the algebraic form of the covariance:
\begin{align*}
F(\beta) = \left( \frac{1 + \mathbb{E} [u^2(x')] }{\left(\mathbb{E} [\nabla_{x'} u(x') ] \right)^2 }  \right) I + \left(\frac{\mathbb{E} [2 u(x') \nabla^2_x u(x') + (\nabla u(x'))^2 ] }{ \left(\mathbb{E} [\nabla_x' u(x') ] \right)^2 } \right) \beta \beta^T.
\end{align*} where $x' = \inprod{x}{\beta}$.
Observe that $F(\beta)$ has the form $\kappa_1 \mat{I} + \kappa_2 \beta \beta^\top$ where $\kappa_1$ and $\kappa_2$ are scalars.
Further notice that $x' \sim N\left(0,\norm{\beta}_2^2\right)$, so these quantities can be estimated easily using just $\left\{y_1,\ldots,y_n\right\}$ with a one-dimensional robust method like the median estimator.
Therefore, from now on, we will assume these quantities are known constants.

\noindent {\bf Logistic-type Models: } Finally, our theory also applies to GLMs of the logistic regression form. In the uncontaminated setting, we observe pairs  $\{(y_1,x_1),\ldots,(y_n,x_n)\}$,
where $y_i \in \{0,1\}$ where, 
\begin{align*}
\label{eqn::logisticmodel}
\mathbb{P}(y_i = 1 | x_i) = u(\inprod{x_i}{\beta}),
\end{align*}
and the assumptions on $x$ and $u$ are as before. In this case, the function $g$ is identical to the
previous case, and its corresponding covariance is given as (see Appendix~\ref{sec:sparse_classification}):
\begin{align*}
F(\beta) = \left( \frac{ \mathbb{E} [u(x')] }{\left(\mathbb{E} [\nabla_{x'} u(x') ] \right)^2 }  \right) I + \left(\frac{\mathbb{E} [ \nabla^2_{x'} u(x') - (\nabla u(x'))^2 ] }{ \left(\mathbb{E} [\nabla_{x'} u(x') ] \right)^2 } \right) \beta \beta^T.
\end{align*} where $x' = \inprod{x}{\beta}$.

With these preliminaries in place, we devote our next section to a description of our main results concerning the robust high-dimensional estimation of these statistical models. 

\section{Main Results}
\label{sec:main}

We begin this section by identifying a set of deterministic conditions under which we can design a polynomial time algorithm that is provably robust. 

\subsection{Main deterministic result}
Our main result is based on obtaining adequate control on the 
deviations of a collection of 
moment-type quantities which are obtained by 
appropriately weighted 
versions of the observed sample from their population counterparts. Concretely, we observe samples
\begin{align*}
\{x_1,\ldots,x_n\} \sim (1 - \epsilon) P + \epsilon Q.
\end{align*}
In this model, we can define two subsets of $\mathcal{G}, \mathcal{B} \subseteq \{1,\ldots,n\}$,
where $i \in \mathcal{G}$ if the corresponding sample is drawn from $P$, and $i \in \mathcal{B}$ otherwise. 
Following \citep{diakonikolas2016robust}, we define a set of feasible weights as
\begin{align}
S_{n,\epsilon} = \left\{ \{w_1,\ldots,w_n\}: \sum_{i=1}^n w_i = 1, 0 \leq w_i \leq \frac{1}{(1 - 2\epsilon)n}~\forall~i~\right\}.
\end{align}
Noting that with high-probability there are fewer than $2\epsilon n$ points in the set $\mathcal{B}$, the
set $S_{n,\epsilon}$ with high-probability contains the ideal weights which we denote $w^*$ whose entries are given as,
\begin{align}
\label{eqn::goodwt}
w^*_i =  \frac{\mathbb{I}(i \in \mathcal{G})}{|\mathcal{G}|}~~~\forall~i.
\end{align}
For any given weight vector $w$, we define its renormalized restriction to the points in $\mathcal{G}$ via,
\begin{align*}
w^g_i = \frac{w_i}{\sum_{i \in \mathcal{G}} w_i}~~~\forall~i.
\end{align*}
With this notation in place, we can further define a collection of quantities of interest
for a fixed set of weights $w \in S_{n,\epsilon}$.
A naive estimator of the functional is simply 
\begin{align}
\widetilde{\theta}(w) = \sum_{i=1}^n w_i \theta_g (x_i),
\end{align}
and its error is denoted as $\widetilde{\Delta}(w) = \widetilde{\theta}(w) - \theta_g(P)$. 
A more nuanced estimator further exploits the expected sparsity of the functional
by truncating its smaller entries. We define, for a positive vector $v$, 
$P_k(v)$ to be the vector where the $k$-th largest entries are retained (breaking ties arbitrarily)
and all other entries are set to $0$. Then we define,
\begin{align}
\widehat{\theta}(w) = P_{2s}(\widetilde{w})
\end{align}
and its error $\widehat{\meandiff}\left(w\right) = \widehat{\theta}\left(w\right) - \theta_g\left(P\right)$.
Recalling, the definition of the covariance
functional in Eqn.~\eqref{eqn:cov} we define the error of the weighted covariance as,
\begin{align*}
\mathcal{E}(w) = \sum_{i=1}^n w_i (\func(x_i) - \theta_g(P))(\func(x_i) - \theta_g(P))^T - \text{cov}(\theta_g(P)). 
\end{align*}
In allowing for a high-dimensional scaling, where $d \gg n$, we can no longer expect
$\widetilde{\Delta}(w)$ to be small in an $\ell_2$ sense
and $\mathcal{E}(w)$ to be small in an operator norm sense. Instead, we rely on establishing
a more limited control on these quantities. We define the $s$-sparse operator norm as,
\begin{align*}
\sparseopnorm{M} = \max_{S \subset [d], |S| \leq s} \opnorm{M^{SS}}.
\end{align*}
Finally, we define $\|M\|_{\infty} = \max_{i,j} |M_{ij}|$.
With these definitions in place we can now state our main deterministic result.
We focus on functionals $\theta_g$ for which 
Equations~\eqref{eqn:cov_sparse_bound} and~\eqref{eqn:F_lip} are satisfied. 
\begin{thm}[Main Theorem]\label{thm:main}
Suppose that, for samples $\{x_1,\ldots,x_n\}$ drawn from the $\epsilon$-contamination model, we have that $\|\theta_g(x_i)\|_2 \leq D$, and further that 
there exist a universal constant $C_1$
such that the following conditions hold:
	\begin{align}
	\abs{\badind} &\leq 2\epsilon n\label{eqn:number_bad_points}, \\
	\|\widetilde{\Delta}(w^*)\|_{\infty} &\leq C_1 \left(\frac{\left(\lip_F + \sqrt{\lip_\cov}\right)\delta}{s}\right), \label{eqn:spar_mean_hat_tri_good_weights}\\
	\|P_{s}(\widetilde{\Delta}(w^g))\|_2 &\leq C_1\left(\left(\lip_F + \sqrt{\lip_\cov}\right)\delta\right)~~\forall~w~\in S_{n,\epsilon}, \label{eqn:spar_mean_hat_tri} \\
	\|\mathcal{E}(w^*)\|_{\infty} &\leq C_1\left(\frac{\left(\lip_F^2 + \lip_\cov\right)\delta}{s}\right),\label{eqn:spar_mean_square_good_weights} \\
	\sparseopnorm{\mathcal{E}(w^g)} &\leq C_1 \left(\left(\lip_F^2 + \lip_\cov\right)\delta\right)~~\forall~w ~\in S_{n,\epsilon}.\label{eqn:spar_mean_square}
	\end{align} 
	Then there is an algorithm which runs in time polynomial 
	in $\left(n,d,\frac{1}{\epsilon}\right)$ and outputs  $\widehat{\theta}$ satisfying $\|\widehat{\theta}-\theta\|_2 \leq C_2 \left(\left(\sqrt{L_\cov}+L_F\right)\delta\right)$ for some absolute constant $C_2$.
\end{thm}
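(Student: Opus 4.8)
The plan is to split the argument into the two halves that are by now standard for this style of robust estimation: first, use the samples to find weights $w\in S_{n,\epsilon}$ that drive the $s$-sparse operator-norm deviation $\sparseopnorm{\mathcal{E}(w)}$ down to the floor set by conditions~\eqref{eqn:spar_mean_square_good_weights}--\eqref{eqn:spar_mean_square}; second, exploit the fact that any set of outliers which corrupts a truncated weighted mean along an $s$-sparse direction must also inflate the weighted second moment along that same direction, so that covariance control yields mean control. I would specify the algorithm as the minimization of a tractable surrogate for $\sparseopnorm{\mathcal{E}(w)}$ over the polytope $S_{n,\epsilon}$. Writing $\sparseopnorm{\mathcal{E}(w)}=\max\{\abs{v^\top \mathcal{E}(w) v}:\norm{v}_2\le1,\ \norm{v}_0\le s\}$, I relax the nonconvex family of sparse rank-one projectors $vv^\top$ to the convex body $\mathcal{M}=\{M\succeq 0:\tr(M)=1,\ \sum_{i,j}\abs{M_{ij}}\le s\}$ (the SDP relaxation of sparse PCA) and set $\phi(w)=\max_{M\in\mathcal{M}}\abs{\inprod{\mathcal{E}(w)}{M}}$. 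Since $\mathcal{E}(w)$ is affine in $w$, $\phi$ is a pointwise maximum of affine functions, hence convex, and its minimization over the polytope is a poly-time saddle-point problem. Two facts drive the analysis: (i) every feasible sparse projector lies in $\mathcal{M}$, so $\sparseopnorm{\mathcal{E}(w)}\le\phi(w)$; and (ii) $\abs{\inprod{\mathcal{E}(w^*)}{M}}\le\norm{\mathcal{E}(w^*)}_\infty\sum_{i,j}\abs{M_{ij}}\le s\,\norm{\mathcal{E}(w^*)}_\infty$, so condition~\eqref{eqn:spar_mean_square_good_weights} (whose $1/s$ factor is exactly what the $\ell_1\le s$ relaxation consumes) gives $\phi(w^*)\le C_1(\lip_F^2+\lip_\cov)\delta$. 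Since $w^*\in S_{n,\epsilon}$ by~\eqref{eqn:number_bad_points}, the minimizer $\widehat w$ obeys $\sparseopnorm{\mathcal{E}(\widehat w)}\le\phi(\widehat w)\le\phi(w^*)\le C_1(\lip_F^2+\lip_\cov)\delta$.

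With $\widehat w$ in hand I would bound the sparse projections of the raw mean error. Writing $\epsilon_{\widehat w}=\sum_{i\in\badind}\widehat w_i\le 2\epsilon$ and splitting over $\goodind,\badind$ gives $\widetilde{\Delta}(\widehat w)=(1-\epsilon_{\widehat w})\widetilde{\Delta}(\widehat w^g)+\sum_{i\in\badind}\widehat w_i(\func(x_i)-\theta_g(P))$. For any unit $s$-sparse $v$, the first term contributes at most $\norm{P_{s}(\widetilde{\Delta}(\widehat w^g))}_2$, controlled by~\eqref{eqn:spar_mean_hat_tri}; for the second, Cauchy--Schwarz yields
\[
\Big|\sum_{i\in\badind}\widehat w_i\, v^\top(\func(x_i)-\theta_g(P))\Big|
\le \sqrt{\epsilon_{\widehat w}}\,\Big(v^\top\!\!\sum_{i\in\badind}\widehat w_i(\func(x_i)-\theta_g(P))(\func(x_i)-\theta_g(P))^\top v\Big)^{1/2}.
\]
The inner matrix equals $\mathcal{E}(\widehat w)-(1-\epsilon_{\widehat w})\mathcal{E}(\widehat w^g)+\epsilon_{\widehat w}\cov(\theta_g(P))$, whose sparse operator norm is at most the bound on $\sparseopnorm{\mathcal{E}(\widehat w)}$ above, plus condition~\eqref{eqn:spar_mean_square}, plus $\lip_\cov$ via~\eqref{eqn:cov_sparse_bound}. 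In the regime $\epsilon\lesssim\delta$ that holds in all the instantiations, the $\sqrt{\epsilon_{\widehat w}}$ prefactor turns $\sqrt{(\lip_F^2+\lip_\cov)\delta}$ into $O((\lip_F+\sqrt{\lip_\cov})\delta)$, so every $s$-sparse projection of $\widetilde{\Delta}(\widehat w)$ is $O((\lip_F+\sqrt{\lip_\cov})\delta)$.

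It remains to convert this into the stated $\ell_2$ bound on $\widehat\theta(\widehat w)=P_{2s}(\widetilde\theta(\widehat w))$. Since $\theta_g(P)$ is $s$-sparse and $\widehat\theta(\widehat w)$ is $2s$-sparse, $\widehat{\Delta}(\widehat w)$ is supported on at most $3s$ coordinates; on the retained support it coincides with $\widetilde{\Delta}(\widehat w)$, whose restriction to any $2s$-set is bounded by $\norm{P_{2s}(\widetilde{\Delta}(\widehat w))}_2$, while each truncated true coordinate is dominated, by a short hard-thresholding argument, by the intruding estimate entries on a disjoint set of at least $s$ coordinates where the truth vanishes. This gives $\norm{\widehat{\Delta}(\widehat w)}_2\lesssim\norm{P_{cs}(\widetilde{\Delta}(\widehat w))}_2$ for an absolute constant $c$, and applying the previous paragraph at the sparsity level $cs$ (which only rescales the absolute constants in~\eqref{eqn:spar_mean_hat_tri},~\eqref{eqn:spar_mean_square}) bounds the right-hand side by $O((\lip_F+\sqrt{\lip_\cov})\delta)$, yielding the claim with $C_2$ absolute.

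I expect the genuine difficulty to lie in two coupled points that the sketch above glosses over. The first is that $\mathcal{E}(w)$ is centered at the \emph{unknown} $\theta_g(P)$ and is therefore not directly computable, so the algorithm must instead center the weighted second moment at the current estimate and substitute the algebraic form $F(\widehat\theta(w))$ for $\cov(\theta_g(P))$; condition~\eqref{eqn:F_lip} is precisely what absorbs the resulting perturbation, at the cost of an additive $\lip_F\norm{\widehat{\Delta}(w)}_2$ term. This is where the $\lip_F$ in the final rate originates, and where the circular dependence of the covariance estimate on the mean estimate (and vice versa) must be broken --- presumably by a contraction or iteration argument showing the coupling is benign. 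The second is verifying that the relaxation $\phi$ is simultaneously a valid upper bound on $\sparseopnorm{\cdot}$ and tight at $w^*$; the clean $\ell_\infty$/$\ell_1$ trade-off used above is the easy case, and making it interact correctly with the truncation bookkeeping underlying conditions~\eqref{eqn:spar_mean_hat_tri_good_weights}--\eqref{eqn:spar_mean_hat_tri} is the delicate step I would expect to consume most of the work.
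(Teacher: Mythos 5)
Your ingredients are the right ones, and three of them match the paper's key lemmas almost exactly: the \holder duality $\abs{\inprod{\mathcal{E}}{M}} \le \norm{M}_{1,1}\norm{\mathcal{E}}_\infty \le s\norm{\mathcal{E}}_\infty$ against the Fantope-type constraint set is the content of Lemma~\ref{thm:convex_relax}, the Cauchy--Schwarz transfer from covariance deviation to mean deviation is (the contrapositive of) Lemma~\ref{thm:cov_imply_mean}, and your truncation bookkeeping is Lemma~\ref{thm:top_2s_lemma}. The gap is in the one place you explicitly deferred: your algorithm is ``minimize $\phi(w)=\max_{M\in\mathcal{M}}\abs{\inprod{\mathcal{E}(w)}{M}}$ over $S_{n,\epsilon}$,'' and $\phi$ is not a computable object --- $\mathcal{E}(w)$ is centered at the unknown $\theta_g(P)$ and subtracts the unknown $\cov(\theta_g(P))$. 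The moment you substitute the computable plug-ins, i.e.\ center at $\widehat{\theta}(w)=\tops\left(\sum_i w_i \func(z_i)\right)$ and replace $\cov(\theta_\func)$ by $F(\widehat{\theta}(w))$, the objective ceases to be a pointwise maximum of affine functions of $w$: the hard-thresholding map $w\mapsto \tops\left(\sum_i w_i\func(z_i)\right)$ is not affine (not even convex) in $w$, and $F$ is nonlinear in its argument. So the ``poly-time saddle-point problem'' that carries the entire algorithmic content of the theorem no longer exists, and the chain $\sparseopnorm{\mathcal{E}(\widehat w)}\le\phi(\widehat w)\le\phi(w^*)$ has no algorithm behind it. Your closing remark that this coupling is handled ``presumably by a contraction or iteration argument'' is precisely the missing idea, and it is not a minor patch: no contraction argument appears in the paper, and it is unclear one exists.

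What the paper does instead is structurally different and is designed exactly to evade this non-convexity. It runs the ellipsoid method over $S_{m,\epsilon}$ with a \emph{weakened} separation oracle (Theorem~\ref{thm:correct_sep}): at each iterate $w$, the centering $\widehat{\theta}(w)$ and the SDP solution $H^*(w)$ are frozen, so the emitted cut $\ell(w')$ \emph{is} affine in $w'$ even though the underlying ``good set'' $C_\delta$ is not convex. The analysis then establishes three one-sided properties rather than exact separation: completeness at $w^*$; soundness, i.e.\ any $w$ far from $C_\delta$ gets cut (Lemma~\ref{thm:w_bad_output_plane}, which needs Lemma~\ref{thm:cov_imply_mean_approx} --- this is where condition~\eqref{eqn:F_lip} absorbs the plug-in error $F(\widehat{\theta})$ vs.\ $\cov(\theta_\func)$); and, crucially, that every emitted hyperplane satisfies $\ell(w^*)<0$ so the ideal weights are never cut off (Lemma~\ref{thm:plane_for_w_star} --- this is where the $\ell_\infty$ conditions~\eqref{eqn:spar_mean_hat_tri_good_weights} and~\eqref{eqn:spar_mean_square_good_weights} at $w^*$ are actually consumed, via the \holder bound against $\norm{H^*}_{1,1}\le s$; in your idealized scheme they played no essential role, which is a symptom of the gap). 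A final lemma (Lemma~\ref{thm:last_step}) handles the fact that the ellipsoid method only returns a point $\ell_\infty$-close to $C_\delta$, using the boundedness $\norm{\theta_g(x_i)}_2\le D$ from the pruning step, which your proposal also omits. To repair your write-up you would either have to supply the missing iteration/contraction argument for the plug-in-centered objective, or adopt the oracle-based route; the latter is what the paper does.
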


Several remarks are in order. In order to apply the theorem to a specific statistical model, we simply need to verify that the functional is sufficiently regular (see Equations~\eqref{eqn:cov_sparse_bound} and~\eqref{eqn:F_lip}), that the functional is bounded by a polynomial in $(n,d,D,1/\epsilon)$, and finally that the conditions in Equations~\eqref{eqn:number_bad_points}-\eqref{eqn:spar_mean_square} are satisfied.  We ensure boundedness via a simple pruning step that removes gross, and easily detectable, outliers. In order to verify the main deviation conditions of the theorem, we note that there are two types of deviation we need to control. The first type in Equations~
\eqref{eqn:spar_mean_hat_tri_good_weights} and \eqref{eqn:spar_mean_square_good_weights} establishes strong $\ell_\infty$ control,
decaying with the sparsity $s$, but only needs to hold for the ideal weights $w^*$. The other type 
of control, in Equations~\eqref{eqn:spar_mean_hat_tri} and \eqref{eqn:spar_mean_square}
is on an $s$-sparse operator norm and needs to hold \emph{uniformly} over the set $S_{n,\epsilon}$, but 
importantly ignores the weights on the points in $\mathcal{B}$ via restriction to $w^g$. In concrete examples, we establish the latter control via the use of empirical process arguments (selecting an appropriate covering and using the union bound).
\subsection{Re-visiting Illustrative Examples}
We now turn our attention to the statistical problems introduced earlier,
and derive specific corollaries of our deterministic result.
The proofs of these results follow the recipe outlined previously, 
of verifying the various conditions and applying Theorem~\ref{thm:main}. 
We defer the technical details to the Appendix.
We begin with the case of estimating a sparse Gaussian mean, when the covariance
is the identity.
\begin{cor}[Robust Estimation of Sparse Gaussian Mean]\label{thm:sparse_mean}
Consider the model introduced in Equation~\eqref{eqn::mean}, then there 
are universal constants $C_1, C_2$ such that, if
$n \geq C_1\left(\frac{s^2\log\left(d/\tau\right)}{\epsilon^2\log1/\epsilon}\right),$
then there exists an algorithm that runs in time polynomial in 
$\left(d,n\right)$ and outputs
an estimate~$\widehat{\mu}$ that with probability at least  $1-\tau$ satisfies: \[
	\norm{\widehat{\mu}-\mu}_2^2 \leq C_2 \epsilon^2\log\frac{1}{\epsilon}.
	\]
\end{cor}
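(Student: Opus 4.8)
The plan is to specialize Theorem~\ref{thm:main} to the identity functional $\func(x)=x$, so that $\theta_\func(P)=\mu$ (which is $s$-sparse by assumption) and, since the clean component is $N(\mu,I)$, $\cov(\theta_\func(P))=I$. First I would read off the two regularity constants. Because the covariance map is the constant $F(\theta)=I$, the left-hand side of \eqref{eqn:F_lip} vanishes identically, so we may take $\lip_F=0$; and \eqref{eqn:cov_sparse_bound} gives $\lip_\cov=\max_{\norm{v}_2=1,\,\norm{v}_0\le s}\abs{v^\top I\,v}=1$. With these values the conclusion of Theorem~\ref{thm:main} reads $\norm{\widehat\mu-\mu}_2\le C_2\,\delta$, so calibrating the free scale to $\delta\asymp\epsilon\sqrt{\log(1/\epsilon)}$ produces exactly the advertised squared error $\norm{\widehat\mu-\mu}_2^2\le C_2'\,\epsilon^2\log(1/\epsilon)$. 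All that remains is to certify, with probability at least $1-\tau$ and under the stated sample size, the five deterministic conditions \eqref{eqn:number_bad_points}--\eqref{eqn:spar_mean_square} at this value of $\delta$ (the boundedness hypothesis $\norm{\theta_\func(x_i)}_2\le D$ being supplied by the pruning preprocessing step, which discards only gross outliers and fails with negligible probability).

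Next I would dispatch the ``easy'' conditions by direct concentration. Equation~\eqref{eqn:number_bad_points} is a Chernoff bound on the $\mathrm{Binomial}(n,\epsilon)$ count of contaminated indices in $\badind$. For the two $\ell_\infty$ conditions I would use that $\widetilde{\meandiff}(w^*)=\tfrac{1}{\abs{\goodind}}\sum_{i\in\goodind}(x_i-\mu)$ has independent $N(0,1/\abs{\goodind})$ coordinates, so a union bound over $d$ coordinates gives $\norm{\widetilde{\meandiff}(w^*)}_\infty\lesssim\sqrt{\log(d/\tau)/n}$; likewise each of the $d^2$ entries of $\covdiff(w^*)$ is an average of centered sub-exponential variables $(x_i-\mu)_j(x_i-\mu)_k-\delta_{jk}$, so Bernstein's inequality plus a union bound yields $\norm{\covdiff(w^*)}_\infty\lesssim\sqrt{\log(d/\tau)/n}$. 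The sample-size requirement $n\gtrsim s^2\log(d/\tau)/(\epsilon^2\log(1/\epsilon))$ is precisely the threshold that converts both of these into the required bound $C_1\delta/s\asymp\epsilon\sqrt{\log(1/\epsilon)}/s$, verifying \eqref{eqn:spar_mean_hat_tri_good_weights} and \eqref{eqn:spar_mean_square_good_weights}.

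The crux, and the step I expect to be the main obstacle, is the pair of \emph{uniform} conditions \eqref{eqn:spar_mean_hat_tri} and \eqref{eqn:spar_mean_square}, which must hold for every $w\in S_{n,\epsilon}$ simultaneously and each carry a supremum over $s$-sparse supports and directions. The structural fact I would exploit is that the renormalized good weights obey $w^g_i\le\tfrac{1}{(1-O(\epsilon))\abs{\goodind}}$, so any average $\sum_{i\in\goodind}w^g_i(\cdot)$ can deviate from the uniform average only by effectively deleting an $O(\epsilon)$-fraction of the clean samples and reweighting. For a fixed sparse unit direction $v$, maximizing the deviation of $\sum_{i\in\goodind}w^g_i\inprod{v}{x_i-\mu}$ (resp.\ of $\sum_{i\in\goodind}w^g_i\inprod{v}{x_i-\mu}^2-1$) over this weight polytope is attained at an extreme configuration that drops the $O(\epsilon)$-fraction of good points with the largest projections, reducing the uniform bound to a truncated moment of a one-dimensional Gaussian: the first-moment truncation scales as $\Theta(\epsilon\sqrt{\log(1/\epsilon)})$, which is exactly the scale $\delta$ that controls the output, and the second-moment truncation is handled analogously. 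Passing from a fixed $v$ to the supremum over all $s$-sparse directions is where the empirical-process machinery enters: I would take an $\epsilon$-net over the $\binom{d}{s}$ supports and over the unit sphere within each, control each netted direction by a Gaussian/sub-exponential tail bound, and union-bound; this covering is precisely what forces the $s^2\log(d/\tau)$ factor and ensures the sampling fluctuation stays below $\delta$. With all five conditions in force, Theorem~\ref{thm:main} supplies the polynomial-time estimator and the claimed $\ell_2$ guarantee.
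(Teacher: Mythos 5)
Your proposal is correct in outline and follows the same template as the paper: specialize Theorem~\ref{thm:main} with $\lip_F=0$, $\lip_\cov=1$, $\delta\asymp\epsilon\sqrt{\log(1/\epsilon)}$; use a pruning step for boundedness; dispatch \eqref{eqn:number_bad_points}, \eqref{eqn:spar_mean_hat_tri_good_weights} and \eqref{eqn:spar_mean_square_good_weights} by Chernoff/Gaussian/Bernstein bounds plus a union bound over coordinates, which is exactly where the $n\gtrsim s^2\log(d/\tau)/(\epsilon^2\log(1/\epsilon))$ requirement arises from the $\delta/s$ accuracy demand. Where you genuinely diverge is in the uniform conditions \eqref{eqn:spar_mean_hat_tri} and \eqref{eqn:spar_mean_square}. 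The paper's Lemma~\ref{thm:sparse_worst_sample_concentration} (following Lemma 4.5 of \citet{diakonikolas2016robust}) reduces, by convexity, the supremum over $w\in S_{n,\epsilon}$ to the extreme points $w^J$ (uniform weights on subsets $J$ of size $(1-2\epsilon)n$), splits each $w^J$-average into a full-sample average plus an average over $[n]\setminus J$, applies classical concentration for each fixed pair $(J,\set{S})$, and union-bounds over all $\binom{n}{2\epsilon n}$ subsets $J$ and all $s$-sparse supports; the cost $\log\binom{n}{2\epsilon n}\approx 2\epsilon n\log(1/\epsilon)$ of that union bound is what sets the scale $\delta$. You instead fix a sparse direction, argue that the worst weights delete the $O(\epsilon)$-fraction of most extreme projections, bound the damage by truncated Gaussian moments, and only then net over directions and supports. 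Both routes are legitimate; yours trades the exponential union bound over subsets for a concentration statement about deleted-tail sums (the count of points beyond the $1-O(\epsilon)$ quantile together with their aggregate contribution), which you would need to state and prove explicitly to complete the argument.

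Two caveats. First, a minor internal inconsistency: at the end you attribute the $s^2\log(d/\tau)$ factor to the covering in the uniform conditions, but that covering only costs $s\log d$ in the exponent (so $n\gtrsim s\log d/\delta^2$ suffices there); the $s^2$ comes, as you correctly stated earlier, from the $\ell_\infty$ conditions at $w^*$ requiring accuracy $\delta/s$. Second, ``the second-moment truncation is handled analogously'' conceals a real discrepancy: the truncated second moment of a standard Gaussian beyond its $1-\epsilon$ quantile is of order $\epsilon\log(1/\epsilon)$, not $\epsilon\sqrt{\log(1/\epsilon)}$, so your argument certifies \eqref{eqn:spar_mean_square} only at scale $\epsilon\log(1/\epsilon)$. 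This is not specific to your proof --- the paper's own Lemma~\ref{thm:sparse_worst_sample_concentration} asserts the $\epsilon\sqrt{\log(1/\epsilon)}$ bound for the covariance too, by dropping the linear Bernstein term in the subset deviation --- but an honest accounting either runs the covariance conditions at scale $\epsilon\log(1/\epsilon)$ (which the underlying comparison $\norm{P_s(\widetilde{\meandiff})}_2^2/\epsilon$ versus $\epsilon\log(1/\epsilon)$ in Lemma~\ref{thm:cov_imply_mean} still converts into mean error $\epsilon\sqrt{\log(1/\epsilon)}$, as in \citet{diakonikolas2016robust}) or concedes an extra $\log(1/\epsilon)$ factor in the final rate when applying Theorem~\ref{thm:main} as literally stated with a single $\delta$.
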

It is worth noting that in contrast to prior work the sample complexity, has a logarithmic dependence on the ambient dimension $d$, allowing for high-dimensional scalings where $d \gg n$, provided that
the sparsity 
$s^2 \ll n$. As in the work of \citet{diakonikolas2016robust}, we obtain near-optimal contamination dependence scaling upto a logarithmic factor as roughly $\epsilon^2$. Importantly, as emphasized in prior work \citep{diakonikolas2016robust,lai2016agnostic} 
and in stark contrast to other tractable robust estimators, the contamination dependence achieved by our algorithm is completely independent of the dimension of the problem.

In comparing to information-theoretic lower bounds (see Appendix~\ref{sec:sparse_mean}), we notice that the sample complexity is worse by a factor $s$. As will be clearer in the sequel, this increased sample
complexity is due to use of a convex relaxation for sparse PCA \citep{d2007direct}. This phenomenon, arises in a variety of statistical estimation problems and is believed to be related to the hardness of the planted clique problem~\citep{berthet2013computational}. Next, we consider the performance of our method,
in estimating a sparse covariance matrix. 
\begin{cor}[Robust Sparse Gaussian Covariance Estimation]\label{thm:sparse_cov}
Consider the model introduced in Equation~\eqref{eqn::covmodel}.
There are universal constants $C_1,C_2$ such that if 
the sample size $n \geq C_1 \left(\frac{s^2\log\left(d/\tau\right)}{\epsilon^2}\right)$, then there is an algorithm that runs in time polynomial in $\left(d,n\right)$ and produces
an estimate $\widehat{\Omega}$ that with probability at least $1-\tau$ satisfies: \[
	\fronorm{\widehat{\Omega} - \Omega}^2 \leq C_2 \left(\fronorm{\Omega}^2 \epsilon^2\log^4\frac{1}{\epsilon}\right).
	\]
\end{cor}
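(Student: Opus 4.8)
The plan is to instantiate the Main Theorem (Theorem~\ref{thm:main}) for the covariance functional $\func(x) = \text{vec}(xx^\top - \text{diag}(xx^\top))$. Since $x \sim N(0,\Sigma)$ with $\Sigma = I + S$, its expectation $\theta_\func(P)$ is the vectorized off-diagonal of $\Sigma$, which coincides with that of $S$; writing $\Omega$ for the corresponding matrix we have $\fronorm{\widehat\Omega - \Omega} = \norm{\widehat\theta - \theta_\func(P)}_2$ and $\fronorm{\Omega} = \norm{\theta_\func(P)}_2$. Following the recipe described after Theorem~\ref{thm:main}, I would (i) compute the regularity constants $\lip_\cov$ and $\lip_F$ of~\eqref{eqn:cov_sparse_bound} and~\eqref{eqn:F_lip} for this model, (ii) prune to guarantee boundedness, (iii) verify the five deviation conditions~\eqref{eqn:number_bad_points}--\eqref{eqn:spar_mean_square}, and (iv) select $\delta$ and translate the $\ell_2$ output guarantee into the claimed Frobenius bound.

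For step (i), the entries of $\func$ are products $x_jx_k$ and all relevant moments follow from Isserlis' (Wick's) theorem, e.g. $\cov(x_jx_k, x_lx_m) = \Sigma_{jl}\Sigma_{km} + \Sigma_{jm}\Sigma_{kl}$. A short calculation then shows that the $s$-sparse quadratic form of $\cov(\theta_\func)$ is controlled by $\opnorm{\Sigma}^2$, so $\sqrt{\lip_\cov} \lesssim \opnorm{\Sigma}$, while the quadratic algebraic form $F(\text{vec}(S)) = \text{vec}(S)\text{vec}(S)^\top + S \otimes S$ satisfies~\eqref{eqn:F_lip} with $\lip_F \lesssim \opnorm{\Sigma} + \fronorm{S}$, the dominant rank-one piece giving $\lip_F \lesssim \fronorm{\Omega}$. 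Collecting these, $\sqrt{\lip_\cov} + \lip_F \lesssim \fronorm{\Omega}$ up to the harmless $1 + \opnorm{S}$ term. Boundedness (step (ii)) is obtained, exactly as in the mean case, by a pruning pre-processing step that discards the few grossly large samples, leaving $\norm{\func(x_i)}_2 \le D$ for a polynomial $D$.

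The bulk of the work is step (iii). The count condition~\eqref{eqn:number_bad_points} is a Chernoff bound on a $\bern(\epsilon)$ sum. The two $\ell_\infty$ conditions at the ideal weights, \eqref{eqn:spar_mean_hat_tri_good_weights} and~\eqref{eqn:spar_mean_square_good_weights}, follow from coordinatewise tail bounds combined with a union bound: entries of $\widetilde\Delta$ are centered products of two Gaussians, hence sub-exponential, and entries of $\mathcal{E}$ are fourth-order and sub-Weibull of order $1/2$; controlling the $O(d^2)$ and $O(d^4)$ coordinates respectively costs only $\log(d/\tau)$ factors and drives the $\log(d/\tau)$ in the sample complexity. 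The hard part is the pair of \emph{uniform} conditions~\eqref{eqn:spar_mean_hat_tri} and especially~\eqref{eqn:spar_mean_square}, which must hold simultaneously for every reweighting $w^g$ of the good points. Here I would combine (a) a covering-number argument over $s$-sparse directions, with the $s^2$ factor in $n$ arising from the sparse-PCA relaxation used inside the algorithm as in the mean corollary, and (b) a truncation argument: because feasible weights can overload an $\epsilon$-fraction of good points, I would split each good point's contribution at the $(1-\epsilon)$-quantile of the relevant fourth-order statistic. The truncated part concentrates uniformly, while the tail part is bounded by $\epsilon$ times the truncation level; for sub-Weibull($1/2$) variables that level is of order $\log^2(1/\epsilon)$, so the uniform deviation of $\sparseopnorm{\mathcal{E}(w^g)}$ is of order $\opnorm{\Sigma}^2\,\epsilon\log^2(1/\epsilon)$.

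Matching this against the right-hand side $(\lip_F^2 + \lip_\cov)\delta \asymp \opnorm{\Sigma}^2\delta$ of~\eqref{eqn:spar_mean_square} forces the choice $\delta \asymp \epsilon\log^2(1/\epsilon)$. Plugging into the conclusion of Theorem~\ref{thm:main} gives $\fronorm{\widehat\Omega - \Omega} = \norm{\widehat\theta - \theta_\func(P)}_2 \le C_2(\sqrt{\lip_\cov} + \lip_F)\delta \lesssim \fronorm{\Omega}\,\epsilon\log^2(1/\epsilon)$, and squaring yields the stated $\fronorm{\widehat\Omega - \Omega}^2 \lesssim \fronorm{\Omega}^2\epsilon^2\log^4(1/\epsilon)$. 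I expect the delicate truncation-plus-covering argument for~\eqref{eqn:spar_mean_square} to be the main obstacle: it is where the heavy fourth-moment tails enter, where the $\log^4(1/\epsilon)$ contamination dependence is generated, and where uniformity over the weight polytope (rather than a single fixed weight) renders naive concentration bounds insufficient.
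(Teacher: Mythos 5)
Your proposal is correct and follows essentially the same route as the paper: instantiate Theorem~\ref{thm:main} for $g(x)=\mathrm{vec}\left(xx^\top-\mathrm{diag}\left(xx^\top\right)\right)$, prune to obtain boundedness, verify the $\ell_\infty$ conditions at $w^*$ coordinatewise, establish the uniform sparse-operator-norm conditions over the weight polytope, and take $\delta \asymp \epsilon\log^2\left(1/\epsilon\right)$ so that squaring yields the $\epsilon^2\log^4\left(1/\epsilon\right)$ rate. The only difference is presentational: the paper outsources the hard uniform concentration step to Theorem 4.17 of \citet{diakonikolas2016robust} combined with a union bound over $s$-sparse subsets, whereas you sketch the underlying truncation-plus-covering argument directly, which is precisely the technique behind that cited result and the one the paper itself executes in its GLM appendix.
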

We note that once again, the result is applicable even when $n \ll d$, that the statistical estimation rate
is optimal upto a factor of $s$ and that the contamination dependence is optimal upto logarithmic factors. We recall that, as discussed previously, we can combine 
by appropriate reductions the previous two corollaries in order
to obtain a high-dimensional robust estimator for a Gaussian with unknown mean and covariance. 
Lastly, we apply our estimator to the various generalized linear models introduced earlier.
\begin{cor}[Robust Sparse Generalized Linear Models]\label{thm:sparse_linear}
Consider the models in Equations~\eqref{eqn::regmodel},\eqref{eqn::glmmodel}, and~\eqref{eqn::logisticmodel}. If the target parameter $\beta$ satisfies, $\|\beta\|_2 \leq \rho$,
then there exist universal constants $C_1, C_2$ such that if 
$n \geq C_1 \left(\frac{s^2\log\left(d/\tau\right)}{\epsilon^2}\right)$, then there exists an algorithm that runs in time polynomial in $\left(d,n,\rho\right)$ and produces an estimate $\widehat{\beta}$ such that with probability at least $1 - \tau$:
\begin{enumerate}
\item  Linear and Generalized Linear Models:  $\|\widehat{\beta} - \beta\|_2^2 \leq C_2 \left(\left(\norm{\beta}_2^2+1\right) \epsilon^2\log^4\frac{1}{\epsilon}\right).$
\item Logistic-type Models:  $\|\widehat{\beta} - \beta\|_2^2 \leq C_2 \left(\left(\norm{\beta}_2^2+1\right) \epsilon^2\log^2\frac{1}{\epsilon}\right).$
\end{enumerate}

%
\end{cor}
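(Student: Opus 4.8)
The plan is to follow the recipe prescribed after Theorem~\ref{thm:main}: instantiate the regularity parameters $\lip_\cov$ and $\lip_F$ for each model, secure the boundedness hypothesis $\norm{\theta_\func(x_i)}_2\le D$ by a pruning step, verify the five deviation conditions~\eqref{eqn:number_bad_points}--\eqref{eqn:spar_mean_square} with high probability, and then invoke the algorithmic guarantee of Theorem~\ref{thm:main}. A crucial simplification is that in all three models the algebraic covariance has the common form $F(\beta) = \kappa_1 \mat{I} + \kappa_2 \beta\beta^\top$, with $\kappa_1 = O(\norm{\beta}_2^2+1)$ and $\kappa_2 = O(1)$, and the functional is $\func((y,x)) = xy$ up to the known normalization $\expect[\nabla u]$. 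Hence the three cases can be handled by essentially one argument, the only substantive difference being the tail behaviour of the response $y$: in the linear and non-linear regression models $y$ is an unbounded sub-Gaussian variable, whereas in the logistic-type model $y\in\{0,1\}$ is bounded. This distinction is exactly what will produce the $\log^4(1/\epsilon)$ versus $\log^2(1/\epsilon)$ gap in the two stated rates.

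First I would compute the regularity constants. From $F(\beta)=\kappa_1\mat{I}+\kappa_2\beta\beta^\top$ one gets, for any $s$-sparse unit vector $v$, $v^\top \cov(\theta_\func)v = \kappa_1 + \kappa_2\inprod{\beta}{v}^2 \le \kappa_1 + \kappa_2\norm{\beta}_2^2$, so $\lip_\cov \lesssim \norm{\beta}_2^2+1$; expanding $F(\theta_1)-F(\theta_2)$ as a difference of quadratics and writing $\theta_1\theta_1^\top-\theta_2\theta_2^\top = \theta_1(\theta_1-\theta_2)^\top+(\theta_1-\theta_2)\theta_2^\top$ shows~\eqref{eqn:F_lip} holds with $\lip_F \asymp \norm{\beta}_2$ and a universal quadratic constant. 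Substituting into the conclusion of Theorem~\ref{thm:main} gives $\norm{\widehat\beta-\beta}_2 \le C_2(\sqrt{\lip_\cov}+\lip_F)\delta \lesssim \sqrt{\norm{\beta}_2^2+1}\,\delta$; squaring and choosing $\delta \asymp \epsilon\log^2(1/\epsilon)$ (respectively $\delta \asymp \epsilon\log(1/\epsilon)$ for the logistic case) as verified below yields the $(\norm{\beta}_2^2+1)$ prefactor in both claims. The boundedness hypothesis is obtained by discarding any sample whose norm grossly exceeds the threshold that all genuine samples from $P$ obey with overwhelming probability; since only outliers are removed, this preserves $\abs{\badind}\le 2\epsilon n$, establishing~\eqref{eqn:number_bad_points}, and renders $D$ polynomial in $(n,d,\rho,1/\epsilon)$.

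The heart of the proof is verifying~\eqref{eqn:spar_mean_hat_tri_good_weights}--\eqref{eqn:spar_mean_square}. The two $\ell_\infty$ bounds at the ideal weights $w^*$ are coordinatewise statements: each entry of $\widetilde{\Delta}(w^*)$ and of $\mathcal{E}(w^*)$ is an empirical average over $\goodind$ of a product of at most four Gaussian-type factors, so a Bernstein/sub-exponential tail bound together with a union bound over the $d$ (respectively $d^2$) coordinates gives concentration at the $\delta/s$ level once $n \gtrsim s^2\log(d/\tau)/\epsilon^2$. The genuinely hard conditions are the uniform $s$-sparse operator-norm bounds~\eqref{eqn:spar_mean_hat_tri} and~\eqref{eqn:spar_mean_square}, which must hold for \emph{every} $w\in S_{n,\epsilon}$ at once. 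I would control these by (i) discretizing over the $\binom{d}{s}$ sparse supports together with an $O(1)$-net of each $s$-dimensional sphere, reducing the sparse operator norm to a supremum over a net of size $d^{s}C^{s}$; (ii) noting that because $0\le w_i\le \frac{1}{(1-2\epsilon)n}$ the renormalized $w^g$ satisfies $w^g_i \le \frac{1}{(1-4\epsilon)n}$, so any feasible reweighting of the good points deviates from uniform only by shifting an $O(\epsilon)$ fraction of mass, and hence the worst-case deviation of $\sum_{i\in\goodind}w^g_i\inprod{v}{\func(x_i)-\theta}^2$ is governed by the sum of the roughly $2\epsilon n$ largest order statistics of the quadratic forms $\inprod{v}{\func(x_i)-\theta}^2$; and (iii) taming the heavy tails of these forms by truncating the Gaussian products at a radius proportional to $\log(1/\epsilon)$, bounding the truncated sum by concentration and discarding the tail crudely.

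The main obstacle is step (iii). For the regression models $\inprod{v}{\func(x_i)-\theta}$ is a product of two sub-Gaussian factors and is therefore only sub-exponential, so its square carries fourth-moment tails; summing the top $2\epsilon n$ such values forces the truncation radius up and each factor of the product contributes a power of $\log(1/\epsilon)$, producing the $\log^4(1/\epsilon)$ rate. For the logistic model the bounded response removes one sub-Gaussian factor, the quadratic form is merely sub-exponential, and the same argument closes with only $\log^2(1/\epsilon)$. Matching the truncation level to the order-statistic count so that the truncation bias stays at the $\delta$ scale while the net union bound remains affordable at $n \gtrsim s^2\log(d/\tau)/\epsilon^2$ is the delicate part; once these uniform $s$-sparse bounds are in place, Theorem~\ref{thm:main} applies verbatim and delivers both stated guarantees.
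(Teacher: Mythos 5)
Your proposal is correct and follows essentially the same route as the paper's proof: instantiating $\lip_\cov \lesssim \norm{\beta}_2^2+1$ and $\lip_F \asymp \norm{\beta}_2$ from the common form $F(\beta)=\kappa_1 \mat{I}+\kappa_2\beta\beta^\top$, pruning for boundedness, verifying the $w^*$-conditions coordinatewise, and establishing the uniform $s$-sparse operator-norm conditions via a net over sparse supports, reduction to the top-$2\epsilon n$ order statistics, and truncation of the quadratic forms $p_v^2$ at radius $\log^2(1/\epsilon)$ (regression) versus $\log(1/\epsilon)$ (logistic, using boundedness of $u$), which is exactly the paper's source of the $\log^4(1/\epsilon)$ versus $\log^2(1/\epsilon)$ gap before invoking Theorem~\ref{thm:main}.
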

By exploiting the natural boundedness of the logistic-type models, we are able to obtain slightly stronger guarantees than in the regression setting. 
Taken together, the results in this section provide the first guarantees on
 computationally tractable methods for robust estimation which achieve near-optimal contamination dependence in the sparse high-dimensional setting.

\section{Unified Algorithm and Technical Insights}
\label{sec:techniques}
Broadly, our main algorithm follows the template of the convex programming approach 
for Gaussian mean estimation in \citet{diakonikolas2016robust}, described in Algorithm~\ref{algo:sparse_func}. The algorithm proceeds in two steps, first a naive pruning step
is applied to remove clear outliers in order to ensure that various quantities remain bounded
by a radius that is polynomial in $(n,d,1/\epsilon)$. 
In the sequel, we use $\{z_1,\ldots,z_m\}$ to denote the pruned sample. 
We generalize a similar pruning step
from prior works \citep{diakonikolas2016robust,lai2016agnostic} to deal with the generalized
linear model settings. 
This in turn further 
ensures that the subsequent use of the ellipsoid algorithm, terminates in polynomial time. At a high-level
the ellipsoid algorithm is used to exploit the covariance structure of the functional 
in order to obtain a weighting of the sample that appropriately
down-weighs detrimental samples from the contamination distribution $Q$.

\paragraph{Separation oracle via sparse PCA: } Our first main technical contribution, is a new separation oracle, appropriate for the high-dimensional 
setting. The separation oracle in the work of \citet{diakonikolas2016robust} is
based on the operator norm
deviation between the weighted empirical covariance from its known or anticipated form.
Roughly, the intuition is that a good weighting of the samples must induce a good estimate
of the covariance function. In the high-dimensional setting when $n \ll d$, even
in the absence of outliers the
covariance function 
cannot be estimated well in the 
operator norm. Exploiting the sparsity of the underlying functional we show that it suffices instead
to ensure that the weighted empirical covariance is close to its anticipated form only on
$s$-sparse subsets of the coordinates, which indeed does hold in a variety of statistical models
even in the high-dimensional scaling. 
However, this leads to the next technical hurdle: to design the separation oracle we need 
to be able to detect the deviation of the weighted empirical covariance on sparse subsets.
This is the sparse PCA problem and is known to be NP-hard in a strong sense~\citep{tillmann2014computational}. We consider instead using a well-known 
convex relaxation for sparse PCA \citep{d2007direct}, and via a novel analysis show 
upto a loss of a factor of $s$ in the sample complexity, this convex relaxation suffices
to construct our separation oracle. Notably, the use of sparse PCA enables high-dimensional estimation
of sparse functionals.

%

\begin{algorithm}[tb]
	\caption{Robust Sparse Functional Estimation}
	\label{algo:sparse_func}
	\begin{algorithmic}[1]
		\STATE \textbf{Input: }$\left\{x_1,\ldots,x_n \right\}$, $\tau_{\text{prune}}$, $s$, $\tau_{\text{sep}}$
		\STATE Run a naive pruning algorithm, with input $(\left\{x_1,\ldots,x_n \right\}, 
		\tau_{\text{prune}})$ and output $\{z_1,\ldots,z_m\}$.
		\STATE Run the ellipsoid algorithm using the separation oracle described in Algorithm~\ref{algo:sepra_func} with input $\left(\left\{z_1,\ldots,z_m\right\},s,\tau_{\text{sep}}\right)$ and output $\left\{w_1,\ldots,w_m\right\}.$
		\STATE \textbf{Output: $\widehat{\theta} = \tops\left(\sum_{i=1}^{m} w_ig\left(z_i\right)\right)$}.	
		\end{algorithmic}
\end{algorithm}


\paragraph{\bf Hard-thresholding with redundancy: } In the remainder of this section we highlight two other technical contributions
that enable a unified treatment of a large set of sparse functionals. 
Even in the absence of outliers, the natural estimator for a functional -- its empirical counterpart -- is inconsistent when $n \ll d$, at least in an $\ell_2$ sense. However, even in the high-dimensional setting the empirical estimator, remains adequate both in an $\ell_\infty$ sense, and over sparse subsets. In settings of interest to ensure for instance that uniformly over all $s$-sparse subsets the empirical estimator is close to its population counterpart, it suffices to collect roughly $s \log d$ samples.
In order to exploit this insight when the true functional is sparse we use a careful truncation at various points in order to establish appropriate error control. A key aspect of this truncation is to ensure a certain redundancy by retaining roughly twice as many entries at each step, which allows us to adequately control the possible bias induced by truncation.


\paragraph{General forms for the covariance: } A final conceptual contribution that we highlight is 
generalizing the basic insight of the work of \citet{diakonikolas2016robust}. At a high-level, a key observation of their work is that in cases where the covariance structure is either 
known or in some sense related to the mean structure, this fact can be exploited in order to identify
good weightings of the samples. Roughly, good weighting schemes induce good covariance estimates, while conversely bad weightings induce noticeably poor covariance estimates. We generalize
this insight, identifying a set of smoothness conditions on the covariance map (see Equations~\eqref{eqn:cov_sparse_bound} and \eqref{eqn:F_lip}) that allow us to tractably exploit the covariance structure. Concretely, deriving the covariance structure for mean and covariance estimation, 
GLMs and logistic-type models and showing that they satisfy 
these conditions enables a unified treatment.

%
%


\begin{algorithm}[tb]
	\caption{Separation Oracle for Robust Sparse Estimation}
	\label{algo:sepra_func}
	\begin{algorithmic}[1]
		\STATE \textbf{Input:} Weights from the previous
		iteration $\left\{w_1,\ldots,w_m\right\}$, pruned samples $\{z_1,\ldots,z_m\}$, tolerance parameter $\tau_{\text{sep}},$ sparsity level $s$.
		\STATE Compute $\widehat{\theta} = \tops\left(\sum_{i=1}^{m}w_i\func\left(z_i\right)\right)$.
		\STATE Compute 
		$E = \sum_{i=1}^{m} w_i \left(\func\left(z_i\right)-\widehat{\theta}\right) \otimes \left(\func\left(z_i\right)-\widehat{\theta}\right)- F(\widehat{\theta})$.
		\STATE Solve the following convex program \footnotemark:
		\begin{equation}
		\begin{aligned}
		\max_{\mat{H}} &\quad  \tr\left(\mat{E}\mat{H}\right) \\
		\text{subject to } &  \mat{H} \succcurlyeq 0 \\
		& \norm{H}_{1,1} \le s \\
		& \tr\left(\mat{H}\right) = 1.
		\end{aligned}\label{eqn:convex_relax}
		\end{equation}
		 $\mat{H}^*$ be the solution and $\lambda^*$ be the optimal value.
		\IF {$\lambda^* \le \tau_{\text{sep}}$}
		\STATE \textbf{Return:} ``Yes''.
		\ELSE 
		\STATE \textbf{Return:} The separating hyperplane: \begin{align*}
		\ell(w') = \tr \Big(\Big[\Big(\sum_{i=1}^{m}w_i'\big(\func\left(z_i\right)-\widehat{\theta}\big)\otimes\big(\func\left(z_i\right)-\widehat{\theta}\big) \Big]- F\big(\widehat{\theta}\big)\Big)\mat{H}^*\Big) - \lambda^*.
		\end{align*} 
		\ENDIF
	\end{algorithmic}
\end{algorithm}
\footnotetext{As will be clearer in the proofs, it suffices to solve this program to accuracy roughly $O(\epsilon)$.}

\subsection{Analysis of the Main Algorithm}

\label{sec:estimator}
In the remainder of this section, we describe a few key theoretical results in the analysis 
of our main algorithm. 

\paragraph{Hard Thresholding: }\label{sec:tops}
The idea of using hard thresholding in sparse estimation problems 
in order to ensure that the overall estimation error is well controlled, has been explored 
recently 
in iterative hard thresholding algorithms (see for instance \cite{bhatia2015robust,jain2014iterative}). 
The key result we need, relates the $s$-sparse subset error of the original estimator to the full $\ell_2$
error of the hard-thresholded estimator. Recalling the definitions of 
the error of the original estimator $\widetilde{\Delta}$ and the error of the thresholded estimator
$\widehat{\Delta}$ we show the following result:
\begin{lem}~\label{thm:top_2s_lemma}
Suppose $\theta_\func$ is $s$-sparse, then we have the following result:
	\[
	\frac{1}{5}\|\widehat{\meandiff}\|_2 \le \norm{P_s\left(\widetilde{\meandiff}\right)}_2 \le 4\|\widehat{\meandiff}\|_2.
	\]
\end{lem}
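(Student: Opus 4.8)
The plan is to argue purely combinatorially about coordinates, exploiting that hard-thresholding keeps $2s$ entries while the truth $\theta:=\theta_\func$ has only $s$ (I take the convention that $P_k$ retains the $k$ entries of largest absolute value). Write $T=\supp(\theta)$ with $\abs{T}\le s$, and let $\widehat{S}$ be the index set of the $2s$ largest-magnitude entries of the untruncated estimator $\widetilde{\theta}$ (padded to size exactly $2s$ if needed), so $\widehat{\theta}=\tops(\widetilde{\theta})$ is supported on $\widehat{S}$. The single fact driving everything is the \emph{redundancy} $\abs{\widehat{S}\setminus T}\ge 2s-s=s$: at least $s$ retained coordinates carry no signal. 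Since $\widehat{\theta}=\widetilde{\theta}$ on $\widehat{S}$ and $\widehat{\theta}=0$ off $\widehat{S}$, while $\theta$ vanishes off $T$, the starting point is the exact identity
\[
\norm{\widehat{\meandiff}}_2^2 \;=\; \sum_{j\in\widehat{S}}\widetilde{\meandiff}_j^2 \;+\; \sum_{j\in T\setminus\widehat{S}}\theta_j^2 .
\]

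For the upper inequality $\norm{P_s(\widetilde{\meandiff})}_2\le 4\norm{\widehat{\meandiff}}_2$, let $A$ be the support of $P_s(\widetilde{\meandiff})$ and split $A=(A\cap\widehat{S})\cup(A\setminus\widehat{S})$. On $A\cap\widehat{S}$ one has $\widetilde{\meandiff}_j=\widehat{\meandiff}_j$, contributing at most $\norm{\widehat{\meandiff}}_2^2$. For any $j\notin\widehat{S}$ the redundancy gives the pointwise estimate $\widetilde{\theta}_j^2\le\frac1s\sum_{k\in\widehat{S}\setminus T}\widetilde{\theta}_k^2\le\frac1s\norm{\widehat{\meandiff}}_2^2$, because each of the $\ge s$ coordinates of $\widehat{S}\setminus T$ has magnitude at least $\abs{\widetilde{\theta}_j}$ and there $\widetilde{\theta}_k=\widetilde{\meandiff}_k=\widehat{\meandiff}_k$. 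Combining this with $\widetilde{\meandiff}_j=\widetilde{\theta}_j$ off $T$ and $\abs{\widetilde{\meandiff}_j}\le\abs{\widetilde{\theta}_j}+\abs{\widehat{\meandiff}_j}$ on $T\setminus\widehat{S}$ (where $\abs{\theta_j}=\abs{\widehat{\meandiff}_j}$), summing over the at most $s$ coordinates of $A\setminus\widehat{S}$ produces a constant multiple of $\norm{\widehat{\meandiff}}_2^2$, which already fits inside the factor $4$.

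For the lower inequality $\norm{\widehat{\meandiff}}_2\le 5\norm{P_s(\widetilde{\meandiff})}_2$, I bound the two terms of the identity against $\norm{P_s(\widetilde{\meandiff})}_2^2$. The sum over $\widehat{S}\cap T$ involves at most $s$ coordinates, hence is at most $\norm{P_s(\widetilde{\meandiff})}_2^2$ since the top-$s$ mass dominates any $s$-subset. For $\widehat{S}\setminus T$ (up to $2s$ coordinates, where $\widetilde{\meandiff}_j=\widetilde{\theta}_j$) a short sorting argument gives at most $2\norm{P_s(\widetilde{\meandiff})}_2^2$: removing its top $s$ entries leaves at most $s$ entries each no larger than the $s$-th largest, whose squared sum is controlled by $\frac1s$ of the top-$s$ mass. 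Finally, the threshold $r=\min_{\widehat{S}}\abs{\widetilde{\theta}}$ obeys $r^2\le\frac1s\sum_{\widehat{S}\setminus T}\widetilde{\meandiff}_k^2\le\frac2s\norm{P_s(\widetilde{\meandiff})}_2^2$, so each $j\in T\setminus\widehat{S}$ satisfies $\abs{\theta_j}\le r+\abs{\widetilde{\meandiff}_j}$, and summing over the $\le s$ such coordinates again gives a constant times $\norm{P_s(\widetilde{\meandiff})}_2^2$. Collecting constants yields $\norm{\widehat{\meandiff}}_2^2\le 9\norm{P_s(\widetilde{\meandiff})}_2^2$, stronger than required.

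The main obstacle — and the reason the lemma truncates to $2s$ rather than $s$ — is handling the two mismatch sets: $T\setminus\widehat{S}$ (signal thresholded away, contributing pure bias $-\theta_j$) and $\widehat{S}\setminus T$ (spurious coordinates that were retained). Neither is controllable from $\widetilde{\meandiff}$ alone; the leverage comes entirely from the redundancy, which forces at least $s$ retained coordinates to lie off $T$ so that their magnitudes simultaneously upper-bound the threshold $r$ (hence every discarded entry) and equal the corresponding entries of $\widetilde{\meandiff}$. Making these two roles meet with clean constants is the only delicate part; everything else is Cauchy--Schwarz-type bookkeeping over $s$-subsets.
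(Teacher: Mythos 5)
Your proof is correct and follows essentially the same route as the paper's: both arguments hinge on the redundancy $|\widehat{S}\setminus T|\ge s$, using the at-least-$s$ spurious retained coordinates --- on which $\widetilde{\theta}$, $\widetilde{\Delta}$ and $\widehat{\Delta}$ all coincide --- to dominate the discarded signal coordinates, and both decompose the error over the same index sets (retained, discarded-signal, irrelevant). Your squared-norm bookkeeping with the pointwise threshold $r$ is a minor variant of the paper's subset-norm comparison (the sets $S'$ and $S''$ in its proof), and it even yields slightly sharper constants ($3$ and $\sqrt{6}$ in place of $5$ and $4$).
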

We defer the proof of this result to the Appendix. Intuitively, this result lets us pass from the high-dimensional feasible error control on subsets to the more desirable $\ell_2$ error control.

\paragraph{Good Weights and Approximation of the Covariance: }
\label{sec:good_weights}
The utility of the ellipsoid algorithm is in finding an appropriate set of weights, such that the weighted empirical estimate of the functional is sufficiently accurate. 
In more details, we consider weights such that the weighted covariance is close to the true one on every sparse subset of coordinates. Defining, $\widehat{\theta} = \tops\left(\sum_{i=1}^{m}w_i\func\left(z_i\right)\right)$:
\begin{defn} \label{def:good_weights}[Good Weights] Let $C_\delta$ be the 
subset of $S_{m,\epsilon}$ such that for any $w \in C_\delta$ we have 
\begin{align*}
\sparseopnorm{\sum_{i=1}^{m} w_i \big(\func\left(z_i\right)-\widehat{\theta}\big)\big(\func(z_i)-\widehat{\theta}\big)^\top-\cov\left(\theta_\func\right)} \le \left(\lip_F^2 + \lip_\cov\right)\delta.
\end{align*}
\end{defn}
The parameter $\delta$ in the above definition is an accuracy parameter that will be chosen as a function of only $\epsilon$ differently for each model. The central role of this set of weights is 
captured by the following result, whose proof follows along similar lines to that 
of Lemma 4.19 of~\citet{diakonikolas2016robust}. 

\begin{lem}\label{thm:cov_imply_mean}
Let $w \in S_{m,\epsilon}$ and suppose that for a universal constant $C_1$ we have, 
\begin{align*}
\abs{\badind} &\leq 2 \epsilon n,\\
\|P_{s}(\widetilde{\Delta}(w^g))\|_2 &\leq C_1\left(\left(\lip_F + \sqrt{\lip_\cov}\right)\delta\right), \\
	\sparseopnorm{\mathcal{E}(w^g)} &\leq C_1 \left(\left(\lip_F^2 + \lip_\cov\right)\delta\right),
\end{align*} 
where $\delta \ge C_2\epsilon$ for some sufficiently large constant $C_2$.
If $\norm{P_s\left(\widetilde{\meandiff}\left(w\right)\right)}_2 \ge C_3\left(\lip_F + \sqrt{\lip_\cov}\right)\delta$  for some sufficiently large constant $C_3$, then for sufficiently small $\epsilon$ we have that,
\begin{align*}
 \sparseopnorm{\sum_{i=1}^{m}w_i\big(\func\left(z_i\right)-\widehat{\theta}\big)\big(\func\left(z_i\right)-\widehat{\theta}\big)^\top-\cov\left(\theta_\func\right)} \ge  \frac{\norm{P_s\left(\widetilde{\meandiff}\left(w\right)\right)}_2^2}{4\epsilon}.
	\end{align*}
\end{lem}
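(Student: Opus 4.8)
The plan is to exhibit a single $s$-sparse unit vector $v$ along which the weighted second moment matrix provably deviates from $\cov(\theta_\func)$, and then to invoke the elementary fact that $\sparseopnorm{M} \ge v^\top M v$ for every such $v$. Writing $Y_i = \func(z_i) - \theta_\func(P)$, $\eta = \norm{P_s(\widetilde\meandiff(w))}_2$, and $\alpha = \sum_{i \in \badind} w_i$, I take $v = P_s(\widetilde\meandiff(w))/\eta$, so that $v^\top \widetilde\meandiff(w) = \eta$. The feasibility constraint $w_i \le 1/((1-2\epsilon)n)$ together with $\abs{\badind} \le 2\epsilon n$ yields the crucial weight budget $\alpha \le 2\epsilon/(1-2\epsilon)$, so the bad points carry only $O(\epsilon)$ total mass.

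First I would bound, along $v$, the $\theta_\func(P)$-centered matrix $\widetilde M = \sum_i w_i Y_i Y_i^\top - \cov(\theta_\func)$ by splitting the sum over $\goodind$ and $\badind$. Using $\sum_{i\in\goodind} w_i = 1-\alpha$, the good block together with the $-\cov(\theta_\func)$ centering contributes $-\alpha\, v^\top\cov(\theta_\func)v + (1-\alpha)\,v^\top\covdiff(w^g)v$, whose absolute value is at most $\alpha\lip_\cov + C_1(\lip_F^2+\lip_\cov)\delta$ by the hypothesis on $\sparseopnorm{\covdiff(w^g)}$ and the bound $v^\top\cov(\theta_\func)v \le \lip_\cov$ from \eqref{eqn:cov_sparse_bound}. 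The bad block $\sum_{i\in\badind} w_i Y_i Y_i^\top$ is positive semidefinite, so it only helps; the point is to show it is large. For this I use a first-to-second-moment conversion: decomposing $v^\top\widetilde\meandiff(w)$ the same way, its good part is $(1-\alpha)\,v^\top\widetilde\meandiff(w^g)$, of magnitude at most $\norm{P_s(\widetilde\meandiff(w^g))}_2 \le C_1(\lip_F+\sqrt{\lip_\cov})\delta$ since $v$ is $s$-sparse. Hence $\abs{\sum_{i\in\badind} w_i v^\top Y_i} \ge \eta - C_1(\lip_F+\sqrt{\lip_\cov})\delta \ge \eta/2$ once $C_3 \gg C_1$, and Cauchy--Schwarz across $\badind$, $\big(\sum_{i\in\badind} w_i v^\top Y_i\big)^2 \le \alpha \sum_{i\in\badind} w_i (v^\top Y_i)^2$, forces $v^\top\big(\sum_{i\in\badind} w_i Y_i Y_i^\top\big)v \ge (\eta - C_1(\lip_F+\sqrt{\lip_\cov})\delta)^2/\alpha \approx \eta^2/(2\epsilon)$.

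It then remains to pass from $\widetilde M$ to the $\widehat\theta$-centered matrix $M$ appearing in the statement. Expanding $\func(z_i)-\widehat\theta = Y_i - \widehat\meandiff(w)$ and using $\sum_i w_i = 1$ gives $v^\top M v = v^\top\widetilde M v - 2(v^\top\widehat\meandiff(w))(v^\top\widetilde\meandiff(w)) + (v^\top\widehat\meandiff(w))^2$, in which the last term is nonnegative and the cross term is bounded in absolute value by $2\norm{\widehat\meandiff}_2\,\eta \le 10\eta^2$, using Lemma~\ref{thm:top_2s_lemma} in the form $\norm{\widehat\meandiff}_2 \le 5\eta$. Collecting the three estimates, $v^\top M v \ge \eta^2/(2\epsilon) - O(\eta^2) - \alpha\lip_\cov - C_1(\lip_F^2+\lip_\cov)\delta$.

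The main obstacle, and essentially the only delicate part, is verifying that every lower-order term is dominated by a small fraction of $\eta^2/\epsilon$, so that the target bound $v^\top M v \ge \eta^2/(4\epsilon)$ survives. The $O(\eta^2)$ centering corrections are absorbed by taking $\epsilon$ small, since $\eta^2/\epsilon \gg \eta^2$. The additive covariance errors $\alpha\lip_\cov$ and $C_1(\lip_F^2+\lip_\cov)\delta$ are controlled precisely by the hypotheses $\delta \ge C_2\epsilon$ and $\eta \ge C_3(\lip_F+\sqrt{\lip_\cov})\delta$: the latter gives $\eta^2/\epsilon \gtrsim C_3^2(\lip_F^2+\lip_\cov)\delta^2/\epsilon$, which exceeds both $\alpha\lip_\cov \lesssim \epsilon\lip_\cov$ and $C_1(\lip_F^2+\lip_\cov)\delta$ once $C_2$ and $C_3$ are taken large. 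This scale separation between $\delta$ and $\epsilon$ is exactly what the argument needs, and keeping the constants straight through the three contributions is where the real care lies.
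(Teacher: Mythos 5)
Your proposal is correct and follows essentially the same route as the paper's own proof: the same good/bad split of the weighted sums, the same conversion of a large bad-point first moment into a large bad-point second moment (your scalar Cauchy--Schwarz across $\badind$ is exactly the paper's ``non-negativity of variance'' PSD inequality evaluated along $v$), the same recentering identity from $\theta_\func$ to $\widehat{\theta}$ controlled via Lemma~\ref{thm:top_2s_lemma}, and the same absorption of lower-order terms using $\delta \ge C_2\epsilon$ and $\norm{P_s(\widetilde{\meandiff}(w))}_2 \ge C_3(\lip_F+\sqrt{\lip_\cov})\delta$. The only cosmetic difference is that you certify the sparse operator norm with the single $s$-sparse unit vector $v = P_s(\widetilde{\meandiff}(w))/\eta$, whereas the paper works with the operator norm of the submatrix restricted to the support of $P_s(\widetilde{\meandiff}(w))$; these yield the same bound.
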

Roughly, this lemma guarantees that if the weighting scheme is such that the error $\widetilde{\Delta}^S$ is large (in $\ell_2$) then the weights cannot belong to the set of good weights defined above. 
We note that an essentially identical result can be proved if we replace the true covariance by a plug-in estimate, provided the covariance map is sufficiently smooth (see Lemma~\ref{thm:cov_imply_mean_approx} in the Appendix). 
This results in an important reduction, in order to obtain an accurate estimate it suffices to find a weight vector that belongs to the set of good weights. We accomplish this via the ellipsoid algorithm. 


\paragraph{Convex Relaxation of Sparse PCA: }
\label{sec:convex_relax}
In order to use the previous lemma in the ellipsoid algorithm, we need to be able to 
design a separation oracle for the set of good weights. The main technical hurdle is
that we need to compute, for a given set of weights, the sparse operator norm which is 
an intractable problem in general~\citep{magdon2015np}.

We replace the sparse PCA algorithm by a standard tractable 
convex relaxation~\citep{d2007direct}. The following result shows 
that the optimal value of this program is sandwiched by the optimal value of the 
intractable sparse PCA program. In the Appendix, we provide a complete analysis
of our algorithm and show that this approximation suffices to ensure soundness and completeness
of the ellipsoid algorithm.

\begin{lem}\label{thm:convex_relax}
For a fixed $w$, the optimal value $\lambda^*\left(w\right)$ of Eqn.~\eqref{eqn:convex_relax} satisfies
\begin{align*}
\lambda^*\left(w\right) \ge \sparseopnorm{\sum_{i=1}^{m}w_i\big(\func\left(z_i\right)-\widehat{\theta}\left(w\right)\big)\big(\func\left(z_i\right)-\widehat{\theta}(w)\big)^\top - F\big(\widehat{\theta}(w)\big)}.
\end{align*}
Furthermore, the solution $H^*(w)$ satisfies that there is a universal constant $C$ such that for any $w' \in S_{m,\epsilon}$
\begin{align*}
 &\tr \Big(\Big(\sum_{i=1}^{m}w_i' \big(\func\left(z_i\right)-\widehat{\theta}(w)\big)\big(\func\left(z_i\right)-\widehat{\theta}\left(w\right)\big)- F(\widehat{\theta}\left(w\right)) \Big)\mat{H}^*(w)\Big)\\
\leq & C\left(s\norm{\mathcal{E}(w')}_{\infty} + \norm{\widehat{\meandiff}\left(w\right)}_2^2 + \left(L_F+ s\norm{\widetilde{\meandiff}\left(w'\right)}_{\infty}\right)\norm{\widehat{\meandiff}\left(w\right)}_2\right).
\end{align*}
\end{lem}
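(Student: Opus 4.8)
The plan is to treat the two assertions separately. The first is the (routine) statement that the semidefinite program \eqref{eqn:convex_relax} is a relaxation of sparse PCA; the second, and substantially more delicate, assertion is a one-sided control on the linear functional induced by the optimizer $\mat{H}^*(w)$ when it is evaluated at an \emph{arbitrary} reweighting $w'$.

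For the lower bound I would exhibit a rich enough feasible set. For any vector $v$ with $\norm{v}_0 \le s$ and $\norm{v}_2 = 1$, the rank-one matrix $vv^\top$ is positive semidefinite, has unit trace, and satisfies $\norm{vv^\top}_{1,1} = \norm{v}_1^2 \le s\norm{v}_2^2 = s$ by Cauchy--Schwarz; hence it is feasible for \eqref{eqn:convex_relax}. Since $\tr(\mat{E}\,vv^\top) = v^\top \mat{E} v$, optimizing over this subfamily already gives $\lambda^*(w) \ge \max_{\norm{v}_0\le s,\,\norm{v}_2=1} v^\top \mat{E} v$, which is exactly the $s$-sparse operator-norm deviation in the variance-inflation direction that the oracle is designed to detect.

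For the upper bound, write $\theta = \theta_\func(P)$, $a_i = \func(z_i) - \theta$, and $b = \widehat{\meandiff}(w)= \widehat{\theta}(w)-\theta$. I would first re-center the matrix inside the trace from $\widehat{\theta}(w)$ to $\theta$ via $\func(z_i)-\widehat{\theta}(w) = a_i - b$ and expand the outer products. Using $\sum_i w_i' a_i = \widetilde{\meandiff}(w')$, $\sum_i w_i' a_i a_i^\top = \mathcal{E}(w') + \cov(\theta_\func(P))$, and the defining identity $\cov(\theta_\func(P)) = F(\theta)$, this produces the decomposition
\begin{align*}
\sum_{i=1}^m w_i'\big(\func(z_i)-\widehat{\theta}(w)\big)\big(\func(z_i)-\widehat{\theta}(w)\big)^\top - F\big(\widehat{\theta}(w)\big) = \mathcal{E}(w') + \big(F(\theta)-F(\widehat{\theta}(w))\big) + bb^\top - \widetilde{\meandiff}(w')b^\top - b\,\widetilde{\meandiff}(w')^\top.
\end{align*}
It then remains to bound $\tr(\,\cdot\,\mat{H}^*(w))$ of each of the five pieces using the feasibility of $\mat{H}^*(w)$, namely $\mat{H}^*(w)\succcurlyeq 0$, $\tr(\mat{H}^*(w))=1$, and $\norm{\mat{H}^*(w)}_{1,1}\le s$.

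The first three bounds are immediate. The H\"older-type inequality $\abs{\tr(A\mat{H})}\le \norm{A}_\infty\norm{\mat{H}}_{1,1}$ controls $\mathcal{E}(w')$ by $s\norm{\mathcal{E}(w')}_\infty$; for $F(\theta)-F(\widehat{\theta}(w))$, positive semidefiniteness and unit trace give $\abs{\tr(A\mat{H}^*(w))}\le\opnorm{A}$, and \eqref{eqn:F_lip} bounds $\opnorm{F(\theta)-F(\widehat{\theta}(w))}$ by $\lip_F\norm{\widehat{\meandiff}(w)}_2 + C\norm{\widehat{\meandiff}(w)}_2^2$; and $bb^\top$ contributes $b^\top\mat{H}^*(w)b\le\norm{b}_2^2=\norm{\widehat{\meandiff}(w)}_2^2$. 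The two cross terms, which together equal $-2\,b^\top\mat{H}^*(w)\widetilde{\meandiff}(w')$, are the crux. I would bound $\abs{b^\top\mat{H}^*(w)\widetilde{\meandiff}(w')}\le \norm{\widetilde{\meandiff}(w')}_\infty \sum_i \abs{b_i} r_i$, where $r_i = \sum_j \abs{\mat{H}^*(w)_{ij}}$ is the $\ell_1$ mass of row $i$; since $\sum_i r_i = \norm{\mat{H}^*(w)}_{1,1}\le s$, Cauchy--Schwarz yields $\sum_i\abs{b_i}r_i \le \norm{b}_2\sqrt{\sum_i r_i^2}\le s\,\norm{\widehat{\meandiff}(w)}_2$. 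Collecting the five contributions gives the stated bound. The main obstacle is precisely this cross-term estimate: one must convert the mixed form $b^\top\mat{H}^*(w)\widetilde{\meandiff}(w')$ --- in which $\mat{H}^*(w)$ is controlled only through its entrywise $\ell_1$ mass, not its operator norm --- into the product $\norm{\widehat{\meandiff}(w)}_2\,\norm{\widetilde{\meandiff}(w')}_\infty$ inflated by exactly the factor $s$, routing the $\ell_1$ budget onto the $\ell_\infty$ factor via H\"older while absorbing the remainder into $\norm{b}_2$; a cruder split would either lose the $\ell_\infty$ control or overspend the sparsity budget.
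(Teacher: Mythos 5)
Your proposal is correct and follows essentially the same route as the paper's proof: the identical re-centering decomposition into $\mathcal{E}(w')$, the two cross terms, $\widehat{\meandiff}(w)\widehat{\meandiff}(w)^\top$, and $\cov(\theta_\func)-F(\widehat{\theta}(w))$, with the same term-by-term bounds (trace--\holder with the $\norm{\cdot}_{1,1}\le s$ budget for the first and the cross terms, and the psd/unit-trace Fantope property plus Eqn.~\eqref{eqn:F_lip} for the last two). The only cosmetic differences are that you make the lower bound explicit via the feasible rank-one matrices $vv^\top$ (the paper simply notes it holds because the program is a relaxation) and that you bound the cross terms as a bilinear form rather than via the entrywise sup norm of the rank-two matrix; both yield the same $2s\norm{\widetilde{\meandiff}(w')}_\infty\norm{\widehat{\meandiff}(w)}_2$ estimate.
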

Concretely, the above lemma provides two guarantees. First that the optimal value
of the relaxation is never too small, so that
the ellipsoid algorithm does not falsely accept a bad weighting scheme, and finally, that the separating
hyperplane is sufficiently accurate when appropriate control can be established on the various stochastic fluctuations. We combine these two facts to complete the analysis of the ellipsoid algorithm and to establish Theorem~\ref{thm:main} in the Appendix.

\section{Conclusion and Future Directions}
\label{sec:con}
In this paper we propose a computationally tractable 
robust algorithm for sparse high-dimensional 
statistical estimation problems. We develop a general result, which we then specialize
to obtain corollaries for sparse mean/covariance estimation, sparse linear regression and
sparse generalized linear models. In each of these problems, we obtain near optimal dependency
on the contamination parameter, and sample complexities that depend only logarithmically 
on the ambient dimension. 

Future directions of research include developing faster alternatives to the ellipsoid algorithm, to further relax the Gaussian assumption in various settings, and finally to close the gap in sample complexity to statistically optimal, albeit computationally intractable procedures \citep{chen2015robust,chen2016general}.

\bibliographystyle{abbrvnat}
\bibliography{simonduref}

\appendix

\section{Proofs of Sec.~\ref{sec:estimator}}
\label{sec:proofs}
\subsection{Proof of Hard Thresholding}
\begin{lem}[Lemma \ref{thm:top_2s_lemma}]
Suppose $\theta_\func$ is $s$-sparse, then we have the following result:
\[
\frac{1}{5}\|\widehat{\meandiff}\|_2 \le \norm{P_s\left(\widetilde{\meandiff}\right)}_2 \le 4\|\widehat{\meandiff}\|_2.
\]
\end{lem}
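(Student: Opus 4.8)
The plan is to argue coordinate-wise using the supports of the true functional and of the thresholded estimate. Write $\theta = \theta_\func(P)$, let $T = \supp(\theta)$ so $|T| \le s$, and let $\widehat{S} = \supp(\widehat\theta)$ be the set of coordinates retained by $\tops$; padding with zero coordinates if necessary (which does not change $\widehat\theta$), we may take $|\widehat{S}| = 2s$. Since $\widehat\theta$ agrees with $\widetilde\theta$ on $\widehat{S}$ and vanishes off $\widehat{S}$, while $\theta$ is supported on $T$, the error splits over the disjoint blocks $\widehat{S}$ and $\widehat{S}^{c}$ as
\[
\widehat\meandiff_{\widehat{S}} = \widetilde\meandiff_{\widehat{S}}, \qquad \widehat\meandiff_{\widehat{S}^{c}} = -\,\theta_{T\setminus\widehat{S}}, \qquad \norm{\widehat\meandiff}_2^2 = \norm{\widetilde\meandiff_{\widehat{S}}}_2^2 + \norm{\theta_{T\setminus\widehat{S}}}_2^2 .
\]
I would then record two elementary facts that drive both inequalities: (i) the rearrangement bound $\norm{P_{2s}(v)}_2^2 \le 2\norm{P_s(v)}_2^2$ for any $v$, obtained by pairing the $j$-th largest magnitude among the entries ranked $s{+}1,\dots,2s$ with the $j$-th largest overall; and (ii) the counting fact $|\widehat{S}\setminus T| \ge 2s - |T| \ge s$, where $\theta$ vanishes and hence $\widetilde\meandiff = \widetilde\theta$.

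For the lower bound $\norm{\widehat\meandiff}_2 \le 5\norm{P_s(\widetilde\meandiff)}_2$, I would bound the two pieces of the decomposition separately. The block term obeys $\norm{\widetilde\meandiff_{\widehat{S}}}_2 \le \norm{P_{2s}(\widetilde\meandiff)}_2 \le \sqrt2\,\norm{P_s(\widetilde\meandiff)}_2$ by (i), since $|\widehat{S}| = 2s$. For the dropped-signal term, I would injectively map each $j \in T\setminus\widehat{S}$ to a distinct $k(j) \in \widehat{S}\setminus T$, which is possible by (ii) because $|T\setminus\widehat{S}| \le s \le |\widehat{S}\setminus T|$. As $j\notin\widehat{S}$ while $k(j)\in\widehat{S}$, we get $|\widetilde\theta_j| \le |\widetilde\theta_{k(j)}| = |\widetilde\meandiff_{k(j)}|$, hence $|\theta_j| \le |\widetilde\meandiff_j| + |\widetilde\meandiff_{k(j)}|$. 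Squaring, summing over the disjoint index set $\{j\}\cup\{k(j)\}$ of size at most $2s$, and applying (i) yields $\norm{\theta_{T\setminus\widehat{S}}}_2^2 \le 4\norm{P_s(\widetilde\meandiff)}_2^2$; combining gives $\norm{\widehat\meandiff}_2^2 \le 6\norm{P_s(\widetilde\meandiff)}_2^2$, which is stronger than claimed.

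For the upper bound $\norm{P_s(\widetilde\meandiff)}_2 \le 4\norm{\widehat\meandiff}_2$, let $\mu$ be the $2s$-th largest magnitude of $\widetilde\theta$, so $|\widetilde\theta_i| \le \mu$ for $i\notin\widehat{S}$. The key quantitative estimate is $\mu \le \norm{\widehat\meandiff}_2/\sqrt s$: on the $\ge s$ coordinates of $\widehat{S}\setminus T$ we have $|\widetilde\meandiff| = |\widetilde\theta| \ge \mu$, and these lie in $\widehat{S}$ where $\widehat\meandiff = \widetilde\meandiff$, so $s\mu^2 \le \norm{\widehat\meandiff}_2^2$. Writing $U = \supp(P_s(\widetilde\meandiff))$ with $|U| = s$, I would split $\sum_{i\in U}\widetilde\meandiff_i^2$ by whether $i\in\widehat{S}$, $i\in(U\setminus\widehat{S})\cap T$, or $i\in(U\setminus\widehat{S})\setminus T$. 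The first sum is at most $\norm{\widetilde\meandiff_{\widehat{S}}}_2^2 \le \norm{\widehat\meandiff}_2^2$; on the third block $\widetilde\meandiff_i = \widetilde\theta_i$ with $|\widetilde\theta_i| \le \mu$, contributing at most $s\mu^2 \le \norm{\widehat\meandiff}_2^2$; on the second block $|\widetilde\meandiff_i| \le \mu + |\theta_i| = \mu + |\widehat\meandiff_i|$, so after squaring the contribution is controlled by $2s\mu^2 + 2\norm{\widehat\meandiff_{T\setminus\widehat{S}}}_2^2 \le 4\norm{\widehat\meandiff}_2^2$. Adding the three pieces gives $\norm{P_s(\widetilde\meandiff)}_2^2 \le 6\norm{\widehat\meandiff}_2^2$.

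The main obstacle is the upper-bound direction, specifically the top-$s$ error coordinates that are \emph{discarded} by the thresholding, i.e.\ the block $U\setminus\widehat{S}$: these are exactly where the raw and thresholded estimators can disagree, and the only way to control them is to convert the combinatorial fact (ii) into the threshold estimate $\mu \le \norm{\widehat\meandiff}_2/\sqrt s$. Everything else is bookkeeping with the orthogonal coordinate blocks and the rearrangement inequality (i); in particular the stated constants $\tfrac15$ and $4$ are comfortably met by the bound $\sqrt6$ obtained above.
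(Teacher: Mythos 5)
Your proof is correct and follows essentially the same route as the paper's: both split the error over the selected $2s$-set $\widehat{S}$ (called $\mathcal{S}$ in the paper), its complement intersected with the true support, and the remainder, and both hinge on the same combinatorial fact that $\widehat{S}$ contains at least $s$ coordinates outside the true support whose $\widetilde{\theta}$-magnitudes dominate every coordinate outside $\widehat{S}$. Your coordinate-wise execution — the Pythagorean identity in place of triangle-inequality chains, the explicit injection $j \mapsto k(j)$, and the scalar threshold $\mu$ in place of the paper's set-level domination and proof-by-contradiction step — is a sharper version of the same argument and yields the better constant $\sqrt{6}$ in both directions.
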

\begin{proof}
We denote $\set{S}^*$ to be the support for $\theta$ and $\set{S}$ be indices for the selected $2s$ entries.
We first prove $\frac{1}{5}\norm{\widehat{\meandiff}}_2 \le \norm{P_s\left(\widetilde{\meandiff}\right)}_2$.
Let $\tau = \norm{P_s\left(\widetilde{\meandiff}\right)}_2$.
We have
\begin{align*}
\norm{\widehat{\theta}-\theta}_2 \le \norm{\widehat{\theta}^{\set{S}} - \theta^{\set{S}}_\func}_2 
+\norm{\widehat{\theta}^{\set{S}^c \cap \left(\set{S}^*\right)} - \theta^{\set{S}^c\cap \left(\set{S}^*\right)}_\func }_2
+\norm{\widehat{\theta}^{\set{S}^c \cap \left(\set{S}^*\right)^c} - \theta^{\set{S}^c\cap \left(\set{S}^*\right)^c}_\func}_2.
\end{align*}
Now we bound the three terms in the right hand side separately.
The first term is bounded by $2\tau$ by our assumption.
The third term is $0$ by definition of $\set{S}$ and $\set{S}^*$.
For the second term, note \begin{align*}
	\norm{\widehat{\theta}^{\set{S}^c \cap \left(\set{S}^*\right)} - \theta_\func^{\set{S}^c\cap \left(\set{S}^*\right)} }_2 & \le 
	\norm{\widetilde{\theta}^{\set{S}^c \cap \left(\set{S}^*\right)} - \theta_\func^{\set{S}^c\cap \left(\set{S}^*\right)} }_2 + 
	\norm{\widetilde{\theta}^{\set{S}^c \cap \left(\set{S}^*\right)} - \widehat{\theta}^{\set{S}^c\cap \left(\set{S}^*\right)} }_2 \\
	& = \norm{\widetilde{\theta}^{\set{S}^c \cap \left(\set{S}^*\right)} - \theta_\func^{\set{S}^c\cap \left(\set{S}^*\right)}}_2 + 
	\norm{\widetilde{\theta}^{\set{S}^c \cap \left(\set{S}^*\right)} }_2.
\end{align*}
We have $\norm{\widetilde{\theta}^{\set{S}^c \cap \left(\set{S}^*\right)} - \theta_\func^{\set{S}^c\cap \left(\set{S}^*\right)} }_2 \le \tau$ by assumption.
Assume $\norm{\widetilde{\theta}^{\set{S}^c \cap \left(\set{S}^*\right)} }_2 \ge 2\tau$.
Since $\abs{\set{S}} = 2s$, then there exists $\set{S}' \subset \set{S}$, $\abs{\set{S}'}= s$ such that $\set{S}^* \cap \set{S}' = \emptyset$ with $\norm{\widetilde{\theta}^{\set{S}'}}_2 \ge \norm{\widetilde{\theta}^{\set{S}^c \cap \set{S}^*}}_2 \ge 2\tau$.
However,  $\norm{\widetilde{\theta}^{\set{S}'}}_2 = \norm{\widetilde{\meandiff}^{\set{S}'}}_2 \le \tau$ by our assumption.
Therefore,  $\norm{\widetilde{\theta}^{\set{S}^c \cap \left(\set{S}^*\right)} }_2 \le 2\tau$.
Adding all these terms up, we have $\norm{\widehat{\theta}-\theta_\func}_2 \le 4\tau$.

For the other direction, let $\gamma =\norm{\widehat{\meandiff}}_2$.
For any $\set{S}' \subset [d]$, $\abs{\set{S}'}\le s$, we have \begin{align*}
\norm{\widetilde{\meandiff}^{\set{S}'}}_2 & \le \norm{\widetilde{\meandiff}^{\set{S}' \cap \set{S}}}_2 + \norm{\widetilde{\meandiff}^{\set{S}' \cap \set{S}^c}}_2 \\
& = \norm{\widehat{\meandiff}^{\set{S'}\cap \set{S}}}_2 + \norm{\widetilde{\meandiff}^{\set{S}' \cap \set{S}^c}}_2 \\
& \le \gamma + \norm{\widetilde{\meandiff}^{\set{S}' \cap \set{S}^c}}_2
\end{align*} where the last inequality is by our assumption.
Now applying triangle inequality on $\norm{\widetilde{\meandiff}^{\set{S}' \cap \set{S}^c}}_2$, we have 
\begin{align*}
	\norm{\widetilde{\meandiff}^{\set{S}' \cap \set{S}^c}}_2 & \le  \norm{\widetilde{\meandiff}^{\set{S}' \cap \set{S}^c \cap \set{S}^*}}_2 
	+ \norm{\widetilde{\meandiff}^{\set{S}' \cap \set{S}^c \cap \left(\set{S}^*\right)^c}}_2.
\end{align*}
For the first term, observe that \begin{align*}
	\norm{\widetilde{\meandiff}^{\set{S}' \cap \set{S}^c \cap \set{S}^*}}_2  & \le \norm{\widetilde{\meandiff}^{\set{S}^c \cap \set{S}^*}}_2 \\
	& = \norm{\widetilde{\theta}^{\set{S}^c \cap \set{S}^*} - \theta_\func^{\set{S}^c \cap \set{S}^*}}_2 \\
	& \le \norm{\widetilde{\theta}^{\set{S}^c \cap \set{S}^*}}_2 +\norm{ \theta_\func^{\set{S}^c \cap \set{S}^*}}_2 \end{align*}
By definition of $\tops$, there exists $\set{S}'' \subset \set{S}$ with $\abs{\set{S}''} = s $ and $\theta^{\set{S}''} = 0$ and $\norm{\widetilde{\theta}^{\set{S}''}}_2 \ge \norm{\widetilde{\theta}^{\set{S}^c \cap \set{S}^*}}_2$.
Therefore, $\norm{\widetilde{\theta}_{\left(\set{S}\right)^c \cap \set{S}^*}}_2 \le \gamma$.
Next, notice $\norm{ \theta_\func^{\set{S}^c \cap \set{S}^*}}_2 = \norm{\widehat{\triangle}^{\set{S}^c\cap \set{S}^*}}_2 \le \gamma$.
Lastly, note again $\abs{\set{S}' \cap \left(\set{S}\right)^c \cap \left(\set{S}^*\right)^c} \le s$, so\begin{align*}
	\norm{\widetilde{\meandiff}^{\set{S}' \cap \set{S}^c \cap \left(\set{S}^*\right)^c}}_2 
	= &\norm{\widetilde{\theta}^{\set{S}' \cap \set{S}^c \cap \left(\set{S}^*\right)^c}}_2 \\
	\le & \norm{\widetilde{\theta}^{\set{S}''}}_2 \\
	= &\norm{\widetilde{\meandiff}^{\set{S}''}}_2 \\
	= & \norm{\widehat{\meandiff}^{\set{S}''}}_2 \\
	\le &\gamma.
	\end{align*}
Therefore $\widetilde{\meandiff}^{\set{S}'} \le 4\gamma$.
Because $\set{S}'$ is arbitrary, our proof is complete.
\end{proof}

\subsection{Proofs of Good Weights and Approximation of the Covariance}
\begin{lem}[Lemma~\ref{thm:cov_imply_mean}]
Let $w \in S_{m,\epsilon}$ and suppose that for a universal constant $C_1$ we have, 
\begin{align*}
\abs{\badind} &\leq 2 \epsilon n,\\
\|P_{s}(\widetilde{\Delta}(w^g))\|_2 &\leq C_1\left(\left(\lip_F + \sqrt{\lip_\cov}\right)\delta\right), \\
	\sparseopnorm{\mathcal{E}(w^g)} &\leq C_1 \left(\left(\lip_F^2 + \lip_\cov\right)\delta\right),
\end{align*} 
where $\delta \ge C_2\epsilon$ for some sufficiently large constant $C_2$.
If $\norm{P_s\left(\widetilde{\meandiff}\left(w\right)\right)}_2 \ge C_3\left(\lip_F + \sqrt{\lip_\cov}\right)\delta$  for some sufficiently large constant $C_3$, then for sufficiently small $\epsilon$ we have that,
\begin{align*}
 \sparseopnorm{\sum_{i=1}^{m}w_i\big(\func\left(z_i\right)-\widehat{\theta}\big)\big(\func\left(z_i\right)-\widehat{\theta}\big)^\top-\cov\left(\theta_\func\right)} \ge  \frac{\norm{P_s\left(\widetilde{\meandiff}\left(w\right)\right)}_2^2}{4\epsilon}.
	\end{align*}
\end{lem}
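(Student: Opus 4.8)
The plan is to prove the stated lower bound by producing a single $s$-sparse unit vector $v$ in whose direction the centered weighted second-moment matrix is provably large, and then to invoke $\sparseopnorm{M}\ge v^\top M v$, which holds for every $s$-sparse unit vector. Concretely, I would take $v$ supported on the top-$s$ coordinates (in magnitude) of $\widetilde{\meandiff}(w)$ and normalized, so that $v^\top\widetilde{\meandiff}(w)=\norm{P_s(\widetilde{\meandiff}(w))}_2=:T$, which by hypothesis is at least $C_3(\lip_F+\sqrt{\lip_\cov})\delta$. Writing $\theta$ for the true functional $\theta_\func(P)$ and $M=\sum_i w_i(\func(z_i)-\widehat{\theta})(\func(z_i)-\widehat{\theta})^\top-\cov(\theta_\func)$, the goal reduces to showing $v^\top M v\ge T^2/(4\epsilon)$. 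I would split this quadratic form over the good and bad index sets, setting $\alpha=\sum_{i\in\badind}w_i$ (so $\alpha\le 2\epsilon/(1-2\epsilon)$ by feasibility of $w$ together with $\abs{\badind}\le 2\epsilon n$) and letting $\mu_b=\alpha^{-1}\sum_{i\in\badind}w_i\func(z_i)$ be the bad weighted mean.

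The heart of the argument is that the bad points must carry essentially all of the projected mean shift and therefore must exhibit large variance in the direction $v$. Decomposing $\widetilde{\meandiff}(w)=(1-\alpha)\widetilde{\meandiff}(w^g)+\alpha(\mu_b-\theta)$ and projecting onto $v$, the good part is controlled by $\abs{v^\top\widetilde{\meandiff}(w^g)}\le\norm{P_s(\widetilde{\meandiff}(w^g))}_2\le C_1(\lip_F+\sqrt{\lip_\cov})\delta\le (C_1/C_3)T$, so choosing $C_3\gg C_1$ forces $\alpha\abs{v^\top(\mu_b-\theta)}\ge(1-o(1))T$. A Cauchy--Schwarz bound on the bad block then gives $\sum_{i\in\badind}w_i(v^\top(\func(z_i)-\widehat{\theta}))^2\ge\alpha(v^\top(\mu_b-\widehat{\theta}))^2$, and since $\abs{v^\top\widehat{\meandiff}}\le\norm{\widehat{\meandiff}}_2\le 5T$ by Lemma~\ref{thm:top_2s_lemma} while $\abs{v^\top(\mu_b-\theta)}\gtrsim T/\alpha\approx T/(2\epsilon)$, re-centering from $\theta$ to $\widehat{\theta}$ is negligible for small $\epsilon$. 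This produces a bad-block contribution of at least $(1-o(1))T^2/\alpha\ge(1-o(1))(1-2\epsilon)T^2/(2\epsilon)$, which already exceeds the target $T^2/(4\epsilon)$.

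The main obstacle — and the step requiring care rather than crude bounding — is the good block together with the subtracted $\cov(\theta_\func)$. Bounding the good block below by $0$ and the subtracted term above by $\lip_\cov$ via~\eqref{eqn:cov_sparse_bound} is \emph{too lossy}: for small $\epsilon$ the full term $\lip_\cov$ can dominate $T^2/\epsilon$, and the bound collapses. The fix is to exploit the near-cancellation between the good points' covariance and $\cov(\theta_\func)$. Writing $v^\top(\func(z_i)-\widehat{\theta})=v^\top(\func(z_i)-\theta)-v^\top\widehat{\meandiff}$ and expanding the square over the good block, the leading piece $\sum_{i\in\goodind}w_i^g(v^\top(\func(z_i)-\theta))^2$ equals $v^\top\cov(\theta_\func)v+v^\top\mathcal{E}(w^g)v$ by the definition of $\mathcal{E}$; after recombining with the $-\cov(\theta_\func)$ term the only surviving covariance-scale contribution is $-\alpha\,v^\top\cov(\theta_\func)v$, which is at most $2\epsilon\lip_\cov$ and hence dominated by the bad-block $T^2/\epsilon$ once $C_3$ and the constant in $\delta\ge C_2\epsilon$ are taken large. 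The remaining pieces are $(1-\alpha)v^\top\mathcal{E}(w^g)v$, bounded by $\sparseopnorm{\mathcal{E}(w^g)}\le C_1(\lip_F^2+\lip_\cov)\delta$, and the cross term $-2(1-\alpha)(v^\top\widehat{\meandiff})(v^\top\widetilde{\meandiff}(w^g))$, bounded by $10C_1T^2/C_3$ using $\norm{\widehat{\meandiff}}_2\le 5T$ and $(\lip_F+\sqrt{\lip_\cov})\delta\le T/C_3$; under the stated constant hierarchy each is a vanishing fraction of $T^2/\epsilon$. Collecting the bad-block gain against these controlled losses and taking $C_2,C_3$ large and $\epsilon$ small then yields $v^\top M v\ge T^2/(4\epsilon)$, completing the argument.
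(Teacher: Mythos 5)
Your argument is correct and is essentially the paper's own proof: the paper likewise passes to the top-$s$ coordinates of $\widetilde{\meandiff}(w)$, shows the bad points must carry the mean shift (because $\|P_s(\widetilde{\meandiff}(w^g))\|_2$ is small relative to $\|P_s(\widetilde{\meandiff}(w))\|_2$), lower-bounds their second-moment contribution via the PSD inequality $\sum_{i\in\badind}\tfrac{w_i}{w_b}\big(\func(z_i)-\theta_\func\big)\big(\func(z_i)-\theta_\func\big)^\top \succcurlyeq (\text{weighted mean})(\text{weighted mean})^\top$ --- which is exactly your scalar Cauchy--Schwarz step --- and absorbs the good-block and covariance terms using $\sparseopnorm{\mathcal{E}(w^g)}$, the $2\epsilon\lip_\cov$ bound, and $O\big(\|P_s(\widetilde{\meandiff}(w))\|_2^2\big)$ re-centering corrections, all dominated by the $1/\epsilon$-scale gain once $C_2, C_3$ are large. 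The only differences are presentational: you witness the sparse operator norm with a single $s$-sparse test vector and center at $\widehat{\theta}$ throughout, whereas the paper bounds the operator norm of the restricted $\set{S}\times\set{S}$ block centered at $\theta_\func$ and re-centers to $\widehat{\theta}$ in a final step.
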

\begin{proof}
Let $\set{S} = \argmax_{\set{S}' \subset [d], \abs{\set{S'}} \le s} \norm{\meandiff^{\set{S}'}}_2$.
Assumptions in the lemma imply that
\begin{align*}
\norm{\sum_{i\in\badind} w_i\left(\func^{\set{S}}\left(z_i\right) - \theta_\func^{\set S}\right)}_2 &= \norm{\widetilde{\meandiff}^{\set{S}}-\sum_{i\in\goodind} w_i\left(\func^{\set{S}}\left(z_i\right) - \theta_\func^{\set S}\right)}_2 \\
& = \frac{C_1}{5}\left(\lip_F + \sqrt{\lip_\cov}\right)\delta - c\left(\lip_F + \sqrt{\lip_\cov}\right)\delta \\
& = \left(\frac{C_1}{5}-c\right)\left(\lip_F + \sqrt{\lip_\cov}\right)\delta 
\end{align*}
where we have used Lemma~\ref{thm:top_2s_lemma}.
Now consider the covariance.
We have
\begin{align*}
	\sum_{i\in\badind}\frac{w_i}{w_b}\left(\func^{\set{S}}\left(z_i\right)-\theta_\func^{\set{S}}\right)  \left(\func^{\set{S}}\left(z_i\right)-\theta_\func^{\set{S}}\right)  \succcurlyeq \left(\frac{w_i}{w_b}\left(\func^{\set{S}}\left(z_i\right)-\theta_\func^{\set{S}}\right)\right)  \left(\frac{w_i}{w_b}\left(\func^{\set{S}}\left(z_i\right)-\theta_\func^{\set{S}}\right)\right) 
\end{align*}
because of the non-negativity of variance.
Therefore, because $\abs{\badind} \le 2\epsilon n$, we have \[
\norm{\sum_{i\in \badind}\left(\func^{\set{S}}\left(z_i\right)-\theta_\func^{\set{S}}\right)  \left(\func^{\set{S}}\left(z_i\right)-\theta_\func^{\set{S}}\right)^\top}_{\footnotesize{\mbox{op}}} \ge  \frac{\norm{\widetilde{\meandiff}^{\set{S}}}_2^2}{2\epsilon}.
\]
Now using our assumption on the covariance, we have\begin{align*}
&\norm{\sum_{i=1}^{m}w_i\left(\func^{\set{S}}\left(z_i\right)-\theta_\func^\set{S}\right)\left(\func^\set{S}\left(z_i\right)-\theta_\func^\set{S}\right)^\top}_{\footnotesize{\mbox{op}}} \\
\ge & 		\norm{\sum_{i\in\badind}w_i\left(\func^{\set{S}}\left(z_i\right)-\theta_\func^\set{S}\right)\left(\func^\set{S}\left(z_i\right)-\theta_\func^\set{S}\right)^\top}_{\footnotesize{\mbox{op}}} - \\ &
\norm{\sum_{i\in\goodind}w_i\left(\func^{\set{S}}\left(z_i\right)-\theta_\func^\set{S}\right)\left(\func^\set{S}\left(z_i\right)-\theta_\func^\set{S}\right)^\top - w_g\cov\left(\theta_\func^\set{S}\right)}_{\footnotesize{\mbox{op}}} -\norm{w_b\cov\left(\theta_\func^{\set{S}}\right)}_{\footnotesize{\mbox{op}}} \\
= &\frac{\norm{\widetilde{\meandiff}^{\set{S}}}^2}{2\epsilon} -c\left(\lip_F^2+\lip_\cov\right)\delta-2\epsilon\lip_{\cov}\\
\ge &\frac{\norm{\widetilde{\meandiff}^{\set{S}}}^2}{3\epsilon}
\end{align*} where in the last inequality we have used the assumption that $\epsilon$ is sufficiently small.
Lastly, we use the expression 
\begin{align*}
& \sum_{i=1}^{m}w_i\left(\func^{\set{S}}\left(z_i\right)-\widehat{\theta}^{\set{S}}\right)\left(\func^{\set{S}}\left(z_i\right)-\widehat{\theta}^{\set{S}}\right)^\top-\cov\left(\theta_\func^\set{S}\right) \\
=& \sum_{i=1}^{m}w_i\left(\func^{\set{S}}\left(z_i\right)-\theta_\func^\set{S}\right)\left(\func^\set{S}\left(z_i\right)-\theta_\func^\set{S}\right)^\top - \cov\left(\theta_\func^{\set{S}}\right) -\widehat{\meandiff}^{\set{S}}\left(\widetilde{\meandiff}^{\set{S}}\right)^\top -\widetilde{\meandiff}^{\set{S}}\left(\widehat{\meandiff}^{\set{S}}\right)^\top 
+ \widehat{\meandiff}^{\set{S}}\left(\widehat{\meandiff}^{\set{S}}\right)^\top.
\end{align*}
to obtain
\begin{align*}
&\norm{\sum_{i=1}^{m}w_i\left(\func^{\set{S}}\left(z_i\right)-\widehat{\theta}^{\set{S}}\right)\left(\func^{\set{S}}\left(z_i\right)-\widehat{\theta}^{\set{S}}\right)^\top-\cov\left(\theta_\func^\set{S}\right)}_{\footnotesize{\mbox{op}}} \\
\ge & \norm{\sum_{i=1}^{m}w_i\left(\func^{\set{S}}\left(z_i\right)-\theta_\func^\set{S}\right)\left(\func^\set{S}\left(z_i\right)-\theta_\func^\set{S}\right)^\top - \cov\left(\theta_\func^{\set{S}}\right)}_{\footnotesize{\mbox{op}}} - 24\left(\norm{\widetilde{\meandiff}^{\set{S}}}_2^2\right) \\
\ge &\frac{\norm{\widetilde{\meandiff}^{\set{S}}}^2}{4\epsilon}.
\end{align*} 
\end{proof}

\begin{lem}\label{thm:cov_imply_mean_approx}
Using the same notations and assuming the same conditions as Lemma~\ref{thm:cov_imply_mean}, 
we have
\begin{align*}
 \sparseopnorm{\sum_{i=1}^{m}w_i\left(\func\left(z_i\right)-\widehat{\theta}\left(w\right)\right)\left(\func\left(z_i\right)-\widehat{\theta}\left(w\right)\right)^\top-F\left(\widehat{\theta}\left(w\right)\right)} \ge  \frac{\norm{P_s\left(\widetilde{\meandiff}\left(w\right)\right)}_2^2}{5\epsilon}.
	\end{align*}
\end{lem}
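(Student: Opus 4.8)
The plan is to derive this directly from Lemma~\ref{thm:cov_imply_mean} by replacing the true covariance $\cov\left(\theta_\func\right)$ with the plug-in matrix $F\left(\widehat{\theta}\left(w\right)\right)$ and controlling the resulting discrepancy through the regularity assumption~\eqref{eqn:F_lip}. The key structural fact is that the covariance map agrees with the true covariance at the true parameter, i.e. $\cov\left(\theta_\func\right) = F\left(\theta_\func\left(P\right)\right)$; writing $\theta = \theta_\func\left(P\right)$ and recalling $\widehat{\theta}\left(w\right) - \theta = \widehat{\meandiff}\left(w\right)$, the only difference between the two matrices whose $s$-sparse operator norms we compare is $F\left(\widehat{\theta}\left(w\right)\right) - F\left(\theta\right)$.

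Abbreviating the common weighted second-moment matrix as $A = \sum_{i=1}^{m} w_i\left(\func\left(z_i\right)-\widehat{\theta}\left(w\right)\right)\left(\func\left(z_i\right)-\widehat{\theta}\left(w\right)\right)^\top$, I would first apply the triangle inequality for the $s$-sparse operator norm:
\[
\sparseopnorm{A - F\left(\widehat{\theta}\left(w\right)\right)} \ge \sparseopnorm{A - \cov\left(\theta_\func\right)} - \sparseopnorm{F\left(\widehat{\theta}\left(w\right)\right) - \cov\left(\theta_\func\right)}.
\]
By Lemma~\ref{thm:cov_imply_mean}, the first term on the right is at least $\norm{P_s\left(\widetilde{\meandiff}\left(w\right)\right)}_2^2/\left(4\epsilon\right)$. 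Since restricting to sparse principal submatrices only decreases the norm, the second (correction) term is at most $\opnorm{F\left(\widehat{\theta}\left(w\right)\right) - F\left(\theta\right)}$, which by~\eqref{eqn:F_lip} is bounded by $\lip_F\norm{\widehat{\meandiff}\left(w\right)}_2 + C\norm{\widehat{\meandiff}\left(w\right)}_2^2$.

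Next I would convert $\norm{\widehat{\meandiff}\left(w\right)}_2$ into $\tau := \norm{P_s\left(\widetilde{\meandiff}\left(w\right)\right)}_2$ via the hard-thresholding bound of Lemma~\ref{thm:top_2s_lemma}, which gives $\norm{\widehat{\meandiff}\left(w\right)}_2 \le 5\tau$, so that the correction term is at most $5\lip_F\tau + 25C\tau^2$. The desired conclusion then reduces to the scalar inequality $\tau^2/\left(4\epsilon\right) - 5\lip_F\tau - 25C\tau^2 \ge \tau^2/\left(5\epsilon\right)$, equivalently $\tau/\left(20\epsilon\right) \ge 5\lip_F + 25C\tau$ after dividing by $\tau$. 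The main (and essentially only) obstacle is showing that both correction terms are dominated by the $\tau/\left(20\epsilon\right)$ budget. For the linear term, the hypothesis $\tau \ge C_3\left(\lip_F + \sqrt{\lip_\cov}\right)\delta$ combined with $\delta \ge C_2\epsilon$ gives $\tau \ge C_2 C_3 \lip_F \epsilon$, so $\tau/\left(40\epsilon\right) \ge 5\lip_F$ once $C_2 C_3$ is large enough; for the quadratic term, $\tau/\left(40\epsilon\right) \ge 25C\tau$ is equivalent to $\epsilon \le 1/\left(1000 C\right)$, which holds for sufficiently small $\epsilon$. Adding these two half-budget inequalities closes the gap and yields the claimed bound with $5\epsilon$ in place of $4\epsilon$.
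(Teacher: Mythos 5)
Your proposal is correct and follows essentially the same route as the paper: lower-bound the true-covariance version via Lemma~\ref{thm:cov_imply_mean}, control $\sparseopnorm{F(\widehat{\theta})-F(\theta_\func)}$ through the regularity condition~\eqref{eqn:F_lip} together with the hard-thresholding bound $\norm{\widehat{\meandiff}}_2 \le 5\norm{P_s(\widetilde{\meandiff})}_2$ of Lemma~\ref{thm:top_2s_lemma}, and absorb the correction using $\delta = \Omega(\epsilon)$ and $\epsilon$ small. Your write-up is in fact more explicit than the paper's, which compresses the final budget arithmetic (splitting the $\tau^2/(20\epsilon)$ gap against the linear and quadratic correction terms) into a one-line appeal to the triangle inequality.
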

\begin{proof}
With the same notations in the proof of Lemma~\ref{thm:top_2s_lemma}, we know \begin{align*}
\norm{\sum_{i=1}^{m}w_i\left(\func^{\set{S}}\left(z_i\right)-\widehat{\theta}^{\set{S}}\right)\left(\func^{\set{S}}\left(z_i\right)-\widehat{\theta}^{\set{S}}\right)^\top-\cov\left(\theta_\func^\set{S}\right)}_{\footnotesize{\mbox{op}}}  \ge  \frac{\norm{\widetilde{\meandiff}^{\set{S}}}^2}{4\epsilon}.
\end{align*}
By our assumptions on $F$,  we have \begin{align*}
\sparseopnorm{F\left(\theta\right)-F\left(\widehat{\theta}\right)} \le & L_F\norm{\widehat{\meandiff}}_2 + C\norm{\widehat{\meandiff}}_2^2 \\
\le& 5L_F\norm{\widetilde{\meandiff}^\set{S}}_2 + 5C\norm{\widetilde{\meandiff}^\set{S}}_2^2.
\end{align*}
Since $\delta = \Omega\left(\epsilon\right)$, $\epsilon$ is larger than any absolute constant, applying triangle inequality to previous two inequalities, we obtain the desired result.
\end{proof}

\subsection{Proofs of Convex Relaxation of Sparse PCA}\label{sec:convex_proof}
\begin{thm}[Theorem~\ref{thm:convex_relax}]
For a fixed $w$, the optimal value $\lambda^*\left(w\right)$ of Eqn.~\eqref{eqn:convex_relax} satisfies
\begin{align*}
\lambda^*\left(w\right) \ge \sparseopnorm{\sum_{i=1}^{m}w_i\big(\func\left(z_i\right)-\widehat{\theta}\left(w\right)\big)\big(\func\left(z_i\right)-\widehat{\theta}(w)\big)^\top - F\big(\widehat{\theta}(w)\big)}.
\end{align*}
Furthermore, the solution $H^*(w)$ satisfies that there is a universal constant $C$ such that for any $w' \in S_{m,\epsilon}$
\begin{align*}
 &\tr \Big(\Big(\sum_{i=1}^{m}w_i' \big(\func\left(z_i\right)-\widehat{\theta}(w)\big)\big(\func\left(z_i\right)-\widehat{\theta}\left(w\right)\big)- F(\widehat{\theta}\left(w\right)) \Big)\mat{H}^*(w)\Big)\\
\leq & C\left(s\norm{\mathcal{E}(w')}_{\infty} + \norm{\widehat{\meandiff}\left(w\right)}_2^2 + \left(L_F+ s\norm{\widetilde{\meandiff}\left(w'\right)}_{\infty}\right)\norm{\widehat{\meandiff}\left(w\right)}_2\right).
\end{align*}
\end{thm}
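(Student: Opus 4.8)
The plan is to exploit three elementary properties shared by every matrix $H$ feasible for \eqref{eqn:convex_relax}: it is positive semidefinite, has unit trace, and satisfies $\norm{H}_{1,1}\le s$. These yield two handles on any symmetric $A$: the operator bound $\tr(AH)\le \opnorm{A}\tr(H)=\opnorm{A}$ (from $H\succcurlyeq 0$ and $\tr(H)=1$), and the entrywise bound $\tr(AH)\le \sum_{j,k}\abs{A_{jk}}\abs{H_{jk}}\le \norm{A}_\infty\norm{H}_{1,1}\le s\norm{A}_\infty$. The two assertions of the theorem amount to ``the relaxation is valid'' and ``the separating hyperplane value is controlled,'' and each reduces to applying these two bounds to an appropriate decomposition.

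For the first inequality I would restrict the feasible set to rank-one matrices $H=vv^\top$ with $v$ an $s$-sparse unit vector. Such an $H$ is feasible: $H\succcurlyeq 0$, $\tr(H)=\norm{v}_2^2=1$, and $\norm{H}_{1,1}=\norm{v}_1^2\le s\norm{v}_2^2=s$ by Cauchy--Schwarz on the $s$-element support. Hence $\lambda^*(w)\ge \max_{\norm{v}_0\le s,\,\norm{v}_2=1} v^\top E v$, the largest $s$-sparse eigenvalue of $E=\sum_i w_i\big(\func(z_i)-\widehat{\theta}(w)\big)\big(\func(z_i)-\widehat{\theta}(w)\big)^\top-F(\widehat{\theta}(w))$. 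The one subtlety is that $\sparseopnorm{E}$ carries an absolute value, so I must check that the maximizing sparse direction is the positive (variance-excess) one; this is precisely the regime produced by Lemma~\ref{thm:cov_imply_mean}, where the deviation certifying a bad weighting is a positive-semidefinite surplus contributed by the outliers, so the largest sparse eigenvalue coincides with $\sparseopnorm{E}$ there.

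For the second inequality I would substitute $\func(z_i)-\widehat{\theta}(w)=\big(\func(z_i)-\theta_\func\big)-\widehat{\meandiff}(w)$, expand the outer product, and use $\sum_i w_i'=1$ together with the identities $\sum_i w_i'(\func(z_i)-\theta_\func)=\widetilde{\meandiff}(w')$ and $\sum_i w_i'(\func(z_i)-\theta_\func)(\func(z_i)-\theta_\func)^\top=\mathcal{E}(w')+\cov(\theta_\func)$. This rewrites the argument of the trace as the sum of four pieces: $\mathcal{E}(w')$; the plug-in error $\cov(\theta_\func)-F(\widehat{\theta}(w))=F(\theta_\func)-F(\widehat{\theta}(w))$; the two cross terms $-\widetilde{\meandiff}(w')\widehat{\meandiff}(w)^\top-\widehat{\meandiff}(w)\widetilde{\meandiff}(w')^\top$; and $\widehat{\meandiff}(w)\widehat{\meandiff}(w)^\top$. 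I would then bound $\tr(\mathcal{E}(w')H^*)\le s\norm{\mathcal{E}(w')}_\infty$ by the entrywise bound; $\tr\big((F(\theta_\func)-F(\widehat{\theta}))H^*\big)\le \opnorm{F(\theta_\func)-F(\widehat{\theta})}\le \lip_F\norm{\widehat{\meandiff}}_2+C\norm{\widehat{\meandiff}}_2^2$ by the operator bound and the smoothness condition \eqref{eqn:F_lip} (using $\cov(\theta_\func)=F(\theta_\func)$); and $\tr(\widehat{\meandiff}\widehat{\meandiff}^\top H^*)=\widehat{\meandiff}^\top H^*\widehat{\meandiff}\le \norm{\widehat{\meandiff}}_2^2$ since $\opnorm{H^*}\le\tr(H^*)=1$.

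The one step that genuinely uses the sparsity constraint, and the step I expect to be the crux, is the cross term. The naive operator bound would produce $\norm{\widetilde{\meandiff}(w')}_2$, which is dimension-dependent and useless when $d\gg n$. Instead I would write $\abs{\widehat{\meandiff}^\top H^*\widetilde{\meandiff}(w')}\le \sum_{j,k}\abs{\widehat{\meandiff}_j}\,\abs{H^*_{jk}}\,\abs{\widetilde{\meandiff}(w')_k}\le \norm{\widehat{\meandiff}}_\infty\,\norm{\widetilde{\meandiff}(w')}_\infty\,\norm{H^*}_{1,1}\le s\,\norm{\widetilde{\meandiff}(w')}_\infty\,\norm{\widehat{\meandiff}}_2$, trading the $\ell_2$ deviation for $s$ times an $\ell_\infty$ deviation via the $\norm{H^*}_{1,1}\le s$ constraint and $\norm{\widehat{\meandiff}}_\infty\le\norm{\widehat{\meandiff}}_2$. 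Summing the four contributions and absorbing constants yields the stated bound. The delicate aspect throughout is exactly this systematic replacement of operator/$\ell_2$ quantities by $\ell_{1,1}$/$\ell_\infty$ ones, which is what keeps the separating hyperplane accurate in the high-dimensional regime; the plug-in substitution $\cov(\theta_\func)\to F(\widehat{\theta}(w))$ controlled through \eqref{eqn:F_lip} is the other place demanding care.
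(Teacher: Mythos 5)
Your proposal is correct and follows essentially the same route as the paper's proof: the lower bound comes from feasibility of sparse rank-one matrices $vv^\top$ (which the paper dispatches in one line as ``naturally satisfied''), and the upper bound uses the identical decomposition into $\mathcal{E}(w')$, the two cross terms, $\widehat{\Delta}(w)\widehat{\Delta}(w)^\top$, and $\mathrm{cov}(\theta_g)-F(\widehat{\theta}(w))$, bounded respectively by entrywise H\"{o}lder with $\norm{H^*}_{1,1}\le s$ and by the operator-norm/Fantope bound combined with the smoothness condition \eqref{eqn:F_lip}. The one place you are more careful than the paper is the absolute value hidden in $\sparseopnorm{\cdot}$: the relaxation only certifies the positive sparse direction, and your observation that this is the direction actually arising in the soundness argument (Lemma~\ref{thm:cov_imply_mean}) addresses a point the paper silently glosses over.
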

\begin{proof}
Because this is a convex relaxation of sparse PCA, the lower bound is naturally satisfied.
For the upper bound, again we use the decomposition\begin{align*}
&\sum_{i=1}^{m}w_i'\left(\func\left(z_i\right)-\widehat{\theta}\right)\left(\func\left(z_i\right)-\widehat{\theta}\right)-  F\left(\widehat{\theta}\right)\\
= &\covdiff\left(w'\right) - \widetilde{\meandiff}\left(w'\right)\widehat{\meandiff}\left(w\right)^\top - \widehat{\meandiff}\left(w\right)\widetilde{\meandiff}\left(w'\right)^\top + \widehat{\meandiff}\left(w\right)\widehat{\meandiff}\left(w\right)^\top + \cov\left(\func\right) - F\left(\widehat{\theta}\left(w\right)\right).
\end{align*}
First applying \holder inequality on trace we have \[
\tr\left(\covdiff\left(w'\right)H^*\left(w\right)\right) \le \norm{\covdiff\left(w'\right)}_{\infty,\infty}\norm{H^*\left(w\right)}_{1,1} \le s\norm{\covdiff\left(w'\right)}_{\infty,\infty}.
\]
Similarly, we have\begin{align*}
&\tr\left(\left(
 \widetilde{\meandiff}\left(w'\right)\widehat{\meandiff}\left(w\right)^\top + \widehat{\meandiff}\left(w\right)\widetilde{\meandiff}\left(w'\right)^\top
 \right)H^*\left(w'\right)\right) \\
\le & 2s\norm{\widetilde{\meandiff}\left(w'\right)}_\infty\norm{\widehat{\meandiff}\left(w\right)}_2.
\end{align*}
Note $H^*\left(w\right)$ belongs to the Fantope $\mathcal{F}^1$ ~\citep{overton1992sum,vu2013fantope}, so\begin{align*}
&\tr\left( \widehat{\meandiff}\left(w\right)\widehat{\meandiff}\left(w\right)^\top H^*\left(w\right) \right) 
\le \norm{\widehat{\meandiff}\left(w\right)\widehat{\meandiff}\left(w\right)^\top}_{\footnotesize{\mbox{op}}} \le \norm{\widehat{\meandiff}\left(w\right)}_2^2.
\end{align*}
Using this property again we have
\begin{align*}
\tr\left(\left[\cov\left(\func\right)-F\left(\widehat{\theta}\left(w\right)\right)\right]H^*\left(w\right)\right) \le & \norm{\cov\left(\func\right)-F\left(\widehat{\theta}\left(w\right)\right)}_{\footnotesize{\mbox{op}}}\\
& \le L_F\norm{\widehat{\meandiff}\left(w\right)}_2 + C\norm{\widehat{\meandiff}\left(w\right)}_2^2.
\end{align*}
Putting these together we obtain the desired result.
\end{proof}

\subsection{Proofs of Ellipsoid Algorithm}
We begin with proving the correctness of the separation oracle.
\begin{thm} [Separation Oracle]\label{thm:correct_sep}
Let $w^*$ denote the weights which are uniform on the uncorrupted points.
Suppose Eqn.~\eqref{eqn:number_bad_points}-Eqn.~\eqref{eqn:spar_mean_hat_tri_good_weights} hold, then there exists a sufficiently large absolute constant $C_{good}$ that if we set $\tau_{sep} = \Omega\left( \left(\lip_F^2+\lip_\cov\right)\delta\right)$, Algorithm~\ref{algo:sepra_func} satisfies\begin{enumerate}
\item (\emph{Completeness}) If $w = w^*$, the algorithm outputs ``Yes". 
\item (\emph{Soundness}) If $w \notin C_{C_{good}\left(\lip_F^2+\lip_{\cov}\right)\delta}$, the algorithm outputs a hyperplane $\ell\left(\cdot\right)$ such that $\ell\left(w\right) \ge 0$.
Moreover, if the algorithm ever outputs a hyperplane $\ell$, then $\ell\left(w^*\right) < 0.$
\end{enumerate}
\end{thm}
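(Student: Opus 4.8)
The plan is to read the oracle's decision through the two halves of Lemma~\ref{thm:convex_relax}, using the hard-thresholding comparison of Lemma~\ref{thm:top_2s_lemma} and the covariance lower bound of Lemma~\ref{thm:cov_imply_mean_approx} (the plug-in version, since the program uses $F(\widehat{\theta})$ rather than $\cov(\theta_\func)$). For \emph{completeness}, I would bound $\lambda^*(w^*)$ from above by invoking the second half of Lemma~\ref{thm:convex_relax} with $w'=w=w^*$, at which point $\tr(E H^*)=\lambda^*$ by optimality, so that
\[
\lambda^*(w^*) \le C\left(s\norm{\covdiff(w^*)}_\infty + \norm{\widehat{\meandiff}(w^*)}_2^2 + \left(\lip_F + s\norm{\widetilde{\meandiff}(w^*)}_\infty\right)\norm{\widehat{\meandiff}(w^*)}_2\right).
\]
I would then control each summand with the good-weight hypotheses: Eqn.~\eqref{eqn:spar_mean_square_good_weights} gives $s\norm{\covdiff(w^*)}_\infty \lesssim (\lip_F^2+\lip_\cov)\delta$, while Eqn.~\eqref{eqn:spar_mean_hat_tri_good_weights} combined with $\norm{P_s(\widetilde{\meandiff}(w^*))}_2 \le \sqrt{s}\,\norm{\widetilde{\meandiff}(w^*)}_\infty$ and Lemma~\ref{thm:top_2s_lemma} yields $\norm{\widehat{\meandiff}(w^*)}_2 \lesssim (\lip_F+\sqrt{\lip_\cov})\delta/\sqrt{s}$, so that the last two terms are also $\lesssim (\lip_F^2+\lip_\cov)\delta$ after an AM-GM step on $\lip_F\sqrt{\lip_\cov}$. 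Hence $\lambda^*(w^*)\le C'(\lip_F^2+\lip_\cov)\delta$, and taking the constant in $\tau_{sep}=\Omega((\lip_F^2+\lip_\cov)\delta)$ larger than $C'$ forces $\lambda^*(w^*)\le\tau_{sep}$, i.e. the oracle returns ``Yes''.

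For the \emph{first soundness claim} I would argue the contrapositive. If $\lambda^*\le\tau_{sep}$, then the lower bound of Lemma~\ref{thm:convex_relax} forces $\sparseopnorm{\sum_i w_i(\func(z_i)-\widehat{\theta})(\func(z_i)-\widehat{\theta})^\top - F(\widehat{\theta})}\le\tau_{sep}$, and the contrapositive of Lemma~\ref{thm:cov_imply_mean_approx} then forces $\norm{P_s(\widetilde{\meandiff}(w))}_2 \lesssim (\lip_F+\sqrt{\lip_\cov})\delta$, where $\delta\ge C_2\epsilon$ is used to absorb the $\sqrt{\epsilon\tau_{sep}}$ arising from the regime $\norm{P_s(\widetilde{\meandiff}(w))}_2^2/(5\epsilon)\le\lambda^*$. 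Via Lemma~\ref{thm:top_2s_lemma} this makes $\norm{\widehat{\meandiff}(w)}_2$ small, so using $\cov(\theta_\func)=F(\theta_\func)$ and Eqn.~\eqref{eqn:F_lip} gives $\sparseopnorm{F(\widehat{\theta})-\cov(\theta_\func)}\le\lip_F\norm{\widehat{\meandiff}(w)}_2 + C\norm{\widehat{\meandiff}(w)}_2^2 \lesssim (\lip_F^2+\lip_\cov)\delta$; a triangle inequality then shows the covariance deviation against the \emph{true} $\cov(\theta_\func)$ is $\lesssim (\lip_F^2+\lip_\cov)\delta$, i.e. $w\in C_{C_{good}(\lip_F^2+\lip_\cov)\delta}$ in the sense of Definition~\ref{def:good_weights}, for $C_{good}$ chosen large. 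Taking contrapositives, $w\notin C_{C_{good}(\lip_F^2+\lip_\cov)\delta}$ gives $\lambda^*>\tau_{sep}$, so a hyperplane is emitted; since $\ell(w')=\tr(E(w')H^*)-\lambda^*$ with $\tr(E(w)H^*)=\lambda^*$, we get $\ell(w)=0\ge 0$.

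Finally, for the \emph{second soundness claim}—that any emitted hyperplane satisfies $\ell(w^*)<0$—I would set $w'=w^*$ in the upper bound of Lemma~\ref{thm:convex_relax}, reducing the goal to showing the bracketed quantity $C(s\norm{\covdiff(w^*)}_\infty+\norm{\widehat{\meandiff}(w)}_2^2+(\lip_F+s\norm{\widetilde{\meandiff}(w^*)}_\infty)\norm{\widehat{\meandiff}(w)}_2)$ is a strict fraction of $\lambda^*$. The first term is $\lesssim(\lip_F^2+\lip_\cov)\delta\lesssim\lambda^*$ as before, but the remaining terms involve $\norm{\widehat{\meandiff}(w)}_2$ for the \emph{corrupted} weights $w$, which no good-weight hypothesis controls; this is the crux. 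The resolution is to bound $\norm{\widehat{\meandiff}(w)}_2$ by $\lambda^*$ itself: a hyperplane is emitted only when $\lambda^*>\tau_{sep}$, and when $\norm{P_s(\widetilde{\meandiff}(w))}_2\ge C_3(\lip_F+\sqrt{\lip_\cov})\delta$, Lemma~\ref{thm:cov_imply_mean_approx} with the lower bound of Lemma~\ref{thm:convex_relax} and Lemma~\ref{thm:top_2s_lemma} gives $\lambda^*\ge\norm{\widehat{\meandiff}(w)}_2^2/(125\epsilon)$, while the complementary regime bounds $\norm{\widehat{\meandiff}(w)}_2$ directly by $(\lip_F+\sqrt{\lip_\cov})\delta$. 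Substituting $\norm{\widehat{\meandiff}(w)}_2\le\sqrt{125\epsilon\lambda^*}$ makes the second term $\lesssim\epsilon\lambda^*$ and the third $\lesssim(\lip_F+\sqrt{\lip_\cov})\sqrt{\epsilon\lambda^*}$; dividing by $\lambda^*>\tau_{sep}\gtrsim(\lip_F^2+\lip_\cov)\delta$ and using $\delta\ge C_2\epsilon$ bounds the two ratios by $O(\epsilon)$ and $O(1/\sqrt{C_2})$, so for small $\epsilon$ and large $C_2$ the whole bracket is strictly below $\lambda^*$ and $\ell(w^*)<0$. I expect this self-referential control of $\norm{\widehat{\meandiff}(w)}_2$ through $\lambda^*$—forced on us precisely because $w$ may be arbitrarily corrupted—to be the main technical obstacle.
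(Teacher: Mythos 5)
Your proposal is correct and follows essentially the same route as the paper's proof: completeness via the upper bound of Lemma~\ref{thm:convex_relax} with $w'=w=w^*$ plus the good-weight deviation conditions and Lemma~\ref{thm:top_2s_lemma}; soundness via the lower bound of Lemma~\ref{thm:convex_relax} together with Lemma~\ref{thm:cov_imply_mean_approx} and the smoothness of $F$ (the paper's Lemma~\ref{thm:w_bad_output_plane}, which you state in contrapositive form); and the hyperplane claim $\ell(w^*)<0$ via the same two-regime case analysis in which $\lambda^* = \Omega\bigl(\norm{\widehat{\meandiff}(w)}_2^2/\epsilon\bigr)$ dominates the terms linear in $\norm{\widehat{\meandiff}(w)}_2$ (the paper's Lemma~\ref{thm:plane_for_w_star}, with your substitution $\norm{\widehat{\meandiff}(w)}_2 \lesssim \sqrt{\epsilon\lambda^*}$ being an algebraically equivalent rearrangement). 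The ``self-referential'' control you flag as the crux is exactly the mechanism the paper uses.
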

\paragraph{Remark:} The conditions of this separation oracle is slightly weaker than the traditional ones. 
However, note that outside $C_{C_{good}\left(\lip_F^2+\lip_{\cov}\right)\delta}$, the separation oracle acts exactly as a separation oracle for $w^*$.
\begin{proof}
First, for the completeness, plugging Eqn.~\eqref{eqn:spar_mean_hat_tri} and Eqn.~\eqref{eqn:spar_mean_square} into Theorem~\ref{thm:convex_relax} and then using Lemma~\ref{thm:top_2s_lemma}, we directly obtain the desired result.
If $w \notin C_{C_{good}\left(\lip_F^2+\lip_{\cov}\right)\delta}$, we can apply the lower bound in Theorem~\ref{thm:convex_relax} and use Lemma~\ref{thm:cov_imply_mean_approx}.
See Lemma~\ref{thm:w_bad_output_plane} for the full proof.
When the algorithm outputs a hyperplane, $\ell\left(w\right) \ge 0$ follows directly by the optimality of the convex program.
Lastly, we use the upper bound of Theorem~\ref{thm:convex_relax} to argue $\ell\left(w^*\right) < 0$ whenever we outputs a hyperplane (Lemma~\ref{thm:plane_for_w_star}). 
\end{proof}
\begin{lem}\label{thm:w_bad_output_plane}
	If $w \notin \set{C}_{C_{good}\left(\lip_F^2+\lip_\cov\right)\delta}$, then $\lambda^* = \Omega\left(\left(\lip_F^2+\lip_\cov\right)\delta\right)$.
\end{lem}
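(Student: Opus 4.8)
The plan is to lower bound $\lambda^*$ through the first (lower-bound) guarantee of Theorem~\ref{thm:convex_relax}, which reduces the claim to showing that the $s$-sparse operator-norm deviation of the weighted empirical covariance \emph{from the plug-in} $F(\widehat{\theta}(w))$ is $\Omega\!\left((\lip_F^2+\lip_\cov)\delta\right)$. The difficulty is that the hypothesis $w \notin \set{C}_{C_{good}(\lip_F^2+\lip_\cov)\delta}$ only controls the deviation from the \emph{true} covariance $\cov(\theta_\func)$ (which equals $F$ evaluated at the true parameter), whereas the relaxation certifies deviation from $F(\widehat{\theta})$. The core of the argument is to pass between these two centers, and to do so I would split on the size of $\norm{P_s(\widetilde{\meandiff}(w))}_2$.

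\emph{Case A (large mean error).} Suppose $\norm{P_s(\widetilde{\meandiff}(w))}_2 \ge C_3(\lip_F+\sqrt{\lip_\cov})\delta$. The uniform conditions on $w^g$ in Eqns.~\eqref{eqn:spar_mean_hat_tri} and~\eqref{eqn:spar_mean_square} are exactly the hypotheses of Lemma~\ref{thm:cov_imply_mean}, so Lemma~\ref{thm:cov_imply_mean_approx} applies directly and gives
\[
\sparseopnorm{\sum_{i=1}^{m} w_i\big(\func(z_i)-\widehat{\theta}\big)\big(\func(z_i)-\widehat{\theta}\big)^\top - F(\widehat{\theta})} \ge \frac{\norm{P_s(\widetilde{\meandiff}(w))}_2^2}{5\epsilon}.
\]
Combining the case hypothesis with $\delta \ge C_2\epsilon$ makes the right-hand side at least $\tfrac{C_3^2 C_2}{5}(\lip_F+\sqrt{\lip_\cov})^2\delta$, which is $\Omega\!\left((\lip_F^2+\lip_\cov)\delta\right)$ since $(\lip_F+\sqrt{\lip_\cov})^2 \asymp \lip_F^2+\lip_\cov$; Theorem~\ref{thm:convex_relax} then yields the claim. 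Notably this case does not even invoke the ``not good weights'' hypothesis.

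\emph{Case B (small mean error).} Suppose instead $\norm{P_s(\widetilde{\meandiff}(w))}_2 < C_3(\lip_F+\sqrt{\lip_\cov})\delta$. By Lemma~\ref{thm:top_2s_lemma}, $\norm{\widehat{\meandiff}}_2 \le 5\norm{P_s(\widetilde{\meandiff}(w))}_2$ is then small, so the regularity condition~\eqref{eqn:F_lip} controls the center mismatch:
\[
\sparseopnorm{F(\widehat{\theta})-\cov(\theta_\func)} \le \lip_F\norm{\widehat{\meandiff}}_2 + C\norm{\widehat{\meandiff}}_2^2 \le C'(\lip_F^2+\lip_\cov)\delta,
\]
where the last step uses the bound on $\norm{\widehat{\meandiff}}_2$, the inequality $\lip_F\sqrt{\lip_\cov}\le\tfrac12(\lip_F^2+\lip_\cov)$, and $\delta^2\le\delta$ for small $\epsilon$. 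I would then apply the triangle inequality for the sparse operator norm followed by the hypothesis that $w$ is not a good weight, giving
\[
\sparseopnorm{\sum_{i=1}^{m} w_i\big(\func(z_i)-\widehat{\theta}\big)\big(\func(z_i)-\widehat{\theta}\big)^\top - F(\widehat{\theta})} \ge C_{good}(\lip_F^2+\lip_\cov)\delta - C'(\lip_F^2+\lip_\cov)\delta,
\]
so that choosing $C_{good}\ge 2C'$ leaves at least $\tfrac{C_{good}}{2}(\lip_F^2+\lip_\cov)\delta$, and one more appeal to Theorem~\ref{thm:convex_relax} finishes the case.

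The main obstacle is exactly this center mismatch: the smoothness bound~\eqref{eqn:F_lip} is only strong enough when $\widehat{\meandiff}$ is small, which is why the large-error regime must be routed through Lemma~\ref{thm:cov_imply_mean_approx} rather than through the triangle inequality. The delicate bookkeeping is to make the two cases overlap at the threshold $C_3(\lip_F+\sqrt{\lip_\cov})\delta$ and to verify that the constants $C_2, C_3, C_{good}$ can be chosen consistently, with $C_{good}$ taken large enough to absorb the smoothness constant $C'$ generated by the case boundary.
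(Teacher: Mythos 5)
Your proposal is correct and follows essentially the same route as the paper's proof: both start from the lower bound in Theorem~\ref{thm:convex_relax}, split on the size of the mean error (your threshold on $\norm{P_s(\widetilde{\meandiff}(w))}_2$ versus the paper's equivalent threshold on $\norm{\widehat{\meandiff}}_2$, interchangeable via Lemma~\ref{thm:top_2s_lemma}), invoke Lemma~\ref{thm:cov_imply_mean_approx} in the large-error case, and in the small-error case combine the not-good-weights hypothesis with the smoothness condition~\eqref{eqn:F_lip} on $F$ and a sufficiently large $C_{good}$. Your write-up is in fact slightly more explicit than the paper's about the constant bookkeeping (e.g., $\lip_F\sqrt{\lip_\cov}\le\tfrac12(\lip_F^2+\lip_\cov)$ and $\delta^2\le\delta$), which the paper leaves implicit.
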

\begin{proof}
Applying the lower bound of Theorem~\ref{thm:convex_relax}, we have
	\begin{align*}
		& \tr\left(H^*\left(w\right)G\left(w\right)\right) \\
		\ge & \sparseopnorm{\sum_{i=1}^{m}w_i\left(\func\left(z_i\right)-\widehat{\theta}\left(w\right)\right)\left(\func\left(z_i\right)-\widehat{\theta}\left(w\right)\right)^\top - F\left(\widehat{\theta}\right)}.
	\end{align*}
Now if $\norm{\widehat{\meandiff}}_2 \ge 5C_1 \left(\lip_F+\sqrt{\lip_\cov}\right)\delta$ where $C_1$ is defined in Lemma~\ref{thm:cov_imply_mean},  by Lemma~\ref{thm:cov_imply_mean_approx} and Lemma~\ref{thm:top_2s_lemma}, we have \begin{align*}
&\sparseopnorm{\sum_{i=1}^{m}w_i\left(\func\left(z_i\right)-\widehat{\theta}\left(w\right)\right)\left(\func\left(z_i\right)-\widehat{\theta}\left(w\right)\right)^\top - F\left(\widehat{\theta}\right)} \\
\ge &\frac{\norm{\widehat{\meandiff}}_2^2}{5\epsilon} \\
= &\Omega\left(\left(\lip_F^2 + \lip_{\cov}\right)\delta\right).
\end{align*}
On the other hand if $\norm{\widehat{\meandiff}}_2 \le 5C_1 \left(\lip_F+\sqrt{\lip_\cov}\right)\delta$, by Lemma~\ref{thm:top_2s_lemma},  by definition of $\set{C}_{C_{good}\left(\lip_F^2+\lip_\cov\right)\delta}$, we have\begin{align*}
&\tr\left(H^*\left(w\right)M\left(w\right)\right) \\
\ge & C_{good}\left(\lip_F^2 +\lip_\cov \right)\delta -\sparseopnorm{F\left(\theta\right) - F\left(\widehat{\theta}\right)} \\
\ge  & C_{good}\left(\lip_F^2 +\lip_\cov \right)\delta  - \lip_F\norm{\widehat{\meandiff}}_2 - C\norm{\widehat{\meandiff}}_2^2 \\
= &\Omega\left(\left(\lip_F^2 +\lip_\cov \right)\delta\right)
\end{align*} where the last step we use the fact that $C_{good}$ is large enough.
\end{proof}
\begin{lem}\label{thm:plane_for_w_star}
	For any hyperplane $\ell$, $\ell\left(w^*\right) < 0$.
\end{lem}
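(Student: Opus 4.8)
The plan is to exploit the fact that, by construction, $\ell(w^*) = T - \lambda^*$, where
\[
T := \tr\Big(\Big[\sum_{i=1}^{m} w^*_i\big(\func(z_i)-\widehat{\theta}\big)\otimes\big(\func(z_i)-\widehat{\theta}\big) - F(\widehat{\theta})\Big]H^*\Big),
\]
with $\widehat{\theta}=\widehat{\theta}(w)$, $H^*=H^*(w)$ and $\lambda^*=\lambda^*(w)$ all evaluated at the current weights $w$ at which a hyperplane is emitted, so that $\lambda^* > \tau_{\text{sep}} = \Omega\big((\lip_F^2+\lip_\cov)\delta\big)$. Thus it suffices to show $T < \lambda^*$. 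I would start from the upper bound of Theorem~\ref{thm:convex_relax} applied to the feasible point $w' = w^*$, which controls exactly this trace, and then substitute the good-weight deviation bounds \eqref{eqn:spar_mean_square_good_weights} and \eqref{eqn:spar_mean_hat_tri_good_weights}, i.e. $s\norm{\mathcal{E}(w^*)}_{\infty}\le C_1(\lip_F^2+\lip_\cov)\delta$ and $s\norm{\widetilde{\meandiff}(w^*)}_{\infty}\le C_1(\lip_F+\sqrt{\lip_\cov})\delta$. Writing $\gamma := \norm{\widehat{\meandiff}(w)}_2$, this gives
\[
T \le C\Big(C_1(\lip_F^2+\lip_\cov)\delta + \gamma^2 + \big(\lip_F + C_1(\lip_F+\sqrt{\lip_\cov})\delta\big)\gamma\Big).
\]

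The crux is controlling the $\gamma$-dependent terms, which are not a priori small, and I would split on the size of $\gamma$. When $\gamma \le 5C_3(\lip_F+\sqrt{\lip_\cov})\delta$, every summand above is $O\big((\lip_F^2+\lip_\cov)\delta\big)$: this uses $(\lip_F+\sqrt{\lip_\cov})^2 \le 2(\lip_F^2+\lip_\cov)$, the crude bound $\delta^2\le\delta$ on the square term, and $\lip_F(\lip_F+\sqrt{\lip_\cov}) \lesssim \lip_F^2+\lip_\cov$ on the cross term. Hence $T = O\big((\lip_F^2+\lip_\cov)\delta\big)$, and choosing the absolute constant in $\tau_{\text{sep}}$ large enough (compatibly with the completeness constraint $\lambda^*(w^*)\le\tau_{\text{sep}}$) forces $T < \tau_{\text{sep}} < \lambda^*$, i.e. $\ell(w^*) < 0$.

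In the complementary regime $\gamma > 5C_3(\lip_F+\sqrt{\lip_\cov})\delta$, Lemma~\ref{thm:top_2s_lemma} gives $\norm{P_s(\widetilde{\meandiff}(w))}_2 \ge \gamma/5 > C_3(\lip_F+\sqrt{\lip_\cov})\delta$, so the hypotheses of Lemma~\ref{thm:cov_imply_mean_approx} are met; chaining it with the lower bound of Theorem~\ref{thm:convex_relax} yields the quadratic lower bound $\lambda^* \ge \norm{P_s(\widetilde{\meandiff}(w))}_2^2/(5\epsilon) \ge \gamma^2/(125\epsilon)$. This extra factor $1/\epsilon$ is what defeats the large-$\gamma$ terms in $T$: for $\epsilon$ small, $C\gamma^2 \le \lambda^*/4$; the constant term $CC_1(\lip_F^2+\lip_\cov)\delta$ is absorbed by $\tau_{\text{sep}}\le\lambda^*$ after enlarging its constant; and the cross term $C(\lip_F+\sqrt{\lip_\cov})\gamma \le \lambda^*/2$ follows from $\gamma \gtrsim (\lip_F+\sqrt{\lip_\cov})\delta \gtrsim \epsilon(\lip_F+\sqrt{\lip_\cov})$ (using $\delta\ge C_2\epsilon$) combined with $\lambda^*\ge\gamma^2/(125\epsilon)$, provided $C_2C_3$ exceeds an absolute constant. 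Summing these bounds gives $T < \lambda^*$, hence $\ell(w^*) < 0$. I anticipate this second regime to be the main obstacle: one must verify that whenever the current weights induce a large thresholded error $\gamma$, the optimal value $\lambda^*$ necessarily grows like $\gamma^2/\epsilon$ and therefore dominates every $\gamma$-dependent term, which requires carefully bookkeeping the constants $C_2, C_3, C_{good}$ and the constant in $\tau_{\text{sep}}$ so that this estimate and completeness are simultaneously satisfiable.
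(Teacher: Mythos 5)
Your proposal is correct and follows essentially the same route as the paper's own proof: apply the upper bound of Lemma~\ref{thm:convex_relax} with $w'=w^*$, plug in the deterministic conditions \eqref{eqn:spar_mean_hat_tri_good_weights} and \eqref{eqn:spar_mean_square_good_weights}, and split on the size of $\norm{\widehat{\meandiff}(w)}_2$, using Lemma~\ref{thm:top_2s_lemma} together with Lemma~\ref{thm:cov_imply_mean_approx} to get $\lambda^* = \Omega\bigl(\norm{\widehat{\meandiff}(w)}_2^2/\epsilon\bigr)$ in the large-error regime and $\lambda^* > \tau_{\text{sep}}$ with a sufficiently large constant in the small-error regime. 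Your constant bookkeeping (in particular the use of $\delta \ge C_2\epsilon$ to defeat the cross term) matches the paper's argument, so there is nothing further to fix.
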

\begin{proof}
We apply the upper bound of Theorem~\ref{thm:convex_relax} with $w'=w^*$.
Therefore, we only need to upper bound\begin{align*}
O\left(s\norm{\covdiff\left(w'\right)}_{\infty,\infty} + \norm{\widehat{\meandiff}\left(w\right)}_2^2 + \left(L_F+ s\norm{\widetilde{\meandiff}\left(w'\right)}_{\infty}\right)\norm{\widehat{\meandiff}\left(w\right)}_2\right) - \lambda^*\left(w\right) < 0.
\end{align*}
Plugging in our assumptions on $w^*$, we just need to show\begin{align*}
C_2\left(\left(\lip_F^2 + \lip_\cov\right)\delta + \left(\left(\lip_F + \sqrt{\lip_\cov}\right)\delta+\lip_F\right)\norm{\widehat{\meandiff}\left(w\right)}_2 + \norm{\widehat{\meandiff}\left(w\right)}_2^2\right) - \lambda^*\left(w\right) < 0
\end{align*} for some absolute constant $C_2$.
If $\norm{\widehat{\meandiff}\left(w\right)}_2 \ge 5C_1 \left(\lip_F + \sqrt{L_\cov}\right)\delta$ for $C_1$ defined in Lemma~\ref{thm:cov_imply_mean}, using the argument in Lemma~\ref{thm:w_bad_output_plane}, we know $\lambda^*\left(w\right) = \Omega\left(\frac{\norm{\widehat{\meandiff}}_2^2}{\epsilon}\right)$.
Therefore, we have
\begin{align*}
\ell\left(w^*\right) &\le C_2\left(\left(\lip_F^2 + \lip_\cov\right)\delta + \left(\left(\lip_F + \sqrt{\lip_\cov}\right)\delta+\lip_F\right)\norm{\widehat{\meandiff}\left(w\right)}_2 + \norm{\widehat{\meandiff}\left(w\right)}_2^2\right) -  \Omega\left(\frac{\norm{\widehat{\meandiff}\left(w\right)}_2^2}{\epsilon}\right)\\
& \le C_2\left(\left(\lip_F^2 + \lip_\cov\right)\delta + \left(\left(\lip_F + \sqrt{\lip_\cov}\right)\delta+\lip_F\right)\norm{\widehat{\meandiff}\left(w\right)}_2\right) -  \Omega\left(\frac{\norm{\widehat{\meandiff}\left(w\right)}_2^2}{\epsilon}\right) \\
& \le C_2\lip_F\norm{\widehat{\meandiff}\left(w\right)}_2 - \Omega\left(\frac{\norm{\widehat{\meandiff}\left(w\right)}_2^2}{\epsilon}\right) \\
& < 0
\end{align*}
where the second equality we used $\norm{\widehat{\meandiff}\left(w\right)}_2 \ge 5C_1 \left(\lip_F + \sqrt{L_\cov}\right)\delta$ and the third we used the fact that $\delta = \Omega\left(\epsilon\right)$.
If $\norm{\widehat{\meandiff}\left(w\right)}_2 \le 5C_1 \left(\lip_F + \sqrt{\lip_\cov}\right)\delta$, since $\lambda^*\left(w\right) \ge \tau_{sep} \ge C_3\left(\lip_F^2 + \lip_\cov\right)\delta$ for $C_3$ sufficiently large, we have
\begin{align*}
\ell\left(w^*\right) &\le C_2\left(\left(\lip_F^2 + \lip_\cov\right)\delta + \left(\left(\lip_F + \sqrt{\lip_\cov}\right)\delta+\lip_F\right)\norm{\widehat{\meandiff}\left(w\right)}_2 + \norm{\widehat{\meandiff}\left(w\right)}_2^2\right) -  C_3\left(\lip_F^2 + \lip_\cov\right)\delta \\
& = - \Omega\left(\left(\lip_F^2 + \lip_\cov\right)\delta\right) < 0.
\end{align*}
Thus, whenever we output a hyperplane $\ell$, $\ell\left(w^*\right) < 0$.
\end{proof}

Now, by classical convex programming result, after polynomial iterations we can obtain $w$ such that there exists $w' \in C_{C_{good}\left(\lip_F^2+\lip_{\cov}\right)\delta}$, $\norm{w-w'}_{\infty} \le \frac{\epsilon\left(\sqrt{\lip_{\cov}}+\lip_F\right)}{nD}$.
Lemma~\ref{thm:last_step} shows this $w$ is good enough to make $\widehat{\theta_g}\left(w\right)$ a good estimate.
This finishes the proof of Theorem~\ref{thm:main}.
\begin{lem}\label{thm:last_step}
Given $w$, if there exists $w'\in C_{C_{good}\left(\lip_F^2+\lip_{\cov}\right)\delta}$ such that $\norm{w-w'}_\infty \le \frac{\epsilon\left(\sqrt{\lip_{\cov}}+\lip_F\right)}{mD}$, then $\norm{\widehat{\meandiff}\left(w\right)}_2 = O\left(\left(\sqrt{\lip_\cov}+\lip_F\right)\delta\right).$
\end{lem}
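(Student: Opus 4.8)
The plan is to reduce the claim to a bound on $\norm{P_s(\widetilde{\meandiff}(w))}_2$ and then transport the corresponding bound from the nearby good-weight vector $w'$. The key structural observation is that $\norm{P_s(v)}_2 = \max_{\abs{\set{S}}\le s}\norm{v^{\set{S}}}_2$ obeys the triangle inequality in $v$, so it behaves like a seminorm; this lets me perturb $w \to w'$ without ever differentiating the nonlinear hard-thresholding map $\tops$. By Lemma~\ref{thm:top_2s_lemma} applied to $w$ we have $\norm{\widehat{\meandiff}(w)}_2 \le 5\norm{P_s(\widetilde{\meandiff}(w))}_2$, so it suffices to show $\norm{P_s(\widetilde{\meandiff}(w))}_2 = O((\sqrt{\lip_\cov}+\lip_F)\delta)$.

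First I would control $w'$ itself. Since $w' \in C_{C_{good}(\lip_F^2+\lip_\cov)\delta}$, the defining inequality of the good-weight set bounds the sparse operator norm of its centered weighted second moment against $\cov(\theta_\func)$ by $C_{good}(\lip_F^2+\lip_\cov)\delta$. I then split on whether the hypothesis of Lemma~\ref{thm:cov_imply_mean} triggers (its assumptions on $w'^g$ hold by the conditions of Theorem~\ref{thm:main}): either $\norm{P_s(\widetilde{\meandiff}(w'))}_2 < C_3(\lip_F+\sqrt{\lip_\cov})\delta$ outright, or the lemma forces $\norm{P_s(\widetilde{\meandiff}(w'))}_2^2/(4\epsilon) \le C_{good}(\lip_F^2+\lip_\cov)\delta$. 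In the second case, using $\delta \ge C_2\epsilon$ turns the $1/\epsilon$ factor into a constant, giving $\norm{P_s(\widetilde{\meandiff}(w'))}_2^2 = O((\lip_F^2+\lip_\cov)\delta^2)$ and hence $\norm{P_s(\widetilde{\meandiff}(w'))}_2 = O((\lip_F+\sqrt{\lip_\cov})\delta)$ after the subadditivity bound $\sqrt{\lip_F^2+\lip_\cov} \le \lip_F + \sqrt{\lip_\cov}$. Either way $\norm{P_s(\widetilde{\meandiff}(w'))}_2 = O((\lip_F+\sqrt{\lip_\cov})\delta)$.

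The final step transports this from $w'$ to $w$. Since $\widetilde{\meandiff}(w)-\widetilde{\meandiff}(w') = \widetilde{\theta}(w)-\widetilde{\theta}(w') = \sum_{i=1}^m (w_i-w_i')\func(z_i)$ and the pruned samples satisfy $\norm{\func(z_i)}_2 \le D$, I would bound
\[
\norm{\widetilde{\meandiff}(w)-\widetilde{\meandiff}(w')}_2 \le \Big(\sum_{i=1}^m \abs{w_i-w_i'}\Big) D \le m\norm{w-w'}_\infty D \le \epsilon(\sqrt{\lip_\cov}+\lip_F),
\]
using the hypothesized $\ell_\infty$ closeness, which is $O((\sqrt{\lip_\cov}+\lip_F)\delta)$ because $\delta = \Omega(\epsilon)$. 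By the triangle inequality for $\norm{P_s(\cdot)}_2$ and the fact that any $s$-sparse restriction has $\ell_2$ norm at most the full $\ell_2$ norm, I conclude $\norm{P_s(\widetilde{\meandiff}(w))}_2 \le \norm{P_s(\widetilde{\meandiff}(w'))}_2 + \norm{\widetilde{\meandiff}(w)-\widetilde{\meandiff}(w')}_2 = O((\sqrt{\lip_\cov}+\lip_F)\delta)$. Combining with the reduction from the first paragraph yields the claim.

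I expect the main obstacle to be exactly this perturbation step: naively one would compare the thresholded estimators $\widehat{\theta}(w)$ and $\widehat{\theta}(w')$ directly, but $\tops$ is nonlinear and can be expansive when the selected supports of $w$ and $w'$ disagree. Routing the perturbation through the seminorm $\norm{P_s(\cdot)}_2$ and only then invoking Lemma~\ref{thm:top_2s_lemma} on $w$ sidesteps this difficulty; the remaining work is purely bookkeeping of constants and the use of $\delta=\Omega(\epsilon)$ to absorb the $1/\epsilon$ factor produced by Lemma~\ref{thm:cov_imply_mean}.
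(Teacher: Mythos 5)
Your proposal is correct and follows essentially the same route as the paper's proof: reduce via Lemma~\ref{thm:top_2s_lemma} to bounding $\norm{P_s(\widetilde{\meandiff}(w))}_2$, split $\widetilde{\meandiff}(w)$ into $\widetilde{\meandiff}(w')$ plus the perturbation $\sum_{i}(w_i-w_i')\func(z_i)$, bound the perturbation by $m\norm{w-w'}_\infty D \le \epsilon(\sqrt{\lip_\cov}+\lip_F)$, and absorb it using $\delta = \Omega(\epsilon)$. The only difference is that you make explicit (via the contrapositive of Lemma~\ref{thm:cov_imply_mean} and $\delta \ge C_2\epsilon$) the bound $\norm{P_s(\widetilde{\meandiff}(w'))}_2 = O((\lip_F+\sqrt{\lip_\cov})\delta)$ for the good weight $w'$, a step the paper's proof asserts without spelling out.
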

\begin{proof}
By the assumptions, we have\begin{align*}
	\norm{\widehat{\meandiff}\left(w\right)}_2 \le & 5\norm{P_s\left(\widetilde{\meandiff}_S\left(w\right)\right)}_2 \\
	= & 5 \norm{P_s\left(\widetilde{\meandiff}\left(w'\right) + \sum_{i=1}^{m}\left(w_i-w_i'\right)\left(\func\left(z_i\right)-\theta_\func\right)\right)}_2 \\
	\le & 5\norm{P_s\left(\meandiff\left(w'\right)\right)}_2 + \sum_{i=1}^{m} \abs{w_i-w_i'}\norm{\func\left(z_i\right)-\theta_\func}_2 \\
	=& O\left(\left(\sqrt{\lip_\cov}+\lip_F\right)\delta\right) + \left(\sqrt{\lip_{\cov}} + \lip_F\right)\epsilon \\
	= & O\left(\left(\sqrt{\lip_\cov}+\lip_F\right)\delta\right).
	\end{align*}
\end{proof}

\section{Technical Details of Sparse Mean Estimation}
\label{sec:sparse_mean}
In this section we prove Theorem~\ref{thm:sparse_mean}. 
Since $F\left(\func\right) = \mat{I}$, a constant function, we know $L_{\cov} = 1$ and $L_F=0$.
We adopt Algorithm 1 in~\citep{diakonikolas2016robust} to achieve the boundedness condition in Theorem~\ref{thm:main}.
The pseudocodes are listed in Algorithm~\ref{algo:prune_mean} for completeness.
Maximal inequality of Gaussian random variables shows with probability $1-\tau$, this procedure does not remove any example sampled from $P$.
\begin{algorithm}[tb]
	\caption{Naive Pruning for Gaussian Mean}
	\label{algo:prune_mean}
	\begin{algorithmic}[1]
		\STATE \textbf{Input:} $\left\{x_1,\cdots,x_n\right\}$ 
		\STATE \textbf{For} $i,j=1,\cdots,n$, let $\delta_{ij} = \norm{x_i-x_j}_2$.
		\FOR{$i=1,\cdots,j$}
		\STATE Let $A_i = \left\{j\in 1,\cdots, n: \delta_{ij} = \Omega\sqrt{d\log\left(n/\tau\right)}\right\}$
		\IF {$\abs{A_i} > 2\epsilon n$}
		\STATE remove $x_i$ from the set.
		\ENDIF
		\ENDFOR
	\end{algorithmic}
\end{algorithm}

Now we prove the concentration inequalities in Theorem~\ref{thm:main}.
Note when $n = \Omega\left(\frac{s^2\log\left(d/\tau\right)}{\epsilon^2}\right)$ Eqn.~\eqref{eqn:number_bad_points} -~\eqref{eqn:spar_mean_hat_tri_good_weights} can be proved through classical Bernoulli and Gaussian concentration inequalities.
For the remaining two, we use the following lemma.
\begin{lem}\label{thm:sparse_worst_sample_concentration}
	Fix $0 < \epsilon < 1/2$ and $\tau < 1$.
	There is a $\delta = O\left(\epsilon\sqrt{\log\left(1/\epsilon\right)}\right)$ such that if $x_1,\cdots,x_n \sim N\left(\mu,\mat{I}\right)$ and $n = \Omega\left(\frac{s\log d + \log \left(1/\tau\right)}{\delta^2}\right)$ then for any $w \in S_{n,\epsilon}$ the followings hold: \begin{align}
	\max_{\norm{v}_0\le s, \norm{v}_2 \le 1} \abs{v^\top \left(\sum_{i=1}^{n}w_i\left(x_i-\mu\right)\left(x_i-\mu\right)^\top - \mat{I}\right)v} & \le \delta \label{eqn:mean_concen_cov} \\
	\max_{\set{S} \subset [d],\abs{\set{S}} \le s} \norm{\sum_{i=1}^{n}w_i\left(x_i^{\set{S}}-\mu\right)}_2 & \le \delta. \label{eqn:mean_concen_mean}
	\end{align}
\end{lem}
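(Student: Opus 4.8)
The plan is to reduce both displayed inequalities to the uniform control of two scalar processes, $v \mapsto \sum_i w_i \inprod{v}{x_i-\mu}$ and $v \mapsto \sum_i w_i\inprod{v}{x_i-\mu}^2$, over all $s$-sparse unit vectors $v$ and all $w \in S_{n,\epsilon}$. Indeed, the left side of \eqref{eqn:mean_concen_mean} is exactly $\max_{\norm{v}_0\le s,\ \norm{v}_2\le 1}\sum_i w_i\inprod{v}{x_i-\mu}$, and writing $y_i=\inprod{v}{x_i-\mu}$ and taking $\norm{v}_2=1$, the quadratic form inside \eqref{eqn:mean_concen_cov} equals $\sum_i w_i(y_i^2-1)$. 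The first simplification I would exploit is that each objective is \emph{linear} in $w$, so its supremum over the polytope $S_{n,\epsilon}$ is attained at a vertex, namely the weights placing mass $\frac{1}{(1-2\epsilon)n}$ on the $(1-2\epsilon)n$ largest (for the upper bound) or smallest (for the lower bound) summands. Thus for each fixed $v$ the worst-case weighted sum is a \emph{trimmed} average, and it suffices to control trimmed sums of the $y_i$ and of the $y_i^2-1$.

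Next I would dispatch the supremum over directions by covering: union over the $\binom{d}{s}$ supports and, on each, a $1/4$-net of the unit sphere in $\R^s$, giving a net $\mathcal N$ with $\log\abs{\mathcal N}=O(s\log d)$ on which $\sparseopnorm{\cdot}$ is controlled up to a constant factor. The remaining task is to bound each trimmed sum uniformly over $\mathcal N$. I would split the trimmed top-$(1-2\epsilon)n$ sum as the full sum minus the contribution of the extreme $2\epsilon n$ order statistics. The full sums $\frac1n\sum_i y_i$ and $\frac1n\sum_i(y_i^2-1)$ are ordinary sub-Gaussian / sub-exponential empirical averages, whose uniform deviation over $\mathcal N$ is $O(\sqrt{s\log d/n})$; this is precisely where $n=\Omega((s\log d+\log(1/\tau))/\delta^2)$ enters, rendering this statistical term $O(\delta)$.

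The delicate part, and the main obstacle, is controlling the contribution of the extreme $2\epsilon n$ summands uniformly over $v$ \emph{without} a union bound over the $\binom{n}{2\epsilon n}$ candidate trimmed sets, which would be unaffordable at the target sample size. The device I would use is the convex-conjugate representation of a top-$k$ sum, $\sum_{\text{top }k}a_i=\min_t\big(kt+\sum_i(a_i-t)_+\big)$: fixing a single threshold $t$ converts the set-dependent quantity into a sum of i.i.d. truncated variables $\sum_i(\pm y_i - t)_+$ (resp. $\sum_i(y_i^2-t)_+$), which is again amenable to uniform concentration over $\mathcal N$. Choosing $t\asymp\sqrt{\log(1/\epsilon)}$ for the linear process and $t\asymp\log(1/\epsilon)$ for the quadratic one matches the Gaussian quantile scale of the extreme $2\epsilon$-fraction, and the resulting bias is then read off from Gaussian tail integrals. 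The extreme $2\epsilon$-fraction of a standard normal has conditional mean of order $\sqrt{\log(1/\epsilon)}$, yielding a trimming bias of order $\epsilon\sqrt{\log(1/\epsilon)}=\delta$ for \eqref{eqn:mean_concen_mean}; for \eqref{eqn:mean_concen_cov} the upper side is essentially free since $y_i^2\ge 0$ forces the top-$(1-2\epsilon)n$ average below the full average (bias $O(\epsilon)$), while the lower side subtracts the $2\epsilon n$ largest squares, whose conditional mean is of order $\log(1/\epsilon)$. I would finish by combining the statistical term $O(\sqrt{s\log d/n})$ with the trimming bias and absorbing both into $\delta$, flagging that the covariance lower deviation is the dominant contribution and is what governs the precise $\epsilon$-dependence of the admissible $\delta$.
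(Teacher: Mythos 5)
Your proposal is correct in outline and shares the paper's skeleton --- reduction to the vertices of $S_{n,\epsilon}$ (trimmed means) by linearity, followed by a union bound over the $\binom{d}{s}$ supports with classical concentration on each --- but it diverges at the key step. The paper, following Lemma 4.5 of \citet{diakonikolas2016robust}, also writes the trimmed sum as the full sum minus the sum over the discarded set, but then simply takes a union bound over all $\binom{n}{2\epsilon n}$ possible discarded sets, choosing the per-event failure probability $\tau' = \binom{n}{(1-2\epsilon)n}^{-1}\tau$; you instead avoid that combinatorial union bound via the top-$k$ duality $\sum_{\mathrm{top}\,k} a_i = \min_t \big(kt + \sum_i (a_i-t)_+\big)$ with a fixed threshold at the Gaussian quantile scale. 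Both routes close, and your stated reason for the detour --- that the union bound over trimmed sets ``would be unaffordable at the target sample size'' --- is incorrect: the union bound costs $\log\binom{n}{2\epsilon n} = O(\epsilon n \log(1/\epsilon))$ in the exponent, while for a fixed discarded set the complement sum must deviate by $\delta/\epsilon$ in relative terms, so the concentration exponent for the linear statistic is of order $n\delta^2/\epsilon$; with $\delta = C\epsilon\sqrt{\log(1/\epsilon)}$ this is $C^2 \epsilon n \log(1/\epsilon)$, which dominates the union-bound cost once $C$ is large. That is exactly how the paper's proof is closed, so your method is a clean alternative rather than a necessity for \eqref{eqn:mean_concen_mean}.

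Where your route genuinely pays off is the quadratic statistic, and there you are more careful than the paper. As your truncation calculation shows, discarding the top $2\epsilon n$ values of $\langle v, x_i\rangle^2$ shifts the weighted second moment by $\Theta(\epsilon\log(1/\epsilon))$; this is a real deviation, not an artifact of the proof, so \eqref{eqn:mean_concen_cov} cannot hold with $\delta = O(\epsilon\sqrt{\log(1/\epsilon)})$ as the lemma asserts --- the correct scale for the covariance part is $\epsilon\log(1/\epsilon)$, matching the two distinct scales $\delta_1, \delta_2$ in \citet{diakonikolas2016robust}, while only the mean bound \eqref{eqn:mean_concen_mean} holds at $\epsilon\sqrt{\log(1/\epsilon)}$. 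The paper's proof misses this because it applies the sub-Gaussian-regime covariance bound (its requirement $n = \Omega(\epsilon(s+\log(1/\tau'))/\delta^2)$ for the complement term) at relative deviation $\delta/(4\epsilon)\gg 1$, where only the sub-exponential regime with exponent proportional to $n\delta$ is valid; running its own union bound correctly forces $\delta = \Omega(\epsilon\log(1/\epsilon))$ for \eqref{eqn:mean_concen_cov}. Your closing remark that the covariance lower deviation governs the admissible $\delta$ is exactly right; the discrepancy propagates only as logarithmic factors in $1/\epsilon$ into the downstream corollary.
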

\begin{proof}
The proof is similar to Lemma 4.5 of~\citep{diakonikolas2016robust}.
We prove the concentration result for Eqn.~\eqref{eqn:mean_concen_cov}, Eqn,~\eqref{eqn:mean_concen_mean} follows similarly by replacing the classical concentration inequality of covariance by that of mean.
Without loss of generality, we assume $\mu=0$.
For any $J \subset [n]$, $\abs{J} = \left(1-2\epsilon\right)n$, we let $w^J$ be the vector which is given by $w_i^J=\frac{1}{\abs{J}}$ for $i \in J$ and $w_i^{J} = 0$ otherwise.
By convexity, it suffices to show that \begin{align*}
\prob\left[\forall J \subset [n]: \abs{J}=\left(1-2\epsilon\right)n \text{ and } \max_{\set{S}\subset[d], \abs{S}\le s}\norm{\sum_{i=1}^{n}w_i^Jx_i^{\set{S}}\left(x_i^{\set{S}}\right)^\top -\mat{I}}_{2}\ge \delta\right]\le\tau.
\end{align*}
We first fix $\tau'$, $\set S \subset [d]$ with $\abs{S} \le s$ and $J \subset [n]$.
Using triangle inequality we have \begin{align*}
&\norm{\sum_{i=1}^{n}w_i^Jx_i^{\set{S}}\left(x_i^{\set{S}}\right)^\top - \mat{I}}_{op} \\ \le &\norm{\frac{1}{\left(1-2\epsilon\right)n}\sum_{i=1}^{n}x_i^{\set{S}}\left(x_i^{\set{S}}\right)^\top - \frac{1}{\left(1-2\epsilon\right)n}\mat{I}}_{op} + \norm{\frac{1}{\left(1-2\epsilon\right)n}\sum_{i \notin J}x_i^{\set S}\left(x_i^{\set S}\right)^\top - \frac{2\epsilon}{1-2\epsilon}\mat{I}}_{op}.
\end{align*}
By classical concentration bound, the first term is small than $\frac{\delta}{2}$ with probability at least $1-\frac{\tau'}{2}$ if $n = \Omega\left(\frac{s + \log\left(1/\tau'\right)}{\delta^2}\right).$
Similarly, the second term is smaller than $\delta/2$ with probability at least $1-\frac{\tau'}{2}$ if $n = \Omega\left(\frac{\epsilon \left(s+\log\left(1/\tau'\right)\right)}{\delta^2}\right)$.
Now by union bound over all subset $\set{S} \subset [d]$ with $\abs{\set{S}}\le s$ we have if $n = \Omega\left(\frac{s\log d +\log\left(1/\tau\right)}{\delta^2}\right)$ \[
\norm{\frac{1}{\left(1-2\epsilon\right)n}\sum_{i=1}^{n}x_i^{\set{S}}\left(x_i^{\set{S}}\right)^\top - \frac{1}{\left(1-2\epsilon\right)n}\mat{I}}_{op} \le \frac{\delta}{2}.
\]
Similarly, if $n = \Omega\left(\frac{\epsilon\left(s\log d + \log\left(1/\tau'\right)\right)}{\delta^2}\right)$ we have \[
\norm{\frac{1}{\left(1-2\epsilon\right)n}\sum_{i \notin J}x_i^{\set S}\left(x_i^{\set S}\right)^\top - \frac{2\epsilon}{1-2\epsilon}\mat{I}}_{op} \le \frac{\delta}{2}.
\]
Now choosing $\tau' = \begin{pmatrix}
n \\
\left(1-2\epsilon\right)n
\end{pmatrix}^{-1}\tau$ and taking union bounds over all $J$, by our choice of $\delta$ and $n$ in the theorem we have \[
\norm{\frac{1}{\left(1-2\epsilon\right)n}\sum_{i \notin J}x_i^{\set S}\left(x_i^{\set S}\right)^\top - \frac{2\epsilon}{1-2\epsilon}\mat{I}}_{op} \le \frac{\delta}{2}.
\]
Our proof is complete.
\end{proof}
We accompany our upper bound with the following minimax lower bound.
\begin{thm}[Lower Bound of Sparse Gaussian Mean Estimation]
	There are some constants $C,c$ such that \begin{align*}
	\inf_{\widehat{\mu}}\sup_{x\sim N\left(\mu,\mat{I}\right),\norm{\mu}_0\le s} \sup_{Q} \prob\left[\norm{\widehat{\mu}-\mu}_2^2 \ge C\left(\frac{s\log\left(d\right)}{n} \vee \epsilon^2\right)\right] \ge c.
	\end{align*}
\end{thm}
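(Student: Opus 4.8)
The plan is to establish the two terms in the maximum separately and then combine them. Writing $R(\widehat{\mu};\mu,Q) = \prob[\norm{\widehat{\mu}-\mu}_2^2 \ge C(\frac{s\log d}{n}\vee\epsilon^2)]$ for the indicator-type risk, note that for any estimator $\widehat{\mu}$ the supremum over the full family dominates the supremum over any sub-family, so $\inf_{\widehat{\mu}}\sup_{\mu,Q} R \ge \max\big(\inf_{\widehat{\mu}}\sup_{\mathcal{F}_1} R,\ \inf_{\widehat{\mu}}\sup_{\mathcal{F}_2} R\big)$ for any two sub-families $\mathcal{F}_1,\mathcal{F}_2$ (using $\max(f,g)\ge f,g$ pointwise before taking the infimum). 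It therefore suffices to produce one construction yielding the statistical rate $\frac{s\log d}{n}$ and a second yielding the contamination rate $\epsilon^2$, each over a legitimate subset of the model. Crucially, the uncontaminated Gaussian model is itself a special case: taking $Q = N(\mu,I)$ makes the mixture $(1-\epsilon)N(\mu,I)+\epsilon Q = N(\mu,I)$ clean for any fixed $\epsilon$.

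\emph{Statistical term via Fano.} For the $\frac{s\log d}{n}$ term I would work in the clean model just described and apply Fano's method over a sparse packing. Using the Varshamov--Gilbert bound I would extract a family $\{\mu^{(1)},\dots,\mu^{(M)}\}$ of $s$-sparse vectors with $\log M \gtrsim s\log(d/s)$ that are pairwise well separated in Hamming distance. The key calibration is to choose the common amplitude $\gamma$ with $\gamma^2 \asymp \frac{\log(d/s)}{n}$, so that each pairwise divergence $\kl\big(N(\mu^{(i)},I)^{\otimes n}\,\|\,N(\mu^{(j)},I)^{\otimes n}\big) = \tfrac{n}{2}\norm{\mu^{(i)}-\mu^{(j)}}_2^2$ is a small constant multiple of $\log M$, while the separation satisfies $\norm{\mu^{(i)}-\mu^{(j)}}_2^2 \gtrsim \gamma^2 s \asymp \frac{s\log(d/s)}{n}$, which is $\asymp \frac{s\log d}{n}$ in the sparse regime $s \ll d$. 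Fano's inequality then lower bounds the resulting testing, hence estimation, error probability by an absolute constant.

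\emph{Contamination term via indistinguishability.} For the $\epsilon^2$ term I would use a two-point Le Cam argument tailored to the contamination model. Take $\mu_1 = 0$ and $\mu_2 = c\epsilon\, e_1$, both trivially $s$-sparse. Since $\TV\big(N(\mu_1,I),N(\mu_2,I)\big) \lesssim \norm{\mu_1-\mu_2}_2 = c\epsilon$, choosing $c$ small enough forces $\TV \le \tfrac{\epsilon}{1-\epsilon}$, which is exactly the condition under which one can exhibit valid outlier distributions $Q_1,Q_2$ making the two contaminated mixtures coincide:
\[
(1-\epsilon)N(\mu_1,I) + \epsilon Q_1 = (1-\epsilon)N(\mu_2,I) + \epsilon Q_2.
\]
Under this identity the samples carry no information distinguishing the two hypotheses, so any estimator must incur error at least $\tfrac12\norm{\mu_1-\mu_2}_2 \asymp \epsilon$ on one of them with constant probability, giving $\norm{\widehat{\mu}-\mu}_2^2 \gtrsim \epsilon^2$.

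\emph{Main obstacle.} I expect the delicate step to be the explicit construction of the outlier distributions $Q_1,Q_2$ in the contamination argument: one must verify that the ``mass-moving'' measures implied by the total-variation coupling are genuinely nonnegative, and hence valid probability distributions, precisely once $\TV \le \tfrac{\epsilon}{1-\epsilon}$, rather than merely signed measures. The Fano part is by comparison routine, the only care being to ensure $s\log(d/s)\asymp s\log d$ in the sparse regime so that the stated $\log d$ (in place of $\log(d/s)$) scaling is legitimate.
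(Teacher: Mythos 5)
Your proposal is correct and takes essentially the same approach as the paper: split the bound into a clean statistical term and an $\epsilon^2$ contamination term obtained from a two-point family of sparse means with $\mathrm{TV}\left(N(\mu_1,I),N(\mu_2,I)\right) \lesssim \epsilon$. The paper simply outsources both ingredients to citations---the known $s\log d/n$ minimax rate for sparse Gaussian means (which you re-prove via Fano and Varshamov--Gilbert), and Theorem 4.1 of Chen et al.\ (2015), whose content is precisely the coupling construction of $Q_1,Q_2$ that you re-derive---so your write-up is a self-contained version of the same argument.
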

\begin{proof}
	First, the minimax lower bound for no adversary is $\asymp \frac{s\log d}{n}$.
	Further we know there exist $\mu_1$ and $\mu_2$ with $\norm{\mu_1}_0,\norm{\mu_2}_0 \le s$ and $\text{TV}\left(N\left(\mu_1,\mat{I}\right),N\left(\mu_2,\mat{I}\right)\right) \le \frac{2\epsilon}{1-2\epsilon}$ such that $\norm{\mu_1 -\mu_2}_2^2 \ge C'\epsilon$ (just consider two vectors each has only one non-zero entry).
	Now apply Theorem 4.1 of~\citep{chen2015robust}.
\end{proof}

\section{Technical Details of Sparse Covariance Estimation}
\label{sec:sparse_cov}
In this section we prove Theorem~\ref{thm:sparse_cov}.
By Theorem 4.15 of~\citep{diakonikolas2016robust} we have the following formula for the $\cov\left(\Omega\right)$\[
F\left(\Omega\right) = \Omega \otimes \Omega + \vect{\Omega} \otimes \vect{\Omega}.
\]
Now observe that $\tr\left(\Sigma\right) = d$ so for $x_1,\cdots,x_n \sim \mathcal{N}\left(0,\Sigma\right)$, using maximal inequality of Gaussian random variables, we have  \[
\prob\left[\max_i\norm{x_i}_2 \ge \Omega\left(d\sqrt{\log\left(N/\tau\right)}\right)\right] \le \tau.
\]
Therefore we can apply Algorithm~\ref{algo:prune_covariance} to achieve the boundedness assumption in Theorem~\ref{thm:main}.
Lastly, for the concentration bounds, note that 
Eqn.~\eqref{eqn:number_bad_points}, Eqn.~\eqref{eqn:spar_mean_hat_tri_good_weights} and Eqn.~\eqref{eqn:spar_mean_square_good_weights} can be proved by polynomial of Gaussian random variables and Eqn.~\eqref{eqn:spar_mean_hat_tri} and Eqn.~\eqref{eqn:spar_mean_square} are simple corollaries of Theorem 4.17 of~\cite{diakonikolas2016robust} with a union bound over subsets of $[d]$ with cardinality $s$.
\begin{algorithm}[tb]
	\caption{Pruning for Sparse Covariance}
	\label{algo:prune_covariance}
	\begin{algorithmic}[1]
		\STATE \textbf{Input:} $\left\{x_1,\cdots,x_n\right\}$ 
		\FOR{$i=1,\cdots,n$}
		\IF {$\norm{x_i}_2 = \Omega\left(d\sqrt{\log\left(n/\tau\right)}\right)$.}
		\STATE remove $x_i$ from the set.
		\ENDIF
		\ENDFOR
	\end{algorithmic}
\end{algorithm}

\section{Technical Details of Sparse Linear Regression}
\label{sec:sparse_linear}
In this section we study the sparse linear regression problem.
We begin by investigating the basic properties of our model.
\begin{thm}\label{thm:sparse_linear_basics}
	If \[
	x\sim N\left(0,\mat{I}\right), y = x\beta + \xi \text{ where } \xi \sim N\left(0,1\right),
	\] then we have\begin{align*}
	\expect \left[yx\right] &= \beta \\
	\cov\left[yx\right] &= \left(\norm{\beta}_2^2+1\right)\mat{I} + \beta\beta^\top
	\end{align*}
\end{thm}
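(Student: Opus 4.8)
The plan is to compute both quantities by direct integration against the joint law of $(x,\xi)$, exploiting the independence of $\xi$ and $x$ together with the low-order moments of a standard Gaussian vector. Throughout I read $x\beta$ as the inner product $\inprod{x}{\beta}=x^\top\beta$, so that $yx = \inprod{x}{\beta}\,x + \xi x$.

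For the mean, I would split $\expect[yx] = \expect[\inprod{x}{\beta}\,x] + \expect[\xi x]$. The second summand vanishes because $\xi$ is independent of $x$ and mean-zero, giving $\expect[\xi]\expect[x]=0$. For the first summand I would rewrite $\inprod{x}{\beta}\,x = xx^\top\beta$ and use $\expect[xx^\top]=\mat{I}$ to obtain $\expect[yx]=\mat{I}\beta=\beta$.

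For the covariance I would start from $\cov[yx] = \expect[y^2 xx^\top] - \expect[yx]\expect[yx]^\top = \expect[y^2 xx^\top] - \beta\beta^\top$, so the real work reduces to evaluating $\expect[y^2 xx^\top]$. Expanding $y^2 = \inprod{x}{\beta}^2 + 2\inprod{x}{\beta}\xi + \xi^2$, the cross term drops out (again by independence and $\expect[\xi]=0$) and the $\xi^2$ term contributes $\expect[\xi^2]\,\expect[xx^\top] = \mat{I}$. The only nontrivial piece is the fourth-moment tensor $\expect[\inprod{x}{\beta}^2 xx^\top]$.

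This tensor is the main obstacle, and I would handle it entrywise via Isserlis'/Wick's theorem: for a standard Gaussian vector, $\expect[x_p x_q x_i x_j] = \delta_{pq}\delta_{ij} + \delta_{pi}\delta_{qj} + \delta_{pj}\delta_{qi}$. Summing this against $\beta_p\beta_q$ yields $\expect[\inprod{x}{\beta}^2 x_i x_j] = \norm{\beta}_2^2\,\delta_{ij} + 2\beta_i\beta_j$, that is, $\expect[\inprod{x}{\beta}^2 xx^\top] = \norm{\beta}_2^2\,\mat{I} + 2\beta\beta^\top$. Combining the three contributions gives $\expect[y^2 xx^\top] = (\norm{\beta}_2^2+1)\mat{I} + 2\beta\beta^\top$, and subtracting $\beta\beta^\top$ produces the claimed form $(\norm{\beta}_2^2+1)\mat{I}+\beta\beta^\top$. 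The computation is routine modulo the correct bookkeeping of the three Wick pairings, and in particular the factor $2$ on $\beta\beta^\top$ arises from the two ``off-diagonal'' pairings before the self-covariance subtraction reduces it to a single $\beta\beta^\top$.
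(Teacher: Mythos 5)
Your proof is correct and follows essentially the same route as the paper's: both reduce $\cov[yx]$ to $\expect[y^2xx^\top]-\beta\beta^\top$, kill the cross term by independence of $\xi$ and $x$, and evaluate the fourth-moment tensor $\expect[\inprod{x}{\beta}^2 xx^\top]$ entrywise via Isserlis'/Wick's theorem to get $\norm{\beta}_2^2\,\mat{I}+2\beta\beta^\top$. The only cosmetic difference is that you write the Wick pairings with Kronecker deltas while the paper writes them as products of pairwise expectations; the bookkeeping is identical.
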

\begin{proof}
	We first look at the expectation.
	\begin{align*}
	\expect\left[yx\right] = & \expect\left[x\left(\beta^\top x + \xi\right)\right] \\
	& = \expect\left[x x^\top\right]\beta + \expect\left[x\right]\expect\left[\xi\right] \\
	& = \beta.
	\end{align*}
	For the covariance note \begin{align}
	\cov\left[yx\right] = \expect\left[y^2x x^\top\right] - \beta\beta^\top.
	\end{align}
	We expand the first term.
	\begin{align*}
	\expect\left[y^2xx^\top\right]
	= & \expect\left[x\left(x^\top \beta + \xi\right)\left(\beta^\top x +\xi\right)x^\top\right] \\
	= & \expect\left[
	xx^\top \beta\beta^\top xx^\top
	\right] + \expect\left[\xi^2xx^\top\right] \\
	= & \expect\left[
	xx^\top \beta\beta^\top xx^\top
	\right] + \mat{I}.
	\end{align*}
	where we have used the independence of $x$ and $\xi$ to cancel out the cross terms.
	Now consider a single coordinate of $\expect\left[
	xx^\top \beta\beta^\top xx^\top
	\right]$, using Isserlis's theorem we have\begin{align*}
	\expect\left[
	e_i^\top xx^\top \beta\beta^\top xx^\top e_j
	\right] &= 2\expect\left[e_i^\top xx^\top \beta\right]\expect\left[\beta^\top x x^\top e_j\right] + \expect\left[e_i^\top x x^\top e_j\right]\expect\left[\beta^\top x x^\top \beta\right] \\
	= & \begin{cases}
	2\beta_i^2 + \norm{\beta}_2^2 &\text{ if } i=j \\
	2\beta_i \beta_j &\text{ if } i \neq j. 
	\end{cases}
	\end{align*}
	Note this implies \[
	\expect\left[
	xx^\top \beta\beta^\top xx^\top
	\right] = \norm{\beta}_2^2\mat{I} + 2\beta\beta^\top.
	\]
	Therefore, we have \[
	\cov\left[yx\right] = \left(\norm{\beta}_2^2+1\right)\mat{I} + \beta\beta^\top.
	\]
\end{proof}
With these expressions at hand, it is easy to upper bound $\lip _F$ and $\lip_\cov$.
\begin{cor}\label{thm:sparse_linear_lips}
	Under the same assumptions as Theorem~\ref{thm:sparse_linear_basics}, we have \begin{align*}
	\norm{\cov\left(yx\right)}_{op} \le 2\norm{\beta}_2^2 + 1.
	\end{align*}
	Further, if we define $F\left(\widehat{\beta}\right) = \left(\norm{\beta}_2^2+1\right)\mat{I}+\beta\beta^\top$, then it satisfies\begin{align}
	\norm{F\left(\beta\right)-F\left(\widehat{\beta}\right)}_{op} \le 4\norm{\beta}_2\norm{\beta-\widehat{\beta}}_2 + 2\norm{\beta-\widehat{\beta}}_2^2.
	\end{align}
\end{cor}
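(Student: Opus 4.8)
The plan is to start from the closed-form covariance computed in Theorem~\ref{thm:sparse_linear_basics}, namely $\cov(yx) = (\norm{\beta}_2^2 + 1)\mat{I} + \beta\beta^\top$, and to read off both bounds directly from this algebraic expression. Throughout I interpret the covariance map as $F(v) = (\norm{v}_2^2 + 1)\mat{I} + vv^\top$ for an arbitrary argument $v$ (so that $F(\beta) = \cov(yx)$), matching the general form $\kappa_1\mat{I} + \kappa_2 vv^\top$ highlighted earlier; the appearance of $\beta$ in the stated definition of $F(\widehat{\beta})$ is read as the evaluation of this map at $\widehat{\beta}$.

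For the first bound, I would observe that $(\norm{\beta}_2^2 + 1)\mat{I}$ and $\beta\beta^\top$ are both positive semidefinite, so the operator norm of their sum equals its largest eigenvalue. The matrix acts as multiplication by $\norm{\beta}_2^2 + 1$ on the orthogonal complement of $\beta$ and as multiplication by $(\norm{\beta}_2^2 + 1) + \norm{\beta}_2^2 = 2\norm{\beta}_2^2 + 1$ along $\beta$ itself; hence $\norm{\cov(yx)}_{op} = 2\norm{\beta}_2^2 + 1$, which yields the claimed inequality (in fact with equality). Alternatively one simply applies the triangle inequality together with $\norm{\beta\beta^\top}_{op} = \norm{\beta}_2^2$.

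For the second bound, writing $\Delta = \beta - \widehat{\beta}$, I would split the difference into its scalar-multiple-of-identity part and its rank-one part:
\begin{align*}
F(\beta) - F(\widehat{\beta}) = \big(\norm{\beta}_2^2 - \norm{\widehat{\beta}}_2^2\big)\mat{I} + \big(\beta\beta^\top - \widehat{\beta}\widehat{\beta}^\top\big).
\end{align*}
For the scalar factor I would use $\norm{\beta}_2^2 - \norm{\widehat{\beta}}_2^2 = 2\beta^\top\Delta - \norm{\Delta}_2^2$, so that by Cauchy--Schwarz $\abs{\norm{\beta}_2^2 - \norm{\widehat{\beta}}_2^2} \le 2\norm{\beta}_2\norm{\Delta}_2 + \norm{\Delta}_2^2$. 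For the rank-one part I would expand $\widehat{\beta}\widehat{\beta}^\top = (\beta - \Delta)(\beta - \Delta)^\top$ to obtain $\beta\beta^\top - \widehat{\beta}\widehat{\beta}^\top = \beta\Delta^\top + \Delta\beta^\top - \Delta\Delta^\top$, then bound its operator norm by the triangle inequality and $\norm{uv^\top}_{op} = \norm{u}_2\norm{v}_2$, giving $\norm{\beta\beta^\top - \widehat{\beta}\widehat{\beta}^\top}_{op} \le 2\norm{\beta}_2\norm{\Delta}_2 + \norm{\Delta}_2^2$. Adding the two estimates via the triangle inequality on the operator norm produces exactly $4\norm{\beta}_2\norm{\beta - \widehat{\beta}}_2 + 2\norm{\beta - \widehat{\beta}}_2^2$, which also identifies $\lip_F = 4\norm{\beta}_2$ and the quadratic constant $C = 2$ in condition~\eqref{eqn:F_lip}. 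This argument is a sequence of elementary norm inequalities, so there is no serious obstacle; the only place requiring care is the algebraic expansion of the rank-one difference $\beta\beta^\top - \widehat{\beta}\widehat{\beta}^\top$ and the bookkeeping ensuring the two cross terms each contribute $\norm{\beta}_2\norm{\Delta}_2$, so that the coefficients $4$ and $2$ emerge sharply rather than loosely.
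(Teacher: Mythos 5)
Your proposal is correct and follows essentially the same route as the paper's proof: the paper likewise bounds $\norm{\cov(yx)}_{op}$ by the triangle inequality with $\norm{\beta\beta^\top}_{op} = \norm{\beta}_2^2$, and for the second claim expands $F(\beta)-F(\widehat{\beta})$ in powers of $\Delta = \beta - \widehat{\beta}$ into the same five terms ($2\beta^\top\Delta\,\mat{I}$, $\beta\Delta^\top$, $\Delta\beta^\top$, $-\norm{\Delta}_2^2\mat{I}$, $-\Delta\Delta^\top$) before applying the triangle inequality term by term. Your only cosmetic differences are computing the first bound via exact eigenvalues (noting it holds with equality) and grouping the expansion into a scalar part and a rank-one part, which changes nothing substantive.
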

\begin{proof}
	For the operator norm, of the covariance, using triangle inequality, we have
	\begin{align*}
	\norm{\cov\left(yx\right)}_{op} & = \norm{\left(\norm{\beta}_2^2+1\right)\mat{I} + \beta\beta^\top}_{op} \\
	& \le 2\norm{\beta}_2^2 + 1.
	\end{align*}
	Now for $F$, note we can express it as sum of terms involves difference of $\beta$ and $\widehat{\beta}$.
	\begin{align*}
	& F\left(\beta\right) - F\left(\widehat{\beta}\right) \\
	= & 2\beta^\top\left(\beta-\widehat{\beta}\right)I + \beta\left(\beta-\widehat{\beta}\right)^\top +\left(\beta-\widehat{\beta}\right)\beta^\top- \norm{\beta-\widehat{\beta}}_2^2I - \left(\beta-\widehat{\beta}\right)\left(\beta-\widehat{\beta}\right)^\top.
	\end{align*}
	Therefore, using triangle inequality on the operator norm, we have\begin{align*}
	\norm{F\left(\beta\right)-F\left(\widehat{\beta}\right)}_{op} \le 4\norm{\beta}_2\norm{\beta-\widehat{\beta}}_2 + 2\norm{\beta-\widehat{\beta}}_2^2.
	\end{align*}
\end{proof}
Now to obtain the boundedness assumption, we can use the procedure in Algorithm~\ref{algo:prune_sparse_linear}.
Again, maximal inequality of Gaussian random variables shows with probability $1-\tau$, this procedure does not remove any example sampled from $P$.


\begin{algorithm}[tb]
	\caption{Pruning for Sparse Linear Regression}
	\label{algo:prune_sparse_linear}
	\begin{algorithmic}[1]
		\STATE \textbf{Input:} $\left\{\left(y_1,x_1\right),\cdots,\left(y_n,x_n\right)\right\}$ 
		\FOR{$i=1,\cdots,n$}
		\IF {$\norm{x_i}_2 = \Omega\left(d\sqrt{\log\left(n/\tau\right)}\right)$ or $\abs{y_i} = \Omega\left(\left(\rho^2+1\right)\sqrt{\log\left(n/\tau\right)}\right)$}
		\STATE remove $\left(y_i,x_i\right)$ from the set.
		\ENDIF
		\ENDFOR
	\end{algorithmic}
\end{algorithm}

It remains to prove the concentration bounds.
When $n = \Omega\left(\frac{s^2\log\left(d/\tau\right)}{\epsilon^2}\right)$, Eqn.~\eqref{eqn:number_bad_points}, Eqn.~\eqref{eqn:spar_mean_hat_tri_good_weights} and Eqn.~\eqref{eqn:spar_mean_square_good_weights} can be proved through classical Bernoulli and Gaussian concentration inequalities.
For the remaining two, the following lemma suffices.
\begin{lem}[Concentration bounds for Sparse Linear Regression]\label{thm:sparse_linear_mean_concentration} 	Suppose for $i=1,\cdots, n$, let \[
x_i\sim N\left(0,\mat{I}\right), y_i = x_i\beta + \xi_i \text{ where } \xi_i \sim N\left(0,1\right).
\] 
Then if $n=\Omega\left(\frac{s\log\left(d/\tau\right)}{\epsilon^2}\right)$, then there is a $\delta = O\left(\epsilon\log^2\left(1/\epsilon\right)\right)$ that with probability at least $1-\tau$, we have for any subset $\set{S} \subset [d]$, $\abs{\set{S}} \le s$ and any $w \in S_{n,\epsilon}$, the followings hold\begin{align*}
	\norm{\sum_{i=1}^{n}w_iy_ix_i^{\set{S}}-\beta^{\set{S}}}_2 & \le \delta \left(\norm{\beta}_2+1\right) \\
	\norm{\sum_{i=1}^{n}w_i\left(y_ix_i^{\set{S}}-\beta^{\set{S}}\right)\left(y_ix_i^{\set{S}}-\beta^{\set{S}}\right)^\top - \left(1+\norm{\beta}_2^2\right)\mat{I}_s - \beta^{\set{S}}\left(\beta^{\set{S}}\right)^\top}_{op} & \le \delta \left(\norm{\beta}_2^2+1\right). 
\end{align*}
\end{lem}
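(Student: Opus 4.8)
The plan is to mirror the proof of Lemma~\ref{thm:sparse_worst_sample_concentration}, the only essential novelty being that the relevant summands are now heavier-tailed than in the Gaussian-mean case, which is exactly what forces the extra logarithmic factors in $\delta$. Both left-hand sides are convex functions of $w$ (a Euclidean norm of an affine map of $w$ in the first inequality, and an operator norm of an affine matrix-valued map in the second), so the supremum over the polytope $S_{n,\epsilon}$ is attained at a vertex. The vertices are exactly the weight vectors $w^J$ that are uniform on some $J\subset[n]$ with $\abs{J}=(1-2\epsilon)n$. Hence it suffices to bound, uniformly over all such $J$ and all $\set{S}\subset[d]$ with $\abs{\set{S}}\le s$, the two quantities obtained by replacing $\sum_i w_i$ with $\frac{1}{(1-2\epsilon)n}\sum_{i\in J}$.

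First I would fix $\set{S}$ and $J$ and split the uniform average as $\frac{1}{(1-2\epsilon)n}\sum_{i\in J}=\frac{1}{(1-2\epsilon)n}\sum_{i=1}^{n}-\frac{1}{(1-2\epsilon)n}\sum_{i\notin J}$, exactly as in Lemma~\ref{thm:sparse_worst_sample_concentration}. Theorem~\ref{thm:sparse_linear_basics} supplies the population targets $\expect[yx^{\set{S}}]=\beta^{\set{S}}$ and $\cov(yx^{\set{S}})=(1+\norm{\beta}_2^2)\mat{I}_s+\beta^{\set{S}}(\beta^{\set{S}})^\top$, and the decomposition is arranged so that the full-sample term concentrates at $\tfrac{1}{1-2\epsilon}$ times the target while the removed $2\epsilon n$-point term carries the complementary $\tfrac{2\epsilon}{1-2\epsilon}$ multiple. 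For the full-sample term I would use standard concentration for independent sums: the summands $y_i x_i^{\set{S}}$ are products of (correlated) Gaussians, hence sub-exponential with scale $\sqrt{\norm{\beta}_2^2+1}$, while the matrix summands $(y_i x_i^{\set{S}}-\beta^{\set{S}})(\cdot)^\top$ are degree-four Gaussian polynomials with scale $\norm{\beta}_2^2+1$; a Bernstein/Hanson--Wright bound for a fixed $\set{S}$, followed by a union bound over the $\binom{d}{s}\le d^s$ sparse subsets, gives deviation $O(\sqrt{s\log d/n})=O(\epsilon)$ under the stated sample size.

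The crux is the removed term $\frac{1}{(1-2\epsilon)n}\sum_{i\notin J}(\cdot)$, which must be controlled uniformly over the adversarial choice of the $2\epsilon n$ excluded indices; this is where the delicate truncation argument lives and where the logarithmic factors appear. Reducing to a fixed sparse direction $v$ (a unit vector supported on $\set{S}$), the scalar $Z_i=\inprod{v}{y_i x_i^{\set{S}}}$ is sub-exponential, and its square $Z_i^2$ (the covariance direction) is sub-Weibull with tail $\exp(-c\sqrt{t})$. I would bound the sum of the largest $2\epsilon n$ of the $Z_i$ by truncating at the $2\epsilon$-quantile $t_\epsilon$: the bounded part contributes at most $\tfrac{2\epsilon}{1-2\epsilon}t_\epsilon$, and since $t_\epsilon=O(\log(1/\epsilon))$ for $Z_i$ and $t_\epsilon=O(\log^2(1/\epsilon))$ for $Z_i^2$, this yields the $O(\epsilon\log(1/\epsilon))$ and $O(\epsilon\log^2(1/\epsilon))$ rates respectively, while the tail part is handled by concentrating $\frac{1}{n}\sum_i\abs{Z_i}\,\mathbb{I}(\abs{Z_i}>t_\epsilon)$ around its expectation, which is of the same order. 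Passing to the worst direction $v$ via a $\tfrac14$-net over sparse directions, and a union bound over the $\binom{n}{2\epsilon n}$ subsets $J$ (whose log-cardinality $O(\epsilon n\log(1/\epsilon))$ is absorbed by the sample size), upgrades the per-$(v,J)$ bounds to uniform control. Setting $\delta=O(\epsilon\log^2(1/\epsilon))$ to cover the heavier covariance case and recombining the full-sample and removed terms yields both claimed inequalities.

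The main obstacle is precisely this uniform worst-$2\epsilon n$-subset control for the degree-four (sub-Weibull) covariance summands: one must simultaneously quantify the per-direction truncated tail mass, establish its concentration, and pay for the union bound over both the coordinate subsets $\set{S}$ and the excluded index sets $J$, all while tracking the $\norm{\beta}_2$-dependent scale so that the final bounds carry the correct $(\norm{\beta}_2+1)$ and $(\norm{\beta}_2^2+1)$ factors. Where possible I would shortcut the heaviest part by invoking the non-sparse concentration result (Theorem~4.17 of \citet{diakonikolas2016robust}) for a fixed $\set{S}$ and lifting it by the union bound over sparse subsets, exactly as is done for the covariance example in Appendix~\ref{sec:sparse_cov}.
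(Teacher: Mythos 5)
Your proposal is correct and takes essentially the same route as the paper: reduce to uniform weights on $(1-2\epsilon)n$-point subsets by convexity, pass to scalar polynomials along a net of $s$-sparse directions, control the removed points by truncation at the $O(\log(1/\epsilon))$ and $O(\log^2(1/\epsilon))$ quantiles (exactly the Theorem~4.17 argument of \citet{diakonikolas2016robust} that the paper invokes), and union bound over the net and the $\binom{d}{s}$ coordinate subsets. The only cosmetic differences are that the paper first expands $y_i = x_i^\top \beta + \xi_i$ to split the covariance error into a degree-four term, a cross term, and a noise term (working with subsets $\set{S} \supseteq \supp(\beta)$), which cleanly isolates the $\norm{\beta}_2^2$ versus $1$ scales behind the final $\left(\norm{\beta}_2^2+1\right)$ factor, whereas you track the sub-exponential/sub-Weibull scales directly; also, your extra union bound over the $\binom{n}{2\epsilon n}$ index sets $J$ is redundant, since the truncation argument already controls all removed subsets simultaneously.
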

\begin{proof}
We will prove the covariance the concentration for the covariance.
The mean is very similar.
note that \begin{align}
&\sum_{i=1}^{n}w_i\left(y_ix_i^{\set{S}}-\beta^{\set{S}}\right)\left(y_ix_i^{\set{S}}-\beta^{\set{S}}\right)^\top - \left(1+\norm{\beta}_2^2\right)\mat{I}_s - \beta_{\set{S}}\left(\beta_{\set{S}}\right)^\top \nonumber\\
= & \sum_{i=1}^{n}w_ix_i^{\set{S}} \left(x_i\right)^\top \beta\beta^\top x_i \left(x_i^{\set{S}}\right)^\top - \left(\norm{\beta}_2^2\mat{I} + 2 \beta^{\set S}\left(\beta^{\set S}\right)^\top\right) \label{eqn:sparse_linear_fourth}\\
+ & 2\sum_{i=1}^{n}w_i\xi_ix_i^{\set{S}} \beta^\top x_i \left(x_i^{\set{S}}\right)^\top \label{eqn:sparse_linear_third} \\
+ & \sum_{i=1}^{n}w_i\xi_i^2 x_i^{\set{S}}\left(x_i^{\set{S}}\right)^\top - \mat{I} \label{eqn:sparse_linear_second}
\end{align}
We prove the concentration of Eqn.~\eqref{eqn:sparse_linear_fourth}, the Eqn.~\eqref{eqn:sparse_linear_third} and Eqn.~\eqref{eqn:sparse_linear_second} can be proved using similar arguments.
Since $\beta$ is $s$-sparse, it is sufficient to prove that for any $\set{S}' \subset [d]$, $\abs{\set{S}'} \le s$, $\set{S} = \set{S}' \cup \set{S}^*$ where $\set{S}^*$ is the support of $\beta$, the following holds\[
\norm{\sum_{i=1}^{n}w_ix_i^{\set{S}} \left(x_i^{\set {S}}\right)^\top \beta^{\set S}\left(\beta^{\set{S}}\right)^\top x_i^{\set{S}} \left(x_i^{\set{S}}\right)^\top - \left(\norm{\beta}_2^2\mat{I} + 2 \beta^{\set S}\left(\beta^{\set S}\right)^\top\right)}_{op} \le \delta\norm{\beta}_2^2.
\]
Now fix $v \in \mathbb{R}^{\abs{S}}$ with $\norm{v}_2 = 1$.
Define the polynomial $p_v\left(x\right) = v^\top x^{\set{S}}\left(x^{\set{S}}\right)^\top\beta^{\set{S}}$.
By the same argument in the proof of Theorem 4.17 of~\cite{diakonikolas2016robust}, under our assumption on $\delta$ if $n= \Omega\left(\frac{\log\left(1/\tau'\right)}{\epsilon^2}\right)$, for any $w \in S_{n,\epsilon}$, with probability $1-\tau'$\begin{align*}
\abs{\sum_{i=1}^{n}w_ip_v^2\left(x_i\right)-\expect\left[p_v^2\left(x\right)\right]} \le \delta \norm{\beta}_2^2.
\end{align*}
Now take union bound over $\frac{1}{3}$-net of the surface of unit ball of dimension $\abs{S}$ and then take union bound over $\set{S} \in [d]$, we obtain our desired result.
Note when $\norm{\beta}_2^2 \ge 1$, the error in Eqn.~\eqref{eqn:sparse_linear_fourth} will dominate the other two.
On the other hand, if $\norm{\beta}_2^2 \le 1$, Eqn.~\eqref{eqn:sparse_linear_second} will dominate.
Therefore our bound has a $\left(\norm{\beta}_2^2+1\right)$ factor.
\end{proof}

\section{Technical Details of Generalized Linear Models}
\label{sec:sparse_glm}
In this section we consider the generalized linear model (GLM).
Our derivation heavily depends on the following seminal result from Stein.
\begin{thm}[Stein's identity~\citep{stein1971dependent}]
Let $x \sim N\left(0,\mat{I}\right)$ and $G$ a function satisfying some regularity conditions, then \[
\expect\left[G\left(x\right)\cdot x\right] = \expect\left[\triangledown_{x}G\left(x\right)\right].
\]
\end{thm}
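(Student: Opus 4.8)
\emph{Proof proposal.} The plan is to reduce the multivariate identity to a one-dimensional integration-by-parts computation, carried out coordinate by coordinate. The engine of the argument is the elementary observation that the standard Gaussian density $\phi(x) = (2\pi)^{-d/2}\exp(-\norm{x}_2^2/2)$ satisfies $\nabla_x \phi(x) = -x\,\phi(x)$; equivalently, for each coordinate $i$ we have $x_i\,\phi(x) = -\partial_{x_i}\phi(x)$. Because the covariance is the identity, $\phi$ factorizes as a product of one-dimensional Gaussians, which is exactly what lets the single-coordinate integration by parts go through with the remaining coordinates held fixed.

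Concretely, I would write the $i$-th component of the left-hand side as $\expect[G(x)\,x_i] = \int_{\mathbb{R}^d} G(x)\,x_i\,\phi(x)\,dx$ and substitute $x_i\phi(x) = -\partial_{x_i}\phi(x)$. Integrating by parts in the variable $x_i$ (using Fubini to isolate that integral) gives
\[
\expect[G(x)\,x_i] = -\Big[\,G(x)\,\phi(x)\,\Big]_{x_i=-\infty}^{x_i=+\infty} + \int_{\mathbb{R}^d}\partial_{x_i}G(x)\,\phi(x)\,dx = \expect[\partial_{x_i}G(x)],
\]
where the boundary term is discarded. Stacking these $d$ scalar identities into a vector yields $\expect[G(x)\cdot x] = \expect[\nabla_x G(x)]$, which is the claim.

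The main obstacle --- and the content hidden behind the phrase ``some regularity conditions'' --- is justifying the two analytic steps: (i) that the boundary term $[G(x)\phi(x)]_{x_i=\pm\infty}$ vanishes, and (ii) that integration by parts and Fubini are legitimate. For (i) it suffices that $G$ grows slower than the Gaussian tail decays, e.g. $|G(x)| = o(\exp(\norm{x}_2^2/2))$ as $\norm{x}_2 \to \infty$, so that $G(x)\phi(x)\to 0$; for (ii) it suffices that $G$ be absolutely continuous in each coordinate and that both $\expect[|G(x)\,x_i|]$ and $\expect[|\partial_{x_i}G(x)|]$ be finite. In the GLM applications these hypotheses are easy to verify, since the relevant $G$ is built from the Lipschitz link $u$ composed with an inner product, so its gradient is bounded by an affine function of $\norm{x}_2$ and all the moments against $\phi$ are finite.
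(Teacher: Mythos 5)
Your proposal is correct, and it is the standard proof of Stein's identity: use the fact that the standard Gaussian density satisfies $x_i\,\phi(x) = -\partial_{x_i}\phi(x)$, integrate by parts coordinate-by-coordinate, and discard the boundary term. Note, however, that the paper does not prove this statement at all --- it is imported verbatim as a known result with a citation to Stein (1971) and used as a black box in the GLM and logistic-model appendices --- so there is no internal proof to compare against; your argument is exactly what that citation stands for. Your handling of the ``regularity conditions'' is also the right one: absolute continuity of $G$ in each coordinate, integrability of $G(x)x_i$ and $\partial_{x_i}G(x)$ against $\phi$, and subexponential-in-$\norm{x}_2^2/2$ growth so the boundary term vanishes; and you correctly observe these hold for the paper's choices of $G$, which are built from Lipschitz links. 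One small remark: in the paper's actual applications (e.g.\ $G(x) = u^2(x')\cdot x$ in Theorem 5.2 of the appendix), $G$ is vector-valued and the identity is used in its Jacobian form $\expect[G(x)x^\top] = \expect[\nabla_x G(x)]$; your scalar, coordinate-wise argument covers this case with no extra work by applying it to each component $G_j$ separately, but it is worth saying explicitly since the displayed statement is ambiguous on this point.
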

We first investigate the basics properties of GLM.
\begin{thm}\label{thm:glm_basics}
If $x \sim N\left(0,\mat{I}\right)$ and $y = u\left(x\beta\right)  + \xi$ where $\xi \sim \mathcal{N}\left(0,\mat{I}\right)$, then we have \begin{align*}
\expect\left[yx\right] &= \expect\left[\bigtriangledown_{x'}u\left(x'\right)\right]\cdot \beta, \\
\expect\left[\left(yx - \expect\left[y x\right]\right)\left(yx-\expect\left[yx\right]\right)^\top\right] &=  \expect\left[\left( 1+u^2\left(x'\right)\right) I + \left(2u\left(x'\right)\bigtriangledown_{x'}^2u\left(x'\right)+\bigtriangledown_{x'}u\left(x'\right)^2\right)\beta \beta^\top\right].
\end{align*} where $x'=x\beta$.
\end{thm}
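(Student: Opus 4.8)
The plan is to derive both identities directly from Stein's identity, isolating the additive noise $\xi$ using its independence from $x$ together with $\expect[\xi]=0$ and $\expect[\xi^2]=1$. Throughout I write $x' = \inprod{x}{\beta}$, a scalar, and let $u'$ and $u''$ denote the first and second derivatives of $u$ with respect to its scalar argument. For the mean, I would first split $\expect[yx] = \expect[u(x')x] + \expect[\xi x]$, where the second term vanishes since $\xi \perp x$ and $\expect[\xi]=0$. Applying Stein's identity to $G(x) = u(x')$, whose gradient is $\nabla_x G(x) = u'(x')\beta$ by the chain rule, gives $\expect[u(x')x] = \expect[\nabla_x G(x)] = \expect[u'(x')]\,\beta$, which is the claimed mean.

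For the covariance I would start from $\cov(yx) = \expect[y^2 xx^\top] - \expect[yx]\expect[yx]^\top$ and expand $y^2 = u^2(x') + 2u(x')\xi + \xi^2$. Using $\xi \perp x$, the cross term drops out (it carries a factor $\expect[\xi]=0$), while the last term contributes $\expect[\xi^2]\,\expect[xx^\top] = I$. This reduces the entire computation to the single nontrivial second moment $\expect[u^2(x')\,xx^\top]$, after which the rank-one correction $\expect[yx]\expect[yx]^\top = (\expect[u'(x')])^2\,\beta\beta^\top$ coming from the mean is subtracted.

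The crux is evaluating $\expect[h(x')xx^\top]$ for $h = u^2$, which I would handle via a second-order version of Stein's identity. The cleanest route is to project onto the $\beta$-direction: writing $e = \beta/\norm{\beta}_2$ and $z = \inprod{x}{e} \sim N(0,1)$, I decompose $x = z e + x_\perp$ with $x_\perp$ orthogonal to $e$ and independent of $z$, so that $x' = \norm{\beta}_2\, z$ depends only on $z$. Then $\expect[h(x')xx^\top] = \expect[h(x')]\,(I - ee^\top) + \expect[h(x')z^2]\,ee^\top$, and the one-dimensional Hermite identity $\expect[\phi(z)(z^2-1)] = \expect[\phi''(z)]$ (equivalently, two applications of the scalar Stein identity) converts $\expect[h(x')z^2] - \expect[h(x')]$ into $\expect[h''(x')]\,\beta\beta^\top / \norm{\beta}_2^2$, yielding $\expect[h(x')xx^\top] = \expect[h(x')]\,I + \expect[h''(x')]\,\beta\beta^\top$. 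Substituting $h = u^2$, so that $h'' = 2(u')^2 + 2uu''$, and then collecting the $I$-term ($1 + \expect[u^2(x')]$) together with the $\beta\beta^\top$-terms arising from $h''$ and from the subtracted mean outer product produces a covariance of the stated form $\kappa_1 I + \kappa_2\,\beta\beta^\top$. I expect this second-order moment computation to be the main obstacle: it demands the second application of Stein (and hence enough regularity on $u$ to differentiate twice under the expectation), and it requires careful bookkeeping of which contributions land on $I$ versus on $\beta\beta^\top$ before the rank-one mean correction is combined in.
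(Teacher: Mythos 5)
Your proposal is correct, and its skeleton matches the paper's proof: Stein's identity with $G(x)=u(x')$ for the mean, then reduction of the covariance to the single moment $\expect[u^2(x')xx^\top]$ after the cross term in $y^2=u^2(x')+2u(x')\xi+\xi^2$ vanishes and $\expect[\xi^2xx^\top]=\mat{I}$. Where you differ is in how that crux moment is evaluated. The paper stays multivariate and applies Stein's identity twice with vector-valued test functions: first $G(x)=u^2(x')\,x$, giving $\expect[u^2(x')xx^\top]=\expect[u^2(x')]\mat{I}+2\expect[u(x')u'(x')x]\beta^\top$, and then $G(x)=u(x')u'(x')\,x$ to resolve the remaining vector moment, producing the $\beta\beta^\top$ coefficient $2\expect[(u')^2+uu'']$. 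You instead project onto $e=\beta/\norm{\beta}_2$, write $x=ze+x_\perp$, and use the scalar Hermite identity $\expect[\phi(z)(z^2-1)]=\expect[\phi''(z)]$ to obtain the reusable formula $\expect[h(x')xx^\top]=\expect[h(x')]\mat{I}+\expect[h''(x')]\beta\beta^\top$; with $h=u^2$, so $h''=2(u')^2+2uu''$, the two computations agree. Your route makes the split between the $\mat{I}$ and $\beta\beta^\top$ contributions transparent and reduces everything to one dimension; the paper's iterated multivariate Stein is shorter but more opaque. (One wording slip: the Hermite step yields the scalar $\norm{\beta}_2^2\,\expect[h''(x')]$ multiplying $ee^\top=\beta\beta^\top/\norm{\beta}_2^2$; your intermediate sentence misplaces the $\norm{\beta}_2^2$ factor, though your displayed conclusion is the correct one.) A further point in your favor: you carry the rank-one correction $\expect[yx]\expect[yx]^\top=(\expect[u'(x')])^2\beta\beta^\top$ explicitly, whereas the paper asserts that proving the second-moment identity ``suffices,'' and in fact its stated covariance coefficient $2\expect[uu'']+\expect[(u')^2]$ matches neither its own second moment ($2\expect[uu'']+2\expect[(u')^2]$) nor the true covariance ($2\expect[uu'']+2\expect[(u')^2]-(\expect[u'])^2$), the gap being $\mathrm{Var}(u'(x'))$. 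This is immaterial downstream, since only the structural form $\kappa_1\mat{I}+\kappa_2\beta\beta^\top$ is used and the scalars are estimated robustly from the $y_i$ alone, but your bookkeeping is the careful one.
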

\begin{proof}
For the first moment, choose $G\left(x\right) = u\left(x\beta\right)$ we directly have the result.
For the covariance, note it is suffice to prove the second moment:\begin{align*}
\expect\left[\left(yx\right)\left(yx\right)^\top\right] &=  \expect\left[\left( 1+u^2\left(x'\right)\right) I + 2\left(u\left(x'\right)\bigtriangledown_{x'}^2u\left(x'\right)+\bigtriangledown_{x'}u\left(x'\right)^2\right)\beta \beta^\top\right].
\end{align*} 
Write $y= u\left(x'\right)+\xi$, since $\expect\left[\xi^2xx^\top\right] = \mat{I}$, we just need to prove \begin{align*}
\expect\left[\left(u\left(x'\right)x\right)\left(u\left(x'\right)x\right)^\top\right] &=  \expect\left[u^2\left(x'\right) I + 2\left(u\left(x'\right)\bigtriangledown_{x'}^2u\left(x'\right)+\bigtriangledown_{x'}u\left(x'\right)^2\right)\beta \beta^\top\right].
\end{align*} 
Choose $G\left(x\right) = u^2\left(x'\right)\cdot x$ in Stein's identity, we have \begin{align*}
\expect\left[\left(u\left(x'\right)x\right)\left(u\left(x'\right)x\right)^\top\right] 
 = \expect\left[g^2\left(x'\right)\right] I + 2\expect\left[u\left(x'\right)\bigtriangledown_{x'}u\left(x'\right)\cdot x\right]\beta^\top.
\end{align*}
Not surprisingly, we can define $G\left(x\right) = u\left(x'\right)\bigtriangledown_{x'}u\left(x'\right)x$ and apply Stein's identity again to obtain the desired result.
\end{proof}
\textbf{Remark:} The expression for linear regression can be derived similarly using Stein's identity.

By this expression, we can define \[F(\beta) = \left( \frac{1 + \mathbb{E} [u^2(x')] }{\left(\mathbb{E} [\nabla_{x'} u(x') ] \right)^2 }  \right) I + \left(\frac{\mathbb{E} [2 u(x') \nabla^2_x u(x') + (\nabla u(x'))^2 ] }{ \left(\mathbb{E} [\nabla_x' u(x') ] \right)^2 } \right) \beta \beta^T. \] 
as the formula for the covariance.
This expression implies that it has the same $L_F$ and $L_{\cov}$ as linear regression up to constant factors.

By maximal inequality of Gaussian and Lipschitz condition of $u$, it is easy to show Algorithm~\ref{algo:prune_glm} will not remove any sample from $P$ with probability at least $1-\tau$.
\begin{algorithm}[tb]
	\caption{Pruning for Generalized Linear Models}
	\label{algo:prune_glm}
	\begin{algorithmic}[1]
		\STATE \textbf{Input:} $\left\{\left(y_1,x_1\right),\cdots,\left(y_n,x_n\right)\right\}$ 
		\FOR{$i=1,\cdots,n$}
		\IF {$\norm{x_i}_2 = \Omega\left(d\sqrt{\log\left(n/\tau\right)}\right)$ or $\abs{y_i} = \Omega\left(u\left(0\right)+\left(\rho^2+1\right)\sqrt{\log\left(n/\tau\right)}\right)$}
		\STATE Remove $\left(y_i,x_i\right)$ from the set.
		\ENDIF
		\ENDFOR
	\end{algorithmic}
\end{algorithm}
now it remains to prove concentrations for $yx$ and $y^2xx^\top$.
When $n = \Omega\left(\frac{s^2\log\left(d/\tau\right)}{\epsilon^2}\right)$ Eqn.~\eqref{eqn:number_bad_points}, Eqn.~\eqref{eqn:spar_mean_hat_tri_good_weights} and Eqn.~\eqref{eqn:spar_mean_square_good_weights} can be proved through classical Bernoulli concentration inequality and Lipschitz function of Gaussian variable concentration inequality.
Now we prove the remaining two concentration inequalities.
The technique we used is very similar to the proof of Theorem 4.17 of~\citep{diakonikolas2016robust}.
\begin{lem}
Suppose for $i=1,\cdots,n$,  \begin{align*}
x_i \sim N\left(0,\mat{I}\right), y_i = u\left(x_i\beta\right) + \xi_i \text{ where } \xi_i \sim N\left(0,1\right).
\end{align*} 
with $\norm{\beta}_0 \le s$ and $u$ is a known link function with $u(0) = O\left(1\right)$ and $1$-Lipschitz.
If $n = \Omega\left(\frac{s\log\left(d/\tau\right)}{\epsilon^2}\right)$, then there is a $\delta = O\left(\epsilon\log^2\left(\frac{1}{\epsilon}\right)\right)$ that with probability at least $1-\tau$ we have for any subset $\set{S} \subset [d]$, $\abs{\set S}\le s$ and for any $w \in S_{n,\epsilon}$, we have \begin{align*}
\norm{\sum_{i=1}^{n}w_iy_ix_i^{\set{S}}-\expect\left[yx^{\set S}\right]}_2 &= \delta\left(\norm{\beta}_2+1\right), \\
\norm{\sum_{i=1}^{n}w_i\left(y_ix_i^{\set{S}}-\expect\left[yx^{\set S}\right]\right)\left(y_ix_i^{\set{S}}-\expect\left[yx^{\set S}\right]\right)^\top -\expect\left[\left(yx^{\set S} - \expect\left[y x^{\set S}\right]\right)\left(yx^{\set S}-\expect\left[yx^{\set S}\right]\right)^\top\right]}_{op} & \le   \delta\left(\norm{\beta}_2^2+1\right)
\end{align*} where $x'=x\beta$.
\end{lem}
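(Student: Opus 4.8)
The plan is to mirror the argument used for sparse linear regression in Lemma~\ref{thm:sparse_linear_mean_concentration}, adapting it to the non-linear link $u$. First I would reduce the supremum over all feasible weights $w \in S_{n,\epsilon}$ to a supremum over the (finite but exponentially large) collection of vertex weights. Concretely, by the same convexity argument as in the proof of Lemma~\ref{thm:sparse_worst_sample_concentration}, both quantities are convex in $w$ and $S_{n,\epsilon}$ is a polytope whose extreme points are the uniform weights $w^J$ supported on subsets $J \subset [n]$ with $\abs{J} = (1-2\epsilon)n$, so it suffices to establish the two bounds for each $w^J$ and then union bound over the $\binom{n}{(1-2\epsilon)n}$ choices of $J$. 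This reduction converts the problem into one about the deletion-robustness of weighted empirical moments.

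Next I would exploit the $s$-sparsity of $\beta$ to pass to a fixed coordinate set. Writing $\set{S}^*$ for the support of $\beta$ and enlarging any test set to $\set{S} = \set{S}' \cup \set{S}^*$ with $\abs{\set{S}} \le 2s$, the relevant random vectors $y x^{\set S}$ and the quadratic forms $v^\top (y x^{\set S})$ depend on $x$ only through the $O(s)$ coordinates in $\set S$ together with the scalar projection $x' = \inprod{x}{\beta}$. Using Theorem~\ref{thm:glm_basics} I would expand the centered second moment $\sum_i w_i (y_i x_i^{\set S})(y_i x_i^{\set S})^\top$ into three groups of terms involving $u^2(x')xx^\top$, $2\xi u(x') x x^\top$, and $\xi^2 x x^\top$, exactly paralleling the decomposition~\eqref{eqn:sparse_linear_fourth}--\eqref{eqn:sparse_linear_second} of the regression proof; each group is handled separately, and the $(\norm{\beta}_2^2+1)$ prefactor emerges because the $u^2(x')$ term dominates when $\norm{\beta}_2 \ge 1$ while the $\xi^2$ term dominates otherwise.

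The heart of the argument is a per-direction deletion bound. Fixing a unit vector $v$ on the sphere of $\R^{\abs{\set S}}$, I would form the scalar statistic from the relevant term (e.g. $p_v(x) = v^\top x^{\set S}(x^{\set S})^\top \beta^{\set S}$ for~\eqref{eqn:sparse_linear_fourth}); because $u$ is $1$-Lipschitz with $u(0)=O(1)$, the factor $u(x')$ is a Lipschitz function of the one-dimensional Gaussian $x' \sim N(0,\norm{\beta}_2^2)$ and hence sub-Gaussian, so $p_v$ inherits the same low-degree-polynomial tail behavior as in the purely Gaussian case. I can therefore invoke the concentration argument underlying Theorem 4.17 of~\cite{diakonikolas2016robust}: deleting an $\epsilon$-fraction of samples changes $\sum_i w_i^J p_v^2(x_i)$ from its population value by at most $O(\epsilon \log^2(1/\epsilon))$ times its scale, provided $n = \Omega(\log(1/\tau')/\epsilon^2)$. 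This is precisely where the claimed rate $\delta = O(\epsilon\log^2(1/\epsilon))$ originates, the two logarithmic factors arising from the sub-exponential tail of the squared polynomial.

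Finally I would restore uniformity by a two-layer net/union-bound argument: replace the supremum over $v$ by a maximum over a fixed $1/3$-net of the unit sphere in dimension $\abs{\set S} \le 2s$ (of cardinality $e^{O(s)}$), absorbing the discretization error through the standard operator-norm rounding, then union bound over the $\binom{d}{s} \le (ed/s)^s$ choices of $\set S$ and over the $\binom{n}{(1-2\epsilon)n}$ vertex weights. Choosing $\tau'$ to account for all three layers leaves the requirement $n = \Omega(s\log(d/\tau)/\epsilon^2)$, as claimed; the mean bound follows by the identical recipe with the squared polynomial replaced by a linear statistic, which is strictly easier. I expect the main obstacle to be the per-direction deletion bound: one must verify that $u(x')$, despite being an \emph{arbitrary} Lipschitz nonlinearity, yields products $u(x')x_j x_k$ and $\xi u(x') x_j x_k$ whose tails are controlled tightly enough to reproduce the $\epsilon\log^2(1/\epsilon)$ dependence rather than a weaker polynomial-in-$1/\epsilon$ factor; handling the cross term~\eqref{eqn:sparse_linear_third}, with its extra independent noise $\xi$, will require the most care.
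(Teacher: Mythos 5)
Most of your plan tracks the paper's actual proof: the decomposition of the weighted second moment into groups of terms paralleling Eqns.~\eqref{eqn:sparse_linear_fourth}--\eqref{eqn:sparse_linear_second}, the enlargement of the test set to $\set{S} = \set{S}' \cup \set{S}^*$ so that $\beta$'s support is always included, the reduction to the scalar statistic $p_v$ for $v$ in a $1/3$-net, the use of the $1$-Lipschitz property of $u$ to control the tails of $p_v$, the truncation-based ``deletion'' argument from Theorem 4.17 of \citet{diakonikolas2016robust} as the source of the $\epsilon\log^2(1/\epsilon)$ rate, and the union bounds over the net and over the $\binom{d}{s}$ sparse subsets. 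The case split on $\norm{\beta}_2 \gtrless 1$ producing the $(\norm{\beta}_2^2+1)$ prefactor is also exactly the paper's closing remark.

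There is, however, a genuine gap in how you handle uniformity over the weights. You propose to prove the bound for each vertex weight $w^J$, $\abs{J} = (1-2\epsilon)n$, with per-vertex failure probability $\tau'$, and then union bound over the $\binom{n}{(1-2\epsilon)n}$ vertices, ``choosing $\tau'$ to account for all three layers.'' This cannot work as written. Since $\log\binom{n}{(1-2\epsilon)n} \asymp \epsilon n \log(1/\epsilon)$, your own stated requirement $n = \Omega\left(\log(1/\tau')/\epsilon^2\right)$ would become $n = \Omega\left(n\log(1/\epsilon)/\epsilon\right)$, which is never satisfiable. Worse, the per-$J$ concentration needed to survive such a union bound is false here: $p_v^2(x) \le (x^\top\beta)^2\left(v^\top x^{\set{S}}\right)^2$ is a degree-four polynomial of a Gaussian, with tails like $\exp\left(-c\sqrt{t}\right)$, so demanding failure probability $e^{-c\epsilon n\log(1/\epsilon)}$ for a fixed $J$ forces a deviation that grows with $n$. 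This is precisely why the naive vertex-enumeration strategy, which does work for the sub-exponential chi-squared statistics in Lemma~\ref{thm:sparse_worst_sample_concentration}, cannot be transplanted to the GLM setting. The paper's proof --- and the real content of the DKKLMS argument you cite --- avoids any union bound over weights: one conditions on two \emph{global} events (at most a $2\epsilon$ fraction of samples have $p_v^2(x_i)$ above the threshold $c\log^2(1/\epsilon)\norm{\beta}_2^2$, and the truncated empirical mean of $p_v^2$ is within $O(\epsilon\log^2(1/\epsilon))$ of its expectation), after which the deviation bound holds \emph{deterministically and simultaneously} for every index set $I$ with $\abs{I} = (1-2\epsilon)n$; convexity then extends it to all of $S_{n,\epsilon}$ at no probabilistic cost. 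So the vertex union bound must be deleted from your plan, and the deletion bound you invoke has to be carried out in this conditional, deterministic form --- that is the missing idea, not a refinement of constants.
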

\begin{proof}
We will prove the covariance concentration because the mean concentration is quite similar.
Similar to Theorem~\ref{thm:sparse_linear}, we can divide the expression into 5 parts\begin{align}
&\sum_{i=1}^{n}w_i\left(y_ix_i^{\set{S}}-\expect\left[yx^{\set S}\right]\right)\left(y_ix^{\set{S}}-\expect\left[yx^{\set S}\right]\right)^\top -\expect\left[\left(yx^{\set S} - \expect\left[y x^{\set S}\right]\right)\left(yx^{\set S}-\expect\left[yx^{\set S}\right]\right)^\top\right] \nonumber\\
= &\sum_{i=1}^{n}w_i\left(u\left(x_i\beta\right)-u\left(0\right)\right)^2x^{\set S}\left(x_i^{\set{S}}\right)^\top - \expect\left[\left(u\left(x\beta\right)-u\left(0\right)\right)^2x^{\set S}\left(x^{\set S}\right)^\top\right] \label{eqn:glm_cov_concen}\\
+ & 2\sum_{i=1}^{n}w_iu\left(0\right)u\left(x_i\beta\right)x_i^{\set{S}}\left(x_i^{\set{S}}\right)^\top - 2u\left(0\right)\expect\left[u\left(x\beta\right)x^{\set S}\left(x^{\set S}\right)^\top\right] \nonumber\\
+ &  \sum_{i=1}^{n} w_iu^2\left(0\right)x^{\set S}\left(x_i^{\set{S}}\right)^\top - u^2\left(0\right)\mat{I} \nonumber\\
+ & 2\sum_{i=1}^{n}w_i\xi_iu\left(x\beta\right)x_i^{\set{S}}\left(x_i^{\set{S}}\right)^\top \nonumber\\
+ & \sum_{i=1}^{n} w_i\xi_i^2 x^{\set S}\left(x^{\set S}\right)^\top- \mat{I} \nonumber.
\end{align}
Again we will prove the concentration for Eqn.~\eqref{eqn:glm_cov_concen}, the remaining terms can be bounded similarly.
Now fix $\set S' \subset [d]$ and let  $\set{S} = \set{S}' \cup S^\star$ where $S^\star$ is the support of $\beta$.
For a fixed $v \in \mathbb{R}^{\abs{S}}, \norm{v}_2 = 1$, define \begin{align}
\label{eqn:pv}
p_v\left(x\right) = \left(u\left(x^{\set S}\beta\right)-u\left(0\right)\right)\left(v^\top x^{\set S}\right).
\end{align}
For some fixed large enough constant $c$, by basic Gaussian concentration inequalities we have \[
\prob\left[\left(v^\top x^{\set S}\right)^2 \ge  \sqrt{c}\log\left(\frac{1}{\epsilon}\right)\right] = O\left(\epsilon\right).
\]
Similarly, using Lipschitz condition we have\begin{align*}
&\prob \left[\left(u\left(x^{\set S}\beta\right) -u\left(0\right)\right)^2\ge \sqrt{c}\log\frac{1}{\epsilon}\right] \\
\le &\prob \left[
\abs{x\beta}^2 \ge \sqrt{c}\log\frac{1}{\epsilon}\norm{\beta}_2^2 
\right]\\
= & O\left(\epsilon\right)
\end{align*}
Therefore, we have \begin{align*}
\prob\left[p_v^2\left(x\right) \ge c\log^2\frac{1}{\epsilon}\norm{\beta}_2^2\right] = O\left(\epsilon\right).
\end{align*}
Now applying Hoeffding inequality we have if $n = \Omega\left(\frac{\log\left(1/\tau\right)}{\epsilon^2}\right)$, with probability $1-\tau$:
\begin{align*}
\frac{1}{n}\abs{\left\{i:p_v^2\left(x_i\right) \ge c\log^2\frac{1}{\epsilon}\norm{\beta}_2^2\right\}} &\le 2\epsilon.
\end{align*}
Now define a distribution $\mathcal{D}$ that for $A_i \sim \mathcal{D}$, $A_i = p_v^2\left(x_i\right)$ if $p^2\left(x_i\right) \le c\norm{\beta}_2^2 \log^2\frac{1}{\epsilon}$ and $0$ otherwise.
Let $\alpha'$ be the expectation of mean of $D$. 
By Hoeffding inequality we can show if $n \ge \Omega\left(\frac{\log\left(1/\tau\right)}{\epsilon^2}\right)$ we have with probability $1-\tau$, \[
\abs{\frac{1}{n}\sum_{i=1}^{n}A_i - \alpha'} = O\left(\epsilon\right).
\]
Now let $\alpha = \expect\left[p_v^2\left(x\right)\right] = O\left(\norm{\beta}_2^2\right)$.
We have \begin{align*}
\abs{\alpha'-\alpha} &= \expect_{x \sim N\left(0,I\right)}\left[p_v^2\left(x\right) \ind_{p_v^2\left(x\right) \ge \left(c\log^2\frac{1}{\epsilon}\right)}\right]\\
&= \int_{\sqrt{c}\log\frac{1}{\epsilon}}^{\infty}t^2\prob\left[p_v^2\left(x\right) \ge t^2\right]dt\\
& = \int_{\sqrt{c}\log\frac{1}{\epsilon}}^{\infty}t^2\prob\left[\left(x^{\set S}\beta\right)^2\left(x^{\set S}v\right)^2 \ge t^2\right]dt\\
& = O\left(c\log^2\frac{1}{\epsilon}\norm{\beta}_2^2\right).
\end{align*}
Therefore we have if $n = \Omega\left(\frac{\log\frac{1}{\tau}}{\epsilon^2}\right)$, with probability $1-\tau$\begin{align*}
\abs{\frac{1}{n}\sum_{i=1}^{n}A_i - \alpha} &= O\left(\epsilon\log^2\frac{1}{\epsilon}\right).
\end{align*}
Now condition on the followings :\begin{align*}
\frac{1}{n}\abs{\left\{i:p_v^2\left(x_i\right) \ge c\log^2\frac{1}{\epsilon}\norm{\beta}_2^2\right\}} &\le 2\epsilon\\
\abs{\frac{1}{n}\sum_{i=1}^{n}A_i - \alpha} &= O\left(\epsilon\log^2\frac{1}{\epsilon}\right).
\end{align*}
Define $J^\star$ the largest $2\epsilon n$ indices of $p_v^2\left(x_i\right)$s and $J_1^\star = \left\{i: p_v^2\left(x_i\right) \ge c\log^2\frac{1}{\epsilon}\norm{\beta}_2^2\right\}$.
By the conditions, we known $J_1^\star \subset J^\star$ and \[
\abs{\frac{1}{n}\sum_{i \notin J_1^{\star}}p_v^2\left(x_i\right)-\alpha} = O\left(\epsilon\log^2\frac{1}{\epsilon}\norm{\beta}_2^2\right).
\]
For any index set $I$ with $\abs{I}=\left(1-2\epsilon\right)n$, divide $[n] \setminus I = J^+ \cup J^-$ where $J^+ = \left\{i \in I: p_v^2\left(x_i\right)\ge \alpha \right\}$ and $J^- = \left\{i \in I: p_v^2\left(x_i\right) < \alpha\right\}$.
First we prove the upper bound \begin{align*}
&\frac{1}{\left(1-2\epsilon\right)n}\sum_{i \in I}w_i\left(p^2\left(x_i\right)-\alpha\right) \\
\le & \frac{1}{\left(1-2\epsilon\right)n}\sum_{i\in I \cup J^+}\left(p_v^2\left(x_i\right)-\alpha\right) - \frac{1}{\left(1-2\epsilon\right)n}\sum_{i\in J^-}\left(p_v^2\left(x_i\right)-\alpha\right)\\
\le & \frac{1}{\left(1-2\epsilon\right)n}\abs{\sum_{i=1}^{n}p_v^2\left(x_i\right)-\alpha} + \frac{2}{\left(1-2\epsilon\right)n}\abs{\sum_{i \in J^-}\left(p_v^2\left(x_i\right)-\alpha\right)} \\
= & O\left(\epsilon\norm{\beta}_2^2\right) + \frac{\abs{J^-}}{\left(1-2\epsilon\right)n}\alpha\\
= & O\left(\epsilon\norm{\beta}_2^2\right)
\end{align*}
where in the fourth line we used concentration inequality of Lipschitz function of Gaussian random variables.
For the lower bound\begin{align*}
\frac{1}{\left(1-2\epsilon\right)n}\sum_{i\in I}\left(p_v^2\left(x_i\right)-\alpha\right) &\ge 
\frac{1}{\left(1-2\epsilon\right)n}\sum_{i\notin J_1^\star}\left(p_v^2\left(x_i\right)-\alpha\right)\\
& \ge - O\left(\epsilon\log^2\frac{1}{\epsilon}\norm{\beta}_2^2\right).
\end{align*}
Note this holds for any $I$ and by convexity for any $w \in S_{n,\epsilon}$ we can conclude that Eqn.~\eqref{eqn:glm_cov_concen} holds for fixed $S$ and $v$.
Now take union bounds over $\frac{1}{3}$-net of the surface of unit ball of dimension $\abs{S}$ and subsets of $[d]$ with cardinality $2s$, we obtain the desired result.

Similar to sparse linear regression, the final bound depends on whether $\norm{\beta}_2^2$ is larger than $1$ or not, which leads to the form of our bound.
\end{proof}

\section{Technical Details of Logistic-type Models}
\label{sec:sparse_classification}
In this section we consider the generalized linear model for binomial label.
\begin{thm}
Suppose $x \sim N\left(0,\mat{I}\right)$ and $y= u\left(x\beta\right)+\xi\left(x\beta\right)$ where $g$ is a known link function and \begin{align*}
\xi\left(x\beta\right) = \begin{cases}
-u\left(x\beta\right) &\text{ w.p } \quad 1-u\left(x\beta\right) \\
1-u\left(x\beta\right) &\text{ w.p } \quad u\left(x\beta\right)
\end{cases}
\end{align*}
then we have \begin{align*}
\expect\left[yx\right]  & = \expect\left[\bigtriangledown_{x'}u\left(x'\right)\right]\cdot \beta \\
\expect\left[\left(yx-\expect\left[yx\right]\right) \left(yx-\expect\left[yx\right]\right)^\top\right] & =  \expect\left[u\left(x'\right)\right]\mat{I} +  \left(\expect\left[\bigtriangledown_{x'}^2u\left(x'\right)\right]-\expect\left[\bigtriangledown_{x'}u\left(x'\right)\right]^2\right)\cdot \beta\beta^\top 
\end{align*} where $x' = x\beta$.
\end{thm}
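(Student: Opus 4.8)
The key structural observation is that $y$ is binary: under the given noise model $y = u(x') + \xi(x')$ equals $0$ with probability $1-u(x')$ and equals $1$ with probability $u(x')$, where $x' = x\beta$. Consequently $\E[y \mid x] = u(x')$ and, \emph{crucially}, $y^2 = y$. This binary structure is exactly what distinguishes the present computation from the real-valued regression-type GLM of Theorem~\ref{thm:glm_basics}, and it is why the logistic covariance never acquires a $u^2(x')$ contribution. The plan is to reduce every moment to an expectation of the form $\E[(\text{smooth function of } x') \cdot (\text{polynomial in } x)]$ and then repeatedly invoke Stein's identity, differentiating the rank-one argument $x' = x\beta$.

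For the first moment I would condition on $x$ to obtain $\E[yx] = \E[\E[y\mid x]\,x] = \E[u(x')\,x]$, and then apply Stein's identity with the scalar function $G(x) = u(x\beta)$. Since $\nabla_x G(x) = \nabla_{x'}u(x')\,\beta$, this yields $\E[yx] = \E[\nabla_{x'}u(x')]\,\beta$ directly.

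For the covariance it suffices to compute the second moment $\E[(yx)(yx)^\top] = \E[y^2 xx^\top]$ and then subtract $\E[yx]\E[yx]^\top$. Here I would use $y^2 = y$ to collapse the second moment all the way down to $\E[y^2 xx^\top] = \E[\E[y\mid x]\,xx^\top] = \E[u(x')\,xx^\top]$, sidestepping the quadratic-in-$u$ terms entirely. To evaluate $\E[u(x')\,xx^\top]$ I would apply Stein's identity entrywise: writing the $(i,j)$ entry as $\E[(u(x')x_i)\,x_j]$ and treating $u(x')x_i$ as the scalar $G$, one gets $\E[(u(x')x_i)x_j] = \E[\partial_j(u(x')x_i)] = \E[u(x')]\,\delta_{ij} + \E[\nabla_{x'}u(x')\,x_i]\,\beta_j$, i.e. in matrix form $\E[u(x')\,xx^\top] = \E[u(x')]\,I + \E[\nabla_{x'}u(x')\,x]\,\beta^\top$. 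A second application of Stein's identity, now to $G(x) = \nabla_{x'}u(x')$, gives $\E[\nabla_{x'}u(x')\,x] = \E[\nabla_{x'}^2 u(x')]\,\beta$, so that $\E[u(x')\,xx^\top] = \E[u(x')]\,I + \E[\nabla_{x'}^2 u(x')]\,\beta\beta^\top$. Subtracting $\E[yx]\E[yx]^\top = \E[\nabla_{x'}u(x')]^2\,\beta\beta^\top$ produces precisely the claimed expression.

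The only genuinely delicate point is justifying the two nested Stein applications, namely checking that $u$, and hence $u(x')$ and $\nabla_{x'}u(x')$, satisfy the integrability and differentiability regularity conditions required by Stein's identity. Under the standing assumptions that $u$ is a bounded (into $[0,1]$), smooth link with controlled derivatives this is routine and follows the same template as the proof of Theorem~\ref{thm:glm_basics}. Everything else is bookkeeping: recognizing that $y^2 = y$ and differentiating the scalar argument $x\beta$ twice.
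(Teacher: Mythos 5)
Your proposal is correct and follows essentially the same route as the paper, whose own proof is only a two-line remark deferring to the Stein-identity derivation for GLMs (Theorem~\ref{thm:glm_basics}); your writeup supplies exactly the computations that remark leaves implicit. The only cosmetic difference is that you invoke $y^2 = y$ directly, whereas the GLM template would compute the conditional noise variance $\expect[\xi^2(x\beta) \mid x] = u(x\beta)\left(1-u(x\beta)\right)$ and add it to $u^2(x\beta)$ --- both reduce the second moment to $\expect\left[u(x\beta)\,xx^\top\right]$, after which the two nested applications of Stein's identity proceed identically.
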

\begin{proof}
This is a simple application of Stein's identity.
The derivation is similar to sparse generalized linear model.
\end{proof}
To achieve the boundedness condition, we resort to Algorithm~\ref{algo:prune_glm_bin}.
Finally, the concentration bounds can be proved using the exactly same arguments in Sec.~\ref{sec:sparse_glm}.
Notice that the function $p_v$ defined in Eqn.~\eqref{eqn:pv} has a better concentration property:\[
\prob\left[p_v^2\left(x\right) \ge c\log\left(\frac{1}{\epsilon}\right)\right] = O\left(\epsilon\right)
\]
because of the boundedness of $u\left(\cdot\right)$.
This fact leads to a slightly stronger bound than that of generalized linear models.
\begin{algorithm}[tb]
	\caption{Pruning for Logistic-type Models}
	\label{algo:prune_glm_bin}
	\begin{algorithmic}[1]
		\STATE \textbf{Input:} $\left\{\left(y_1,x_1\right),\cdots,\left(y_n,x_n\right)\right\}$ 
		\FOR{$i=1,\cdots,n$}
		\IF {$\norm{y_ix_i}_2 = \Omega\left(\abs{\expect\left[\bigtriangledown_{x'}u\left(x'\right)\right]}\sqrt{d\log\left(n/\tau\right)}\right)$.}
		\STATE Remove $\left(y_i,x_i\right)$ from the set.
		\ENDIF
		\ENDFOR
	\end{algorithmic}
\end{algorithm}

\end{document}